\numberwithin{equation}{section}
\theoremstyle{plain} % italic
\newtheorem{theorem}{Theorem}[section]
\newtheorem{lemma}[theorem]{Lemma}
\newtheorem{proposition}[theorem]{Proposition}
\newtheorem{corollary}[theorem]{Corollary}
\theoremstyle{definition} % upright text
\newtheorem{definition}[theorem]{Definition}
\newtheorem{remark}[theorem]{Remark}
\newtheorem{problem}[theorem]{Problem}
\newcommand{\SP}{\operatorname{SP}}
\newcommand{\lc}[1]{{\color{orange} #1}}
\newcommand{\ph}[1]{%
  \ifbool{showpink}{\textcolor{MidnightBlue}{#1}}{#1}%
}
\newcommand{\ratio}[4]{%
  \frac{e^{-\lambda (#1|#3)_{#4}}}{e^{-\lambda (#1|#2)_{#4}} + e^{-\lambda (#2|#3)_{#4}}}%
}
\newcommand{\E}{\mathbb{E}}
\newcommand{\R}{\mathbb{R}}
\newcommand{\Proba}{\mathbb{P}}
\newcommand{\N}{\mathbb{N}}
\newcommand{\abs}[1]{\left|#1\right|}
\title{Bridging Arbitrary and Tree Metrics via Differentiable Gromov Hyperbolicity}
\author{%
Pierre Houedry \\
Université Bretagne Sud\\
IRISA, UMR 6074, CNRS\\
\texttt{pierre.houedry@irisa.fr} \\
% examples of more authors
\And
Nicolas Courty \\
Université Bretagne Sud\\
IRISA, UMR 6074, CNRS\\
\texttt{courty@univ-ubs.fr} \\
\AND
Florestan Martin-Baillon \\
Université de Rennes \\
IRMAR, UMR 6625, CNRS \\
\texttt{florestan.martin-baillon@univ-rennes.fr} \\
\And
 Laetitia Chapel \\
 L'Institut Agro Rennes-Angers\\
 IRISA, UMR 6074, CNRS \\
\texttt{laetitia.chapel@irisa.fr} \\
\And
Titouan Vayer \\
INRIA, ENS de Lyon, CNRS, Université Claude Bernard Lyon 1 \\
LIP, UMR 5668 \\
\texttt{titouan.vayer@inria.fr} \\
}
\begin{document}
\newcommand{\ourmethod}{\textsc{DeltaZero}}

\maketitle

\begin{abstract}
%Tree metrics provide a powerful framework for representing hierarchical and combinatorial structures in data. While Gromov’s $\delta$-hyperbolicity offers a way to quantify the deviation of a metric space from a tree, existing algorithms for computing tree embeddings either lack differentiability or suffer from unbounded distortion. In this work, we introduce a novel differentiable optimization framework that approximates arbitrary finite metrics by tree-like structures. Our method leverages a smooth surrogate for Gromov’s $\delta$-hyperbolicity and a differentiable approximation of the shortest-path metric, enabling end-to-end gradient-based optimization. While our approach does not enforce exact tree metrics, since achieving zero hyperbolicity is generally intractable, the resulting space can be effectively embedded into a tree using classical methods. Experiments on real-world datasets demonstrate that our method achieves competitive distortion, bridging the gap between geometric approximation and optimization in hierarchical repres&"&"on
Trees and the associated shortest-path tree metrics provide a powerful framework for representing hierarchical and combinatorial structures in data. Given an arbitrary metric space, its deviation from a tree metric can be quantified by Gromov’s $\delta$-hyperbolicity. Nonetheless, designing algorithms that bridge an arbitrary metric to its closest tree metric is still a vivid subject of interest, as most common approaches are either heuristical and lack guarantees, or perform moderately well. In this work, we introduce a novel differentiable optimization framework, coined \ourmethod, that solves this problem. Our method leverages a smooth surrogate for Gromov’s $\delta$-hyperbolicity which enables a gradient-based optimization, with a tractable complexity. The corresponding optimization procedure is derived from a problem with better worst case guarantees than existing bounds, and is justified statistically. Experiments on synthetic and real-world datasets demonstrate that our method consistently achieves state-of-the-art distortion. 
\end{abstract}

\section{Introduction}

The notion of $\delta$-hyperbolic metric spaces, introduced by \cite{Gromov1987}, captures a large-scale analog of negative curvature and has found significant applications in both mathematics and computer science.
A prominent example in machine learning is the success of representation learning in hyperbolic spaces~\cite{chami2019hyperbolic}, which leverages their natural ability to learn on data with hierarchical structure. A key measure of characterizing how well a hyperbolic space fits a given dataset is the Gromov hyperbolicity, which quantifies the extent to which a metric space \((X, d_X)\) deviates from a tree metric, with tree structures achieving a value of zero. As such, it has often been used as a proxy for graph dataset suitability to hyperbolic space embedding ({\em e.g.}~\cite{chen2022fully, tifreapoincare}), which has also been recently analyzed in~\cite{katsman2025shedding}.

Metrics that can be exactly realized as shortest-path distances in a tree play a central role in many domains: hierarchical clustering in machine learning~\cite{murtagh2012algorithms}, phylogenetic tree construction in bioinformatics~\cite{felsenstein2004inferring}, and modeling real-world networks~\cite{AbuAta16,ravasz2003hierarchical}, to cite a few. When data is not exactly tree-structured, a natural problem arises: how to approximate an arbitrary metric as faithfully as possible with a tree metric?

Formally, a metric \(d_X\) on a finite set \(X\) is called a \emph{tree metric} if there exists a tree metric \((T, d_T)\) and an embedding \(\Phi : X \to T\) such that
\begin{equation}
    d_X(x, y) = d_T(\Phi(x), \Phi(y)) \quad \text{for all } x, y \in X.
    \label{eq:pb}
\end{equation}
That is, pairwise distances in \(X\) can be exactly represented as shortest-path distances in a tree. When such a representation is not possible, one seeks a tree metric that approximates \(d_X\) as closely as possible under some distortion measure.

This gives rise to the following problem:

\begin{problem}[Minimum Distortion Tree Metric Approximation]\label{TMA}
Given a finite metric space \((X, d_X)\), our goal is to construct an embedding \(\Phi : X \to T\) into a metric tree \((T, d_T)\) such that the tree metric \(d_T\) closely approximates the original metric \(d_X\). We aim to solve
\[
\underset{(T,d_T) \in \mathcal{T}}{\mathrm{argmin}} \, \|d_X - d_T\|_{\infty} := \underset{(T,d_T) \in \mathcal{T}}{\mathrm{argmin}} \max_{x,y \in X} |d_X(x,y)-d_T(\phi(x),\phi(y))|,
\]
where \(\mathcal{T}\) denotes the set of finite tree metrics.
\end{problem}

\paragraph{Tree metrics embedding methods.}
In response, several methods have been developed to embed data into tree-like metrics with minimal distortion, e.g.~\cite{chepoi, nj, treerep}. Among them, the Neighbor Joining algorithm (\textsc{NJ}) \cite{nj} has long been a standard algorithm for constructing tree metrics from distance data, which proceeds by iteratively joining neighboring nodes. An alternative, the \textsc{TreeRep} algorithm \cite{treerep}, offers improved scalability. Despite their algorithmic efficiency and empirical success, both methods lack theoretical guarantees regarding the distortion incurred during the embedding process. This limitation affects their reliability in applications where preserving metric fidelity is critical. In contrast, based on a theoretical characterization of the distortion, the \textsc{Gromov} algorithm~\cite{Gromov1987}, described in more details later in the paper, constructs a tree metric by iteratively selecting pairs of points. It serves as a key component of our approach. Restricted to unweighted graph, the \textsc{LayeringTree} method~\cite{chepoi} first chooses a root node, then join iteratively neighbors with a bounded additive distortion. The more recent \textsc{HCCRootedTreeFit} algorithm \cite{yimanna} starts by centering the distance matrix around a chosen root, then fits an ultrametric distance to the centered distance, minimizing an average $\ell_1$-distortion. However, its accuracy is generally lower than that of the methods presented above and remains sensitive to the choice of root. Those various approaches are illustrated in Figure~\ref{fig:intro}, along with our new method described subsequently. Additional illustrations on toy graphs are also provided in Appendix~\ref{sec:comp} with details on how the figure is generated. \ph{Related works in hyperbolic embeddings (e.g., \cite{Nickel17, pmlr-v80-sala18a}) also aim to capture hierarchical structure, albeit from a different modeling perspective, generally based on contrastive losses that preserve relationships. In contrast, we operate directly on pairwise distances and focus on learning a surrogate metric that can be explicitly embedded into a tree with worst-case distortion guarantees. This metric-centric view differs from position/relation-based hyperbolic embeddings, but we see them as highly complementary.}

%rather than worst-case errors and allows to bound theoretically the $\ell_1$ error between the original metric and the produced tree metric by an average hyperbolicity measure. 

%In the context of graphs, Chepoi et al. \cite{chepoi} provide a method \textsc{LayeringTree} to construct, given an unweighted graph $G=(\mathcal{V},\mathcal{E})$, a distance-approximating tree with additive distortion $O(\delta \log |\mathcal{V}|)$ in $O(|\mathcal{E}|)$ time. The tree obtained by this a method is a distance-approximating tree that preserves the vertex set of the original graph $G$, in contrast to the trees arising in Gromov’s construction, which may include intermediate points.

%, some of which offer theoretical guarantees on the approximation error~~\cite{Gromov1987, chowdhuryImprovedErrorBounds2016a}). For example, the method in~\cite{yimanna} provides a theoretical bound on the $\ell_1$ approximation error using an ultrametric-based criterion, though its empirical performance is often limited compared to other approaches.

\paragraph{Contributions.}
In this paper, we propose \ourmethod, a new method for solving the minimum distortion tree metric approximation problem. It formulates the problem as a constrained optimization task, where the objective is to minimize the distortion between the input metric and a surrogate metric space \((X^\prime, d_{X^\prime})\), while enforcing that the resulting space is close to a true tree metric, as quantified by a low Gromov hyperbolicity. This surrogate is eventually embedded into an actual tree \((T, d_{T})\). One additional key feature of \ourmethod~is its differentiable formulation: it leverages smooth approximations of both the Gromov hyperbolicity and the distortion to enable gradient-based optimization. Computing the exact Gromov hyperbolicity for a dataset of even moderate size is computationally demanding, both in terms of time and memory consumption. The best known theoretical algorithm operates in $O(n^{3.69})$ time~\cite{FOURNIER2015576}, which is impractical for large graphs. Although faster empirical approximations have been proposed, such as~\cite{coudert2}, scalability remains a bottleneck. To address this, we propose an alternative strategy based on batch-wise sampling. Rather than analyzing the entire dataset at once, \ourmethod~estimates hyperbolicity and distortion on subsets of the distance matrix, which significantly reduces computational overhead while preserving accuracy. We demonstrate that this mini-batching approach enables scalable Gromov hyperbolicity approximation without sacrificing fidelity to the exact value. 

The rest of the paper is organized as follows.  Section~\ref{sec:background} introduces the notion of  $\delta$- and Gromov hyperbolicity and gives intuition on its geometric interpretation. In Section~\ref{method}, we formulate our optimization problem and show how it improves the worst-case distortion of Gromov. We then provide a differentiable and efficient algorithm to solve the problem. In the experimental Section~\ref{sec:xp}, we first evaluate \ourmethod~ in a controlled setting to assess its ability to provide hierarchical clusters. We then measure its ability to generate low distortion tree metric approximations in two contexts, where unweighted and weighted graphs are at stake. Finally, we draw some conclusions and perspectives.

\begin{figure}[!tb]
    \centering
    \includegraphics[width=\textwidth]{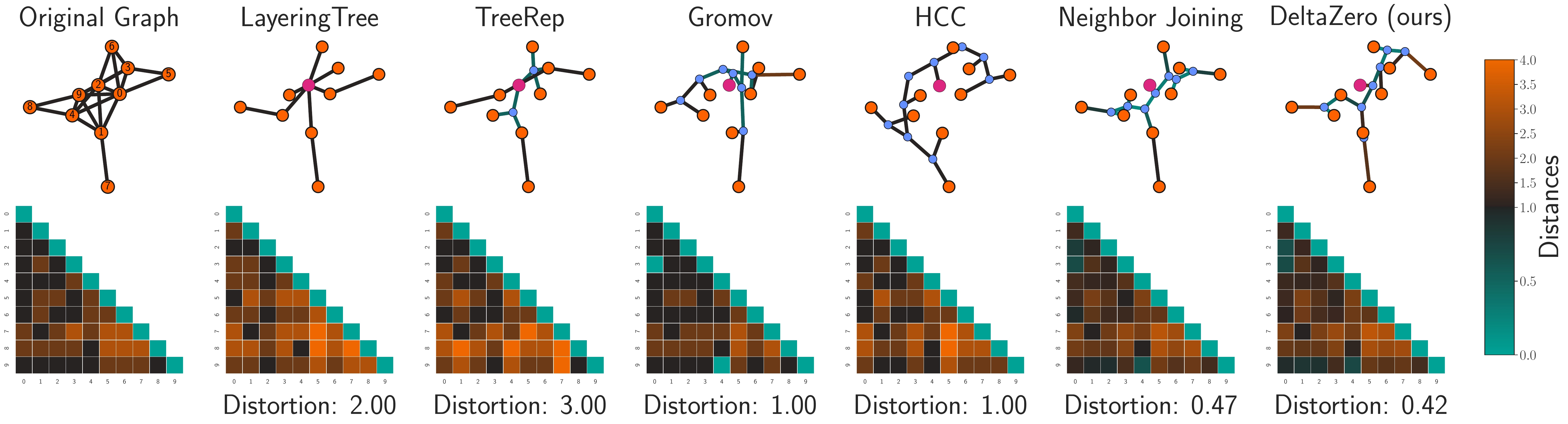}
    \caption{Illustration of the tree metric embedding problem (best viewed with colors). Given an original graph (first column) and the corresponding shortest path distances between nodes (represented on the bottom row as a lower-triangular matrix), we aim at finding a tree and the corresponding tree metric that best approximates the original distances. Competing state-of-the-art methods and our method \ourmethod~results are presented along with the corresponding distortion.}
    \label{fig:intro}
\end{figure}

%Gromov delta is a measure of tree likeness
%We approximate delta by a differentiable function
%we optimize then embed (sketch)

\section{Background on $\delta$-hyperbolicity}\label{sec:background}

At the heart of our approach lies the concept of $\delta$-hyperbolicity. In this section, we present the foundational principles of this notion and offer intuitive insights into its geometric interpretation. We also discuss relevant computational aspects.

\subsection{From Gromov product to $\delta$-hyperbolicity}

A key concept used to define $\delta$-hyperbolicity is the \emph{Gromov product}, denoted $(x|y)_w$, which intuitively measures the overlap between geodesic paths from a base point $w$ to the points $x$ and $y$. It is defined as follows.

\begin{definition}[Gromov Product]
Let \((X, d_X)\) be a metric space and let \(x, y, w \in X\). The \emph{Gromov product} of \(x\) and \(y\) with respect to the basepoint \(w\) is defined as
\[
(x|y)_w = \frac{1}{2} \left( d_X(x, w) + d_X(y, w) - d_X(x, y) \right).
\]
\end{definition}
With the Gromov product in hand, we now define $\delta$-hyperbolicity.
\begin{definition}[$\delta$-hyperbolicity and Gromov hyperbolicity]
A metric space \((X, d_X)\) is said to be \emph{$\delta$-hyperbolic} if there exists \(\delta \geq 0\) such that for all \(x, y, z, w \in X\), the Gromov product satisfies
\begin{equation*}
    (x|z)_w \geq \min \left\{ (x| y)_w, (y| z)_w \right\} - \delta.
    \label{eq:GromovDelta}
\end{equation*}
The \emph{Gromov hyperbolicity}, denoted by $\delta_X$, is the smallest value of $\delta$ that satisfies the above property. Consequently, every finite metric space $(X,d_X)$ has a Gromov hyperbolicity equal to
\begin{equation}
\label{deltadms}
\delta_X = \max_{x,y,z,w \in X} \left( \min \left\{ (x| y)_w, (y| z)_w \right\} - (x| z)_w \right).
\end{equation}
\end{definition}
The concept of $\delta$-hyperbolicity may initially appear abstract, but it has deep and elegant connections to tree metrics. In fact, a metric \( d_X\) is a tree metric if and only if, for every four points \( x, y, z, w \in X \), two largest among the following three sums
\[
d_X(x,y) + d_X(z,w), \quad d_X(x,z) + d_X(y,w), \quad d_X(x,w) + d_X(y,z)
\]
are equal \cite{buneman}.

\begin{wrapfigure}{r}{0.2\textwidth}
    \centering
    \scalebox{0.65}{
\begin{tikzpicture}[
    inner/.style={orange,line width = 1pt,},
    outer/.style={double distance=20mm,
                  line cap=round,
                  gray,
                  opacity=0.2,
                  line width=.4pt},
    > = {Stealth},
    arr/.style={<->,black,dashed,line width=.6pt},
 ]
    % ~~~ the 3 points ~~~~~~~~~~~~~
    \coordinate (A) at (0,0);
    \coordinate (B) at (3,-.5);
    \coordinate (C) at (1,2.5);
    % ~~~ lines ~~~~~~~~~~~~~
    \draw[outer] (A) to[out= 10,in=160] (B);
    \draw[outer] (A) to[out= 50,in=260] (C);
    \draw[outer] (B) to[out=130,in=260] (C);
    
    % - - remember some points - - - - -
    \draw[inner] (A) to[out= 10,in=160] coordinate [pos=.55] (A1) (B);
    \draw[inner] (A) to[out= 50,in=260] coordinate [pos=.30] (C1) (C);
    \draw[inner] (B) to[out=130,in=260] coordinate [pos=.40] (B1) (C);
    
    % ~~~ the deltas ~~~~~~~~~~~~~~~
    \draw[arr] (A1) -- node[shift=(-10:.2)] {$\delta$} +(260:1.05);
    \draw[arr] (B1) -- node[shift=(-45:.2)] {$\delta$} +( 45:1.05);
    \draw[arr] (C1) -- node[shift=(230:.25)] {$\delta$} +(150:1.05);
     % ~~~ naming the points in full black ~~~~~~~~~~
    \node[below left, black] at (A) {$\textcolor{black}{x}$};
    \node[below right, black] at (B) {$y$};
    \node[above, black] at (C) {$z$};
 \end{tikzpicture}
 }
      \caption{$\delta$-slim triangle.}
     \label{fig:triangle_delta}
\end{wrapfigure}
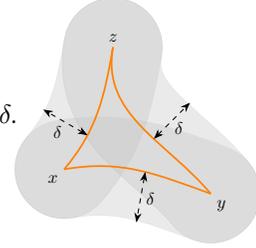

The $\delta$-hyperbolicity generalizes this condition by allowing a bounded deviation. Specifically, a metric space \((X, d_X)\) is \(\delta\)-hyperbolic if, for all \(x, y, z, w \in X\), the following inequality holds
\begin{equation*}\label{4points}
d_X(x, y) + d_X(z, w) \leq \max \left\{ d_X(x, z) + d_X(y, w), d_X(x, w) + d_X(y, z) \right\} + 2\delta.
\end{equation*}
This is known as the \emph{four-point condition}. It offers an alternative characterization of $\delta$-hyperbolicity (see~\cite[Ch.~2,\S1]{ghysGroupesHyperboliquesDapres1990}), providing a quantitative measure of how far a given metric deviates from being a tree metric, and thereby capturing an intrinsic notion of negative curvature in the space.
% \begin{definition}[Four-Point Condition]\label{4points}
% A metric space \((X, d)\) is said to be \emph{$\delta$-hyperbolic} if there exists \(\delta_X \geq 0\) such that for all \(x, y, z, w \in X\), the following inequality holds:
% \[
% d(x, y) + d(z, w) \leq \max \left\{ d(x, z) + d(y, w), \; d(x, w) + d(y, z) \right\} + 2\delta.
% \]
% \end{definition}
% This condition is equivalent to the Gromov product definition of hyperbolicity (\cite[Chapitre 2, $ \S$1]{ghysGroupesHyperboliquesDapres1990}).
An alternative and more geometrically intuitive way to understand \(\delta\)-hyperbolicity is through the concept 
of \emph{\(\delta\)-slim geodesic triangles}. In this perspective, a geodesic metric space \((X, d_X)\) is \(\delta\)-hyperbolic if and only if every geodesic triangle in \(X\) is \(\delta\)-slim (up to a multiplicative constant on $\delta$, see \cite[Ch.2, \S3, Proposition~21]{ghysGroupesHyperboliquesDapres1990}), meaning that each side of the triangle is contained within the \(\delta\)-neighbourhood of the union of the two other sides (see Figure \ref{fig:triangle_delta}).

This captures the idea that geodesic triangles in hyperbolic spaces are ``thin'', more closely resembling tripods than the broad, wide-angled triangles characteristic of Euclidean geometry. In this sense, Gromov hyperbolicity quantifies the extent to which the space deviates from tree-like behaviour: the smaller the value of \(\delta\), the more the geodesic triangles resemble those in a tree, where all three sides reduce to a union of two overlapping segments. Thus, the \(\delta\)-slim triangle condition offers a compelling geometric counterpart to the more algebraic formulations via the Gromov product or the four-point condition.

\paragraph{Computing the Gromov hyperbolicity.} Computing the Gromov hyperbolicity of a discrete metric space is computationally demanding, as it requires examining all quadruples of points to evaluate the four-point condition. The naive brute-force approach runs in \( O(n^4) \) time for a space with \( n \) points, making it impractical for large-scale applications.
Then, several works have addressed the computational bottlenecks inherent in computing Gromov hyperbolicity. Notably, Fournier et al.~\cite{FOURNIER2015576} show that the computation of hyperbolicity from a fixed base-point can be reduced to a $(\max,\min)$ matrix product.  Leveraging the fast algorithm for this class of matrix products, % introduced by Duan and Pettie~\cite{duanpettie}, the base-point hyperbolicity can thus be computed in $O(n^{2.69})$ time, 
it leads to an overall $O(n^{3.69})$ time complexity.% when computing the overall hyperbolicity.

In graph settings, the notion of \emph{far-apart} vertex pairs plays a key role in accelerating hyperbolicity computation. This concept underpins a structural result~\cite{coudert2} stating that certain far-apart pairs suffice to witness the maximum in the definition of Gromov hyperbolicity. Consequently, one can avoid exhaustive examination of all quadruples and focus on a carefully selected subset; this leads to a pruning approach that significantly reduces computational cost \cite{farapart}.

Contrary to these approaches, which aim to accelerate the exact computation of Gromov hyperbolicity, we take a different route: we introduce a smooth, differentiable surrogate of the hyperbolicity function. This relaxation enables gradient-based optimization, and we further propose a batched approximation scheme to make the computation tractable and independent of the size of the graph.

% \tv{
% \begin{remark}
% In this work, we focus on discrete metric spaces, characterized by distance matrices $\mathbf{D}$
% between points (see Section \ref{sec:opti}). In this context, the $\delta$-hyperbolicity depends solely on the distance matrix. Therefore, when the context is clear, we also use the notation $\delta_\mathbf{D}$.
% \end{remark}
% }

\subsection{Embedding of a $\delta$-hyperbolic space into a tree}

The $\delta$-hyperbolicity can play a significant theoretical role in establishing guarantees for embedding arbitrary metric spaces into trees. Indeed, a result by Gromov~\cite{Gromov1987} shows that any \(\delta\)-hyperbolic metric space on \(n\) points admits a tree-metric approximation with additive distortion \(O(\delta \log n)\), and that this embedding can be computed in \(O(n^2)\) time. 
\begin{theorem}[{\cite[Ch.2, \S2, Theorem~12]{ghysGroupesHyperboliquesDapres1990}}]\label{Gromov}
Let \((X, d_X)\) be a finite \(\delta\)-hyperbolic metric space over $n$ points.
For every $w \in X$, there exists a finite metric tree \((T, d_T)\), and a map $\Phi : X \longrightarrow T$ such that
  \begin{enumerate}
    \item The distance to the basepoint is preserved: $d_T(\Phi(x), \Phi(w)) = d_X(x, w) \quad \text{for all } x \in X,$
    \item $d_X(x, y) - 2\delta \log_2(n-2) \leq d_T(\Phi(x), \Phi(y)) \leq d_X(x, y) \quad \text{for all } x, y \in X.$
  \end{enumerate}
\end{theorem}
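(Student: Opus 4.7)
I would follow the classical construction that builds the tree metric directly from a ``chain-relaxed'' Gromov product at the basepoint $w$. Define
\[
(x|y)_w^\star \;=\; \max_{x=x_0,\dots,x_k=y}\ \min_{0\leq i<k} (x_i|x_{i+1})_w,
\]
where the max ranges over all finite chains in $X$ joining $x$ to $y$, and set
\[
d_T(x,y) \;:=\; d_X(x,w)+d_X(y,w) - 2(x|y)_w^\star.
\]
The map $\Phi$ is then obtained by realizing this $d_T$ as a shortest-path metric on a genuine finite tree.

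\textbf{Key steps.} Concatenating chains shows that $(\cdot|\cdot)_w^\star$ satisfies the ultrametric-type inequality $(x|z)_w^\star \geq \min\{(x|y)_w^\star,(y|z)_w^\star\}$. Translating this back through the definition of $d_T$ yields the four-point condition with $\delta = 0$, and by the Buneman characterization recalled in the excerpt this guarantees that $d_T$ is a bona fide tree metric; hence a tree $(T,d_T)$ and embedding $\Phi$ exist. Property~(i) is then immediate from $(x|w)_w^\star = (x|w)_w = 0$. The upper bound $d_T(\Phi(x),\Phi(y)) \leq d_X(x,y)$ in property~(ii) follows from taking the trivial two-point chain $x,y$, which already yields $(x|y)_w^\star \geq (x|y)_w$.

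\textbf{The lower bound and main obstacle.} The remaining lower bound amounts to showing $(x|y)_w^\star \leq (x|y)_w + \delta\log_2(n-2)$, and this is the crux of the proof. I would establish a dyadic iteration lemma: for any chain $x = x_0,\dots,x_k = y$,
\[
(x_0|x_k)_w \;\geq\; \min_{0\leq i<k}(x_i|x_{i+1})_w \,-\, \delta\,\lceil\log_2 k\rceil,
\]
obtained by recursively splitting the chain at its middle vertex and applying the basic hyperbolicity inequality $(a|c)_w \geq \min\{(a|b)_w,(b|c)_w\} - \delta$ at each of the $\lceil\log_2 k\rceil$ recursion levels. One then observes that in the max defining $(x|y)_w^\star$ it is enough to consider \emph{simple} chains: a detour through any already visited vertex can only leave the running $\min$ unchanged or decrease it. With $x,y$ as endpoints and at most $n-2$ admissible interior vertices, this caps $k$ and produces the claimed $\log_2(n-2)$ factor.

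\textbf{Expected difficulty.} The technical heart of the argument is engineering the \emph{logarithmic} (rather than linear) dependence on the chain length in the iteration lemma: a naive step-by-step application of the four-point condition would propagate $\delta$ error linearly in $k$, yielding a useless $O(\delta n)$ bound. The dyadic halving trick is what turns this into $O(\delta \log n)$, and making the book-keeping sharp enough to recover the precise $\log_2(n-2)$ constant—in particular confirming that simple chains suffice in the supremum—is the step I expect to require the most care.
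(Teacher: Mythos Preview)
Your proposal is correct and follows essentially the same route as the reference the paper cites (Ghys--de la Harpe) and the construction the paper itself sketches in Appendix~\ref{gromovembed}: the tree Gromov product is defined there exactly as your chain-relaxed quantity, $(x|y)'_w = \max_{\bar y \in P_{x,y}} \min_k (y_k|y_{k+1})_w$, and the tree metric is recovered from it via the identity $d_T = (x|x)_w + (y|y)_w - 2(x|y)'_w$. Your dyadic halving lemma and the reduction to simple chains (excluding $w$, since a chain through $w$ has $\min = 0$) are precisely the ingredients that deliver the $\log_2(n-2)$ constant.
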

The approximation of distances provided in the previous theorem is particularly notable due to the non-expansiveness of the mapping, meaning that the embedded distances in the tree never exceed the original graph distances. This ensures that the distortion introduced is purely additive and one-sided, which is a stronger guarantee than general metric embeddings.
In \cite{yimanna}, Yim et al. provide an argument to illustrate the asymptotic sharpness of the result, using a construction in the Poincaré disk.

For a practical and efficient computational procedure to obtain Gromov’s tree embedding of a discrete metric space, one can leverage the \textit{Single Linkage Hierarchical Clustering} (SLHC) algorithm \cite{SLHC}. The SLHC algorithm incrementally merges clusters based on minimal pairwise distances, implicitly constructing an ultrametric tree. While originally developed for clustering tasks, this procedure can be naturally adapted to construct the Gromov's embedding. Further details are provided in Appendix~\ref{gromovembed}.

While this result is notable, it provides a worst-case, coarse approximation that does not seek to minimize the actual distortion for a specific input metric. In contrast, our work focuses on directly learning a tree-like metric that is optimally close to the original space, potentially achieving significantly lower distortion in practice, as shown in the experiments, see Section \ref{sec:xp}. 
%This refinement is particularly important in applications where the quality of the approximation, rather than just theoretical guarantees, is critical.

\section{\ourmethod} \label{method}

% This leads to a constrained optimization problem
% \[
% \underset{(X', d_{X'}) \in \mathcal{M}_n}{\mathrm{argmin}} \, \|d_X - d_{X'}\|_{\infty} \quad \text{subject to } \delta_{X'} = 0,
% \]
%seeks a tree metric that best approximates a given metric space \((X, d_X)\) in the \(\ell_\infty\) sense. 
%\subsection{Problem Setting}
 We present below our approach to the Minimum Distortion Tree Metric Approximation problem.  
 In this work, we consider discrete metric spaces and we denote by \(\mathcal{M}_n\) the set of finite metric spaces over \(n\) points, and fix \((X, d_X) \in \mathcal{M}_n\). Our algorithm is inspired by Theorem~\ref{Gromov} which states that $(X, d_X)$ can be isometrically embedded into a tree if and only if its Gromov hyperbolicity is $0$. Building on this result, we introduce the following optimization problem, parametrized by \(\mu > 0\),
\begin{equation}
\label{eq:lagrangian}
\underset{\begin{smallmatrix} 
(X', d_{X'}) \in \mathcal{M}_n \\
d_{X'}\leq d_X
\end{smallmatrix}}{\mathrm{min}} \, \mathcal{L}_{X}((X', d_{X'}), \mu):= \mu \|d_X - d_{X'}\|_{\infty} + \delta_{X'}.
\end{equation}
By minimizing \(\mathcal{L}_X\), we seek a discrete metric space that is both close to \((X, d_X)\) and exhibits low hyperbolicity. This objective balances two competing goals: proximity to the original metric (via the \(\ell_\infty\) distortion term) and tree-likeness (quantified by \(\delta_{X'}\)). The parameter \(\mu\) governs the trade-off between these terms. 
The constraint $d_{X'}\leq d_X$ accounts for the non-expansiveness of the metric, as in Theorem \ref{Gromov} for the Gromov embedding. 
Importantly, the minimization of \(\mathcal{L}_X\) leads to improved worst-case guarantees when followed by the Gromov tree embedding procedure, as described below.

\begin{theorem}\label{betterdistortion}
Let \((X^*, d_{X^*})\) be any metric space minimizing \(\mathcal{L}_X((X', d_{X'}), \mu)\), and let \((T^*, d_{T^*})\) be a tree metric obtained by applying a Gromov tree embedding denoted by $\Phi$ to \((X^*, d_{X^*})\). Then
\[
d_X(x,y) - C_{X, \mu} \leq  d_{T^*}(\Phi(x),\Phi(y)) \leq d_X(x,y) \text{ for all } x,y \in X,
\]
with $C_{X,\mu}= 2 \delta_X \log_2(n-2) + \left(1 - 2 \log_2(n-2) \mu \right) \|d_X - d_{X^*}\|_\infty$.
In particular, if $\mu \geq \frac{1}{2 \log_2(n-2)}$, the distortion of the final tree embedding is smaller than the worst-case bound achieved by applying the Gromov embedding directly to \(X\).
\end{theorem}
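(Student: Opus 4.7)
The plan is to combine two ingredients: the guarantees of the Gromov tree embedding from Theorem~\ref{Gromov} applied to the minimizer $(X^*, d_{X^*})$, and the optimality of $(X^*,d_{X^*})$ for $\mathcal{L}_X$ which lets us trade the hyperbolicity $\delta_{X^*}$ against the distortion $\|d_X - d_{X^*}\|_\infty$.

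First I would establish the upper bound, which is essentially immediate. Applying Theorem~\ref{Gromov} to $(X^*, d_{X^*})$ gives $d_{T^*}(\Phi(x), \Phi(y)) \leq d_{X^*}(x,y)$, and combining with the feasibility constraint $d_{X^*} \leq d_X$ yields $d_{T^*}(\Phi(x), \Phi(y)) \leq d_X(x,y)$.

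For the lower bound I would proceed in three steps. Step one: Theorem~\ref{Gromov} gives $d_{T^*}(\Phi(x),\Phi(y)) \geq d_{X^*}(x,y) - 2\delta_{X^*} \log_2(n-2)$. Step two: from the elementary inequality $d_{X^*}(x,y) \geq d_X(x,y) - \|d_X - d_{X^*}\|_\infty$ we obtain
\[
d_{T^*}(\Phi(x),\Phi(y)) \geq d_X(x,y) - \|d_X - d_{X^*}\|_\infty - 2\delta_{X^*}\log_2(n-2).
\]
Step three, the key step, is to bound $\delta_{X^*}$ by using that $(X^*, d_{X^*})$ minimizes $\mathcal{L}_X$. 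Comparing with the trivial feasible candidate $(X', d_{X'}) = (X, d_X)$ — which trivially satisfies $d_{X'} \leq d_X$ and gives $\mathcal{L}_X((X,d_X),\mu) = \delta_X$ — yields
\[
\mu \|d_X - d_{X^*}\|_\infty + \delta_{X^*} \leq \delta_X,
\]
so $\delta_{X^*} \leq \delta_X - \mu \|d_X - d_{X^*}\|_\infty$. Substituting this bound and regrouping the terms linear in $\|d_X - d_{X^*}\|_\infty$ produces exactly the constant $C_{X,\mu} = 2\delta_X \log_2(n-2) + (1 - 2\mu\log_2(n-2)) \|d_X - d_{X^*}\|_\infty$.

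Finally, for the ``in particular'' clause: when $\mu \geq 1/(2\log_2(n-2))$, the coefficient $1 - 2\mu\log_2(n-2)$ is non-positive, so the second term in $C_{X,\mu}$ is non-positive and $C_{X,\mu} \leq 2\delta_X \log_2(n-2)$, which is precisely the additive distortion bound of Theorem~\ref{Gromov} applied directly to $(X, d_X)$. I do not anticipate a genuine obstacle here; the only subtlety is the comparison-with-$(X, d_X)$ trick to convert the optimality of $(X^*, d_{X^*})$ into a usable upper bound on $\delta_{X^*}$, and the careful algebraic bookkeeping that reveals the sign change in the coefficient of $\|d_X - d_{X^*}\|_\infty$.
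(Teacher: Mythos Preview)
Your proposal is correct and follows essentially the same approach as the paper: both combine the Gromov embedding guarantee of Theorem~\ref{Gromov} applied to $(X^*,d_{X^*})$ with the optimality inequality $\mu\|d_X-d_{X^*}\|_\infty+\delta_{X^*}\le \delta_X$ obtained by comparing against the feasible candidate $(X,d_X)$, and then use the constraint $d_{X^*}\le d_X$ for the upper bound. The only cosmetic difference is that the paper phrases the lower bound via a triangle inequality on $\|d_X-d_{T^*}\|_\infty$, whereas you work pointwise; the algebra and the final conclusion are identical.
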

The proof of all theorems are given in App.~\ref{app:proofsTheo}. Theorem~\ref{betterdistortion} suggests that improved \(\ell_\infty\) distortions can be achieved by minimizing $\mathcal{L}_X$.
Our solution, which we describe below, adopts this principle.

%However, both the \(\ell_\infty\) norm and the Gromov hyperbolicity function in \(\mathcal{L}_X\) are non-differentiable. To address this, we introduce two smooth surrogate objectives.

% \begin{remark}
% Note that minimizing $\mathcal{L}_1\big((X', d_{X'}), \mu\big)$ is equivalent to minimizing
% \[
% \mathcal{L}_2\big((X', d_{X'}), \mu\big) := \mu \|d_X - d_{X'}\|_\infty + \delta_{X'}.
% \]
% Both formulations yield the same set of minimizers up to a rescaling of the penalty parameter~\(\mu\). 

% In practice, we adopt the form~\(\mathcal{L}_2\) to treat hyperbolicity as the primary quantity of interest, with a distortion penalty weighted by~\(\mu\).
% \end{remark}

\subsection{Smooth Relaxation and Batched Gromov Hyperbolicity}

% First, to facilitate gradient–based optimisation we replace the non–differentiable \(\ell_\infty\) loss by the squared \(\ell_2\) norm
% \[
% \|d_X - d_{X'}\|_{\infty} \quad \leadsto \quad \|d_X - d_{X'}\|_2^2.
% \]
% \lc{discuter ce choix et citer des papiers tels que \cite{chatterjee2021average} ou dire que ça s'inscrit dans un moove général de considérer autre chose que le max dans le monde du ML \cite{katsman2025shedding}.}

We propose a differentiable loss function that allows us to optimize the metric so that it closely aligns with the original metric while maintaining a small Gromov hyperbolicity. 
The primary challenge lies in the non-differentiability of $\delta_X$, as it involves $\max$ and $\min$ operators, rendering it unsuitable for gradient-based optimization. To address this, we introduce a smoothing of Gromov hyperbolicity based on the log-sum-exp function defined as $\mathrm{LSE}_\lambda(\mathbf{x}) = \frac{1}{\lambda} \log \left( \sum_{i} e^{\lambda x_i} \right)$ for a vector $\mathbf{x}$ and $\lambda > 0$. 
As done in many contexts, $\mathrm{LSE}_{\pm \lambda}$ can be used as a differentiable surrogate for $\max$ and $\min$ \cite{cuturi2017soft, gao2017properties, lahoud2024datasp}, recovering both as $\lambda \to \infty$.
We define a smooth version of the $\delta$-hyperbolicity as
\begin{equation}
\label{eq:delta_smooth}
\begin{split}
\delta_{X}^{(\lambda)} &:= \mathrm{LSE}_\lambda \left( \left\{ \mathrm{LSE}_{-\lambda}((x|y)_w, (y|z)_w) - (x|z)_w \right\}_{x,y,z,w \in X} \right) \\
&= 		\frac{1}{\lambda}
		\log \sum_{(x,y,z,w) \in X} 
		\ratio{x}{y}{z}{w} \,.
\end{split}
\end{equation}
The rationale behind this formulation is to replace the $\min$ in \eqref{deltadms} with a soft-minimum $\mathrm{LSE}_{-\lambda}$ and the $\max$ with a soft-maximum $\mathrm{LSE}_{\lambda}$, which boils down to computing the last quantity.
Interestingly, we have the following bounds between the Gromov hyperbolicity and its smoothed counterpart $\delta_{X}^{(\lambda)}$.

% We further introduce a differentiable approximation of the
% Gromov hyperbolicity constant by replacing the $\max$ and $\min$ with a log-sum-exp.

%\[\delta_{\lambda, X} = \frac{1}{\lambda} \log \left[ \sum_{x,y,z,w \in X} \frac{e^{-\lambda (x|z)_w}}{e^{-\lambda (x|y)_w} + e^{-\lambda (y|z)_w}} \right].\]%

\begin{proposition}\label{approxbound}
Let $(X,d_X)$ be a finite metric space over $n$ points, for $\lambda >0$ we have
\[
\delta_{X} - \frac{\log(2)}{\lambda} \leq \delta_{X}^{(\lambda)} \leq \delta_{X} + \frac{4 \log(n)}{\lambda}.
\]
\end{proposition}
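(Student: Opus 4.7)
The plan is to reduce the proposition to two standard one-sided inequalities for the log-sum-exp operator and then combine them. Recall that for any vector $\mathbf{v} \in \mathbb{R}^N$ one has
\[
\max_i v_i \;\leq\; \mathrm{LSE}_\lambda(\mathbf{v}) \;\leq\; \max_i v_i + \tfrac{\log N}{\lambda},
\qquad
\min_i v_i - \tfrac{\log N}{\lambda} \;\leq\; \mathrm{LSE}_{-\lambda}(\mathbf{v}) \;\leq\; \min_i v_i,
\]
both of which follow immediately from the definition of $\mathrm{LSE}_\lambda$. I would introduce the shorthand
\[
f(x,y,z,w) = \min\{(x|y)_w,(y|z)_w\} - (x|z)_w,
\qquad
f_\lambda(x,y,z,w) = \mathrm{LSE}_{-\lambda}((x|y)_w,(y|z)_w) - (x|z)_w,
\]
so that $\delta_X = \max_{(x,y,z,w)} f(x,y,z,w)$ and $\delta_X^{(\lambda)} = \mathrm{LSE}_\lambda(\{f_\lambda(x,y,z,w)\}_{(x,y,z,w) \in X^4})$, a soft-max taken over $N = n^4$ terms.

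For the upper bound, I would first use the soft-min-$\leq$-min inequality (with two arguments) to obtain $f_\lambda \leq f$ pointwise. Then applying the soft-max-$\leq$-max$+\tfrac{\log N}{\lambda}$ inequality with $N = n^4$ gives
\[
\delta_X^{(\lambda)} \;\leq\; \max_{(x,y,z,w)} f_\lambda(x,y,z,w) + \tfrac{\log(n^4)}{\lambda}
\;\leq\; \delta_X + \tfrac{4\log n}{\lambda}.
\]

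For the lower bound, I would use the other side of the two-argument soft-min inequality, namely $\mathrm{LSE}_{-\lambda}(a,b) \geq \min\{a,b\} - \tfrac{\log 2}{\lambda}$, which yields $f_\lambda(x,y,z,w) \geq f(x,y,z,w) - \tfrac{\log 2}{\lambda}$ for every quadruple. Combining with the trivial bound $\mathrm{LSE}_\lambda \geq \max$ gives
\[
\delta_X^{(\lambda)} \;\geq\; \max_{(x,y,z,w)} f_\lambda(x,y,z,w)
\;\geq\; \max_{(x,y,z,w)} f(x,y,z,w) - \tfrac{\log 2}{\lambda}
\;=\; \delta_X - \tfrac{\log 2}{\lambda}.
\]

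There is no real obstacle here; the only subtlety is keeping track of how many terms the outer $\mathrm{LSE}_\lambda$ is taken over (which is $n^4$, producing the $4\log n$ factor) versus the inner $\mathrm{LSE}_{-\lambda}$ (which only involves two terms, producing the $\log 2$ factor). The asymmetry of the two bounds is a direct reflection of this asymmetry in term counts rather than anything deeper.
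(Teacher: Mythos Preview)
Your proposal is correct and follows essentially the same approach as the paper: both proofs reduce to the standard two-sided bounds on $\mathrm{LSE}_\lambda$ and $\mathrm{LSE}_{-\lambda}$, applying the two-term soft-min bound inside (yielding the $\log 2/\lambda$ term) and the $n^4$-term soft-max bound outside (yielding the $4\log n/\lambda$ term). Your write-up is in fact slightly more explicit than the paper's, but the argument is the same.
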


While the smoothed version \(\delta_{X}^{(\lambda)}\) alleviates the non-differentiability of the classical Gromov hyperbolicity, it still requires evaluating all quadruples of nodes in \(X\), resulting in a computational complexity of \(O(n^4)\). This remains prohibitive for large number of points. 
To address this, we introduce a batched approximation of the Gromov hyperbolicity. 
We sample \(K\) independent subsets \ph{(without replacement within a subset and no constraint across subsets)} \(X^1_m, \ldots, X^K_m \subset X\), each containing \(m\) points, and compute the local smooth hyperbolicity \(\delta_{X^i_m}^{(\lambda)}\) within each batch. Since \(\delta_X\) is defined as the maximum over all quadruples in \(X\), and each \(\delta_{X^i_m}^{(\lambda)}\) approximates the maximum over quadruples in a smaller subset \(X^i_m \subset X\), we aggregate these local estimates using another log-sum-exp to obtain a differentiable approximation of the global maximum
\begin{equation}
\label{eq:batched_smoothed_hyperbolicity}
\delta_{X, K, m}^{(\lambda)} = \mathrm{LSE}_\lambda \left( \delta_{X^{1}_m}^{(\lambda)}, \cdots, \delta_{X^{K}_m}^{(\lambda)} \right).
\end{equation}
This two-level smoothing approach allows us to preserve differentiability while reducing the computational complexity from \(O(n^4)\) to \(O(K \cdot m^4)\), making the estimation of Gromov hyperbolicity tractable.

It is important to note that this sampling strategy does not, in general, guarantee closeness to the true Gromov hyperbolicity. For example, one may construct a metric space composed of a large tree with four additional points connected far from the main structure, although these few points may determine the global hyperbolicity, they are unlikely to appear in small random subsets. Thus, rare yet impactful configurations may be missed, leading to underestimation. Nonetheless, we find that in practice, the batched approximation performs well across a variety of graph instances (see Appendix \ref{appendix:stability}). In particular, we can show that for some random graph models, our estimator closely tracks the true Gromov hyperbolicity, as formalized below.

% \tv{To formalize this, we fix $X = \{x_1, \cdots, x_n\}$ and $\nu$ a probability measure on the space of distance functions. We say that the metric space $(X, d_X)$ is \emph{uniform} when the random variables $D_{ij} = d_X(x_i, x_j)$ are \textit{i.i.d.}. $\nu \in \mathcal{P}(X \times X \to \mathbb{R})$} 

We say that a random metric space $(X,d_X) \in \mathcal{M}_n$ is \emph{uniform} if the random distances \( d_X(x_i, x_j) \) are 
i.i.d.\ with some law \( \ell \), called the \emph{law of distances} (see precise Definition \ref{def:uniformmetric}).

\begin{theorem}\label{random_delta}
 Let $ \ell $ 
	be a probability measure on
	$ \R_{+} $.
	Let $(X,d_X)$ be a uniform metric space with
	with law of distances
	given by $ \ell $.
	In the regime
	$ K \log(m) \sim \log(n) $
	and
	$ \lambda \sim \varepsilon^{-1} \log(n) $
	we have
	\begin{equation*}
		\Proba
		\left[ 
			\abs{
				\delta
				_{X} 
				-
			\delta_{X, K, m}^{(\lambda)}
			} 
			\le \varepsilon
		\right]
		\ge
		1 - 
		\frac{1}{
		\varepsilon^{2K}
		n^{1-o(1)} }.
	\end{equation*}
\end{theorem}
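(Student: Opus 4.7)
The plan is to split $|\delta_X - \delta^{(\lambda)}_{X,K,m}|$ into a \emph{smoothing} part (LSE versus $\max/\min$) and a \emph{sampling} part (batches of size $m$ versus all quadruples), and bound each separately.

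For the smoothing part I would apply Proposition~\ref{approxbound} inside each batch to obtain $\delta_{X^i_m} - \log(2)/\lambda \le \delta^{(\lambda)}_{X^i_m} \le \delta_{X^i_m} + 4\log(m)/\lambda$, and combine this with the elementary sandwich $\max_i a_i \le \mathrm{LSE}_\lambda(a_1,\ldots,a_K) \le \max_i a_i + \log(K)/\lambda$ applied to the outer LSE in \eqref{eq:batched_smoothed_hyperbolicity}. This yields $\max_i \delta_{X^i_m} - O(1/\lambda) \le \delta^{(\lambda)}_{X,K,m} \le \max_i \delta_{X^i_m} + O((\log m + \log K)/\lambda)$, all of which are $O(\varepsilon)$ in the regime $\lambda \sim \varepsilon^{-1}\log n$, $K\log m \sim \log n$. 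Since $X^i_m \subset X$ forces $\delta_{X^i_m} \le \delta_X$ deterministically, the upper deviation $\delta^{(\lambda)}_{X,K,m}\le\delta_X+O(\varepsilon)$ is free, so the probabilistic content lies entirely in proving $\max_i \delta_{X^i_m} \ge \delta_X-\varepsilon$ with the claimed probability.

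For the sampling part, write $F(x,y,z,w) := \min\{(x|y)_w,(y|z)_w\} - (x|z)_w$. Under the uniform distance model $F$ is a deterministic function of the six i.i.d.\ distances on a 4-tuple, so its marginal law $\mu_F$ is identical across quadruples. I would then proceed in three steps: (i) an extreme-value / second-moment argument locates $\delta_X = \max F$ at the quantile $t_n$ of $\mu_F$ defined by $\binom{n}{4}\,\mu_F([t_n,\infty)) \asymp 1$ and shows it concentrates there with probability $1-n^{-1+o(1)}$; (ii) on this event a further second-moment estimate lower bounds the number $N_\varepsilon$ of quadruples with $F\ge\delta_X-\varepsilon$ by a polynomial in $n$ times $\varepsilon^2$, exploiting the fact that disjoint 4-tuples are independent while overlapping 4-tuples share at most three distances and hence have controllable covariance; (iii) a hypergeometric tail bound then shows that each uniform batch $X^i_m$ captures a near-extremal quadruple with probability at least $c\varepsilon^2$, and independence of the $K$ batches converts this into the claimed tail via a $(2K)$-th moment Markov inequality on the number of batches that miss $N_\varepsilon$.

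The main obstacle is step (ii): the maximum defining $\delta_X$ is over $\binom{n}{4}$ strongly dependent variables and the overlap graph of 4-tuples has high local density, so the second-moment computation requires a careful case split on the number of shared points (and hence shared pairs/distances) between two quadruples. An additional delicate point is converting abstract quantile estimates on $\mu_F$ into the explicit $\varepsilon^2$ factor; this requires mild regularity of $\mu_F$ near the top of its support, and the $o(1)$ in the exponent $1-o(1)$ of $n$ is presumably there to absorb these logarithmic losses and any mild pathology of $\ell$.
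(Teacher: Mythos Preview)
Your decomposition into a smoothing part and a sampling part is natural, and the smoothing bounds you write are correct. But the route you take for the sampling part is genuinely different from the paper's, and substantially harder.

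The paper never compares the \emph{hard} hyperbolicities $\delta_{X^i_m}$ with $\delta_X$. Instead it keeps the smoothing throughout and exploits the identity
\[
e^{\lambda\,\delta_X^{(\lambda)}} \;=\; S_n \;=\; \sum_{q\in X^4} f(q),\qquad f(q)=\ratio{x}{y}{z}{w},
\]
so that $\delta_X^{(\lambda)}=\tfrac{1}{\lambda}\log S_n$ is the log of a \emph{sum} of $n^4$ weakly dependent terms. A Chebyshev argument then yields concentration: overlapping pairs of quadruples number only $O(n^7)$ out of $n^8$, which bounds $\mathrm{Var}(S_n)$ and gives $S_n/n^4\to\mu:=\E[f(q)]$ with deviation probability $\le 1/(\varepsilon^2 n^{1-o(1)})$. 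Hence $\delta_X^{(\lambda)}\approx \tfrac{1}{\lambda}(\log\mu+4\log n)$, and by the \emph{same} argument applied to each batch, $\delta_{X^i_m}^{(\lambda)}\approx \tfrac{1}{\lambda}(\log\mu+4\log m)$, a second deterministic target differing from the first by $\tfrac{4}{\lambda}\log(n/m)=O(\varepsilon)$. Independence of disjoint batches then turns the per-batch failure probability $1/(\varepsilon^2 m^{1-o(1)})$ into $1/(\varepsilon^{2K} m^{K(1-o(1))})=1/(\varepsilon^{2K} n^{1-o(1)})$. Proposition~\ref{approxbound} is invoked once, at the very end, to pass from $\delta_X^{(\lambda)}$ to $\delta_X$.

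Your plan, by de-smoothing first, trades a law-of-large-numbers problem for an extreme-value problem: steps (i)--(ii) require locating $\delta_X$ and lower-bounding the number of near-extremal quadruples, which (as you yourself note) needs tail regularity of $\mu_F$ that is nowhere assumed. The paper's route needs only bounded support of $\ell$. There is also a quantitative mismatch in your step (iii): to recover the exponent $\varepsilon^{2K}n^{1-o(1)}$ via independence over $K$ batches, each batch must \emph{succeed} with probability $1-O(1/(\varepsilon^{2}m))$, not merely $\Omega(\varepsilon^2)$; the latter would give $(1-c\varepsilon^2)^K$, which is the wrong shape. In short, the paper's key idea---treat $e^{\lambda\delta^{(\lambda)}}$ as a sum and apply Chebyshev---bypasses exactly the obstacle you identify as the crux.
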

This result establishes that, in the asymptotic regime where $n \to \infty$, our batch estimate closely aligns with the true Gromov hyperbolicity, provided that the number of batches times the logarithm of their size, as well as the smoothing parameter, depend logarithmically on $n$.
The assumptions about metric spaces imply independent distances, which can be restrictive. As shown in Appendix \ref{sec:proofdeltarandom}, a random graph model satisfying these assumptions is one where shortest-path distances are randomly drawn from a distribution over $\{r,…,2r\}$, or within a certain regime of Erdős–Rényi graphs. In practice, however, some of our datasets exhibit log-normal distance distributions (see Appendix \ref{sec:proofdeltarandom}), suggesting that these assumptions are not overly restrictive. Future work will aim to extend these results to encompass a broader class of random metrics.
%Precise definitions, proofs, and illustrative examples are provided in Section~\ref{sec:proofdeltarandom}.

% $d_{1}, \cdots , d_{n \choose 2} \sim \ell \text{ i.i.d.}$ and $\mathb{D} = \operatorname{dist}(\mathbf{d}) \in \mathcal{D}_n$.

\subsection{Final Optimization Objective}
\label{sec:opti}

\begin{algorithm}[t]
\caption{\ourmethod \label{alg:detlazero}}
\begin{algorithmic}[1]
\Require A finite metric space $(X, d_X)$, a designated root $w \in X$, learning rate $\epsilon > 0$, number of batches $K$, batch size $m$, log-sum-exp scale $\lambda > 0$, regularization parameter $\mu > 0$, number of steps $T$
\State Initialize $\mathbf{D}_0 = \mathbf{D}_X$
\For{$t = 0$ to $T-1$}
    \State Randomly pick $K$ batches of $m$ points and compute $\delta^{(\lambda)}_{\mathbf{D}_t, K, m}$
    \State Compute gradient: $\mathbf{G}_t = \nabla L_X(\mathbf{D}_t)$ 
    \State Optimization step: $\tilde{\mathbf{D}}_{t+1} = \textsc{Adam step}(\mathbf{D}_t, \mathbf{G}_t, \epsilon)$
    \State Projection step: $\mathbf{D}_{t+1} = \textsc{Floyd-Warshall}(\tilde{\mathbf{D}}_{t+1})$
\EndFor
\State \Return $\textsc{Gromov}(\mathbf{D}_{t}, w)$
\end{algorithmic}
\end{algorithm}

% A metric on a finite set of \(n\) points can be encoded as a vector by stacking the upper-triangular entries of its distance matrix into a vector $\mathbf{x}\;=\;(x_{12},x_{13},\dots,x_{(n-1)n})^\top\in\mathbb{R}^{n(n-1)/2}$. 
% For $n \in \mathbb{N}$, we can describe the set $\mathcal{M}_n$ of discrete metric spaces over $n$ points as \tv{ya un probleme dans les indices dans l'inegalité: $x_{ij}$ ne peut pas aller jusque $x_{nn}$ si $x$ est dans $n(n-1)/2$}
% \[
%   \Bigl\{\mathbf{x}\in\left(\mathbb{R}_+\right)^{n(n-1)/2}\;\Bigm|
%          x_{ij}\le x_{ik}+x_{kj}\;\;\forall i,j,k \in \lbrace 1, \ldots, n \rbrace
%       \Bigr\}.
% \]
% Given a finite metric space $(X,d_X)$ over $n$ points, we search for a tree–like surrogate by solving
% \begin{equation}\label{eq:main_obj}
%   \underset{\mathbf{x}' \in \mathcal{M}_n}{\mathrm{argmin}} ~ \|d_X-d_{X'}\|_2^2 + \mu\,\delta_{\operatorname{LSE}_\lambda, X'}.
% \end{equation}
% Where $(X',d_X')$ is the metric space over $n$ points associated to $\mathbf{x}'$. 

To formalize our algorithm, we start by noting that the set of discrete metrics over $n$ points can be identified with the following subset of $\mathbb{R}^{n \times n}$
\begin{equation*}
  \mathcal{D}_n:=\left\{\mathbf{D} = (D_{ij}) \in \mathbb{R}^{n \times n}_{+} : \mathbf{D} = \mathbf{D}^\top, D_{ij}\le D_{ik} + D_{kj}, \ D_{ii} = 0 \;\;\forall i,j,k \in \{1, \cdots, n\}\right\}\,.
\end{equation*}
Each \(\mathbf{D} \in \mathcal{D}_n\) encodes the pairwise distances of a metric space over \(n\) points, in particular, any metric \(d_X\) on a set \(X = \lbrace x_1, \ldots, x_n \rbrace \) of size \(n\) is fully described by its matrix representation \(\mathbf{D}_X = (d_X(x_i,x_j))_{i,j} \in \mathcal{D}_n\). Note that $\mathcal{D}_n$ forms a closed, convex, polyhedral cone generated by the half-spaces imposed by the triangle inequalities. 

In practice, to find a tree-like approximation of $(X,d_X)$, we search for a matrix $\mathbf{D} \in \mathcal{D}_n$ that is close to $\mathbf{D}_X$ while exhibiting low Gromov hyperbolicity. Since Gromov hyperbolicity depends only on the metric, we write $\delta_\mathbf{D}$ to emphasize this dependence.
Our final optimization objective is
\begin{equation}\label{eq:main_obj}
  \underset{\begin{smallmatrix} \mathbf{D} \in \mathcal{D}_n  \end{smallmatrix}}{\mathrm{min}} \ L_X(\mathbf{D}) :=  \mu\, \|\mathbf{D}_X-\mathbf{D}\|_2^2 + \delta^{(\lambda)}_{\mathbf{D}, K, m}\,,
\end{equation}
where $\mathbf{A} \leq \mathbf{B}$ means $\forall (i,j), A_{ij} \leq B_{ij}$. The first term preserves fidelity to the input metric, acting as a smooth counterpart to the $\ell_\infty$ norm. \ph{Our choice of the squared $\ell_2$ norm stems from its smoothness and differentiability, which make it well-suited to gradient-based optimization. Moreover, among the common $\ell_p$ norms, $\ell_2$ provides a natural balance: it is closer to $\ell_\infty$ than $\ell_1$, as captured by the classical inequality in finite-dimensional spaces,
\begin{equation}\label{fdvectorspace}
  \|\mathbf{x}\|_\infty \leq \|\mathbf{x}\|_2 \leq \sqrt{n}\,\|\mathbf{x}\|_\infty \quad \text{for all } \mathbf{x} \in \mathbb{R}^n.
\end{equation}}The second term promotes low-stretch, hierarchical structure using our smooth log–sum–exp approximation of Gromov hyperbolicity. The constraint $\mathbf{D} \in \mathcal{D}_n$ account for the metric requirement. \ph{Note that we do not require anymore the non-expansiveness of the metric.}
While the objective \eqref{eq:main_obj} includes a convex fidelity term, the hyperbolicity penalty introduces non-convexity. This comes from the structure of the Gromov hyperbolicity functional.

\begin{proposition}\label{piecewise-nonconvex}
The Gromov hyperbolicity $\delta_{\mathbf{D}}$ seen as a functional over \(\mathcal{D}_n \) is piecewise affine but not convex.
\end{proposition}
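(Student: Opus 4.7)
The proof splits into two essentially independent claims, and I sketch the plan for each.

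\emph{Piecewise affinity.} Each Gromov product $(x|y)_w=\tfrac12(D_{xw}+D_{yw}-D_{xy})$ is affine in the entries of $\mathbf{D}$. Hence for every ordered quadruple $(x,y,z,w)$ the quantity
\[
g_{xyzw}(\mathbf{D}) \,:=\, \min\{(x|y)_w,(y|z)_w\} \,-\, (x|z)_w
\]
is a minimum of two affine functions minus an affine function, i.e.\ a concave piecewise affine function of $\mathbf{D}$, and $\delta_\mathbf{D}=\max_{x,y,z,w} g_{xyzw}(\mathbf{D})$. The finitely many hyperplanes of the form $\{(x|y)_w=(y|z)_w\}$ together with $\{g_{xyzw}=g_{x'y'z'w'}\}$ cut $\mathbb{R}^{n\times n}$ into finitely many polyhedral cells; on each cell both the active quadruple and the active branch of the inner $\min$ are fixed, so $\delta_\mathbf{D}$ coincides with a single affine functional. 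Intersecting this polyhedral subdivision with the cone $\mathcal{D}_n$ gives the required piecewise affine structure on $\mathcal{D}_n$.

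\emph{Non-convexity.} I will argue via sublevel sets. Since $\delta_\mathbf{D}\ge 0$ and $\delta_\mathbf{D}=0$ characterises tree metrics through the four-point condition recalled in Section~\ref{sec:background}, the zero sublevel set $\{\mathbf{D}\in\mathcal{D}_n : \delta_\mathbf{D}\le 0\}$ coincides exactly with the set of tree metrics on $n$ points. A convex function must have convex sublevel sets, so it suffices to exhibit two tree metrics whose midpoint is not a tree metric. For $n=4$ let $\mathbf{D}_1$ be the shortest-path metric of the unit-weighted path with vertex order $1,2,3,4$ and $\mathbf{D}_2$ that of the unit-weighted path with vertex order $1,4,2,3$; both are tree metrics. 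A direct computation of the three four-point sums on $\tfrac12(\mathbf{D}_1+\mathbf{D}_2)$ shows that the two largest differ strictly, forcing $\delta_{\frac12(\mathbf{D}_1+\mathbf{D}_2)}>0=\tfrac12\delta_{\mathbf{D}_1}+\tfrac12\delta_{\mathbf{D}_2}$, which violates Jensen's inequality and rules out convexity.

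The only substantive step is producing a transparent counterexample to convexity. The piecewise affinity part is essentially bookkeeping once one notices that Gromov products are linear in $\mathbf{D}$, and the entire non-convexity statement ultimately reduces, via the sublevel set argument, to the well-known fact that the space of tree metrics on $n\ge 4$ points is not convex.
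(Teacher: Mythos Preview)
Your proposal is correct. The piecewise-affine part is essentially the same bookkeeping as in the paper, only you phrase it through the Gromov-product formulation $g_{xyzw}=\min\{(x|y)_w,(y|z)_w\}-(x|z)_w$ whereas the paper uses the equivalent four-point description $\delta(i,j,k,l)=M_1-M_2$ with $M_1,M_2$ the two largest among $D_{ij}+D_{kl}$, $D_{ik}+D_{jl}$, $D_{il}+D_{jk}$. Either way one gets a finite max of piecewise-affine functions and hence a polyhedral subdivision on which $\delta_{\mathbf D}$ is affine.

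For non-convexity your route genuinely differs from the paper's. The paper picks two metrics $\mathbf D_1,\mathbf D_2\in\mathcal D_4$ (neither of them a tree metric) chosen so that along the segment $t\mapsto t\mathbf D_1+(1-t)\mathbf D_2$ the active ordering of the three four-point sums switches, forcing $\delta$ to behave like $-\max(l_1(t),l_3(t))$, a concave (non-affine) function of $t$; this directly contradicts convexity. You instead argue through sublevel sets: since $\{\delta_{\mathbf D}\le 0\}$ is precisely the set of tree metrics, non-convexity of that set (witnessed by your two path metrics on four points whose midpoint has three four-point sums $3,4,3$ and hence $\delta=\tfrac12$) already rules out convexity of $\delta$. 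Your argument is more conceptual and ties the statement to the well-known non-convexity of tree space; the paper's argument is more hands-on and also exhibits the local concave behaviour of $\delta$ explicitly.
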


Thus, we expect our smooth surrogate \(\delta^{(\lambda)}_{\mathbf{D}, K, m}\) to also inherits this non-convexity, making the overall objective in \eqref{eq:main_obj} non-convex.

To tackle this optimization problem, we draw inspiration from Projected Gradient Descent (PGD). Specifically, we alternate between taking a plain gradient step of $L_X$ using \textsc{Adam}~\cite{kingma2014adam} (only over the upper-triangular entries of $\mathbf{D}$, due to symmetry and zero diagonal), and projecting the result back onto $\mathcal{D}_n$. Each projection step solves
\begin{equation*}\label{eq:proj}
\Pi_{\mathcal{D}_n}(\mathbf{D}) =
\underset{
  \substack{
    \mathbf{D}' \in \mathcal{D}_n 
    %\\
    %\mathbf{D}' \leq \mathbf{D}
  }
}{\mathrm{argmin}} \ 
\|\mathbf{D}'-\mathbf{D}\|_2^{2}.
\end{equation*}
This projection corresponds to a \emph{metric nearness problem}, which we tackle using the \textsc{Floyd--Warshall} algorithm~\cite{fw}. This algorithm computes all-pairs shortest paths in a graph with edge weights given by $\mathbf{D}$, and as shown in~\cite[Lemma~4.1]{brickell}, this is equivalent to finding the closest valid metric matrix no greater than $\mathbf{D}$ in each entry (see Section \ref{sec:sp_as_projection} for a detailed discussion). The projection runs in $O(n^3)$ time and can be interpreted as iteratively ``repairing'' triangle inequality violations to reach the nearest point in the metric cone \ph{with entrywise smaller values.}

At the end of the PGD iterations, we obtain a refined metric $\mathbf{D}^*$. To complete the tree approximation, we apply the Gromov tree embedding to $\mathbf{D}^*$, producing a tree metric. The full procedure, which we refer to as \textsc{DeltaZero}, is summarized in Algorithm~\ref{alg:detlazero}.
% For practical implementation we take a plain gradient step in $\mathbb{R}^{n(n-1)/2}$ on the objective and then project the resulting distances back into $\mathcal{M}_n$, this is the standard projected gradient–descent (PGD) scheme. 

% It's important to note that we do not generally reach a metric space with 0-Hyperbolicity. To obtain an actual tree metric we must embed the optimized distance matrix into a tree structure. We can summarize our method in the following algorithm

\paragraph{Computational complexity.} \ph{Each gradient step incurs a computational cost of $O(Km^4 + n^3)$, where the $O(n^3)$ term arises from the use of the \textsc{Floyd–Warshall} algorithm. Consequently, the overall complexity of the optimization procedure over $T$ iterations is $O\big(T(Km^4 + n^3)\big)$. In practice, the computation is typically dominated by the \textsc{Floyd–Warshall} component, as we operate under the condition $Km^4 \ll n^3$. We emphasize that the optimization is carried out in the space of distance matrices, so that the cubic cost of the projection is not disproportionate relative to the size of the object being optimized (with $n^2$ entries). Several methods have been proposed for metric nearness (e.g., \cite{brickell2008metric_nearness, li2023metric_nearness, gilbert2017sparse_metric_repair}), often relying on iterative schemes. Although these approaches may enjoy a lower theoretical complexity, they tend to be slower in practice. This is because \textsc{Floyd–Warshall}, despite its $O(n^3)$ complexity, admits efficient GPU parallelization, leading to substantial speed-ups compared to iterative alternatives.}

\section{Experiments}
\label{sec:xp}

We provide in this Section experiments conducted on synthetic dataset to assess the quality of the optimization method, and on real datasets to measure the distortion obtained by \ourmethod.

% \begin{wrapfigure}{r}{0.3\textwidth}
%     \centering
%     \includegraphics[width=.31\textwidth]{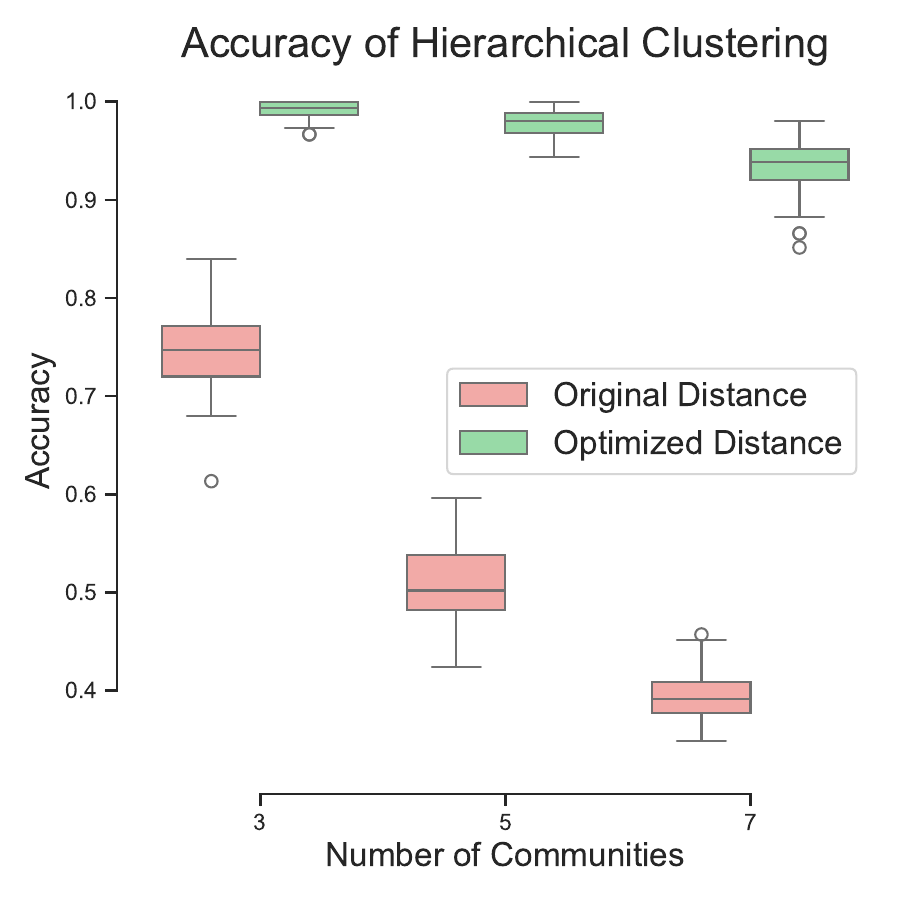}
%     \caption{Performances of Hierarchical clustering with varying number of communities}
%      \label{fig:perf_SBM}
% \end{wrapfigure}

\begin{figure}[!t]
    \centering
    \begin{subfigure}[h]{0.24\textwidth}
    \includegraphics[width=\textwidth]{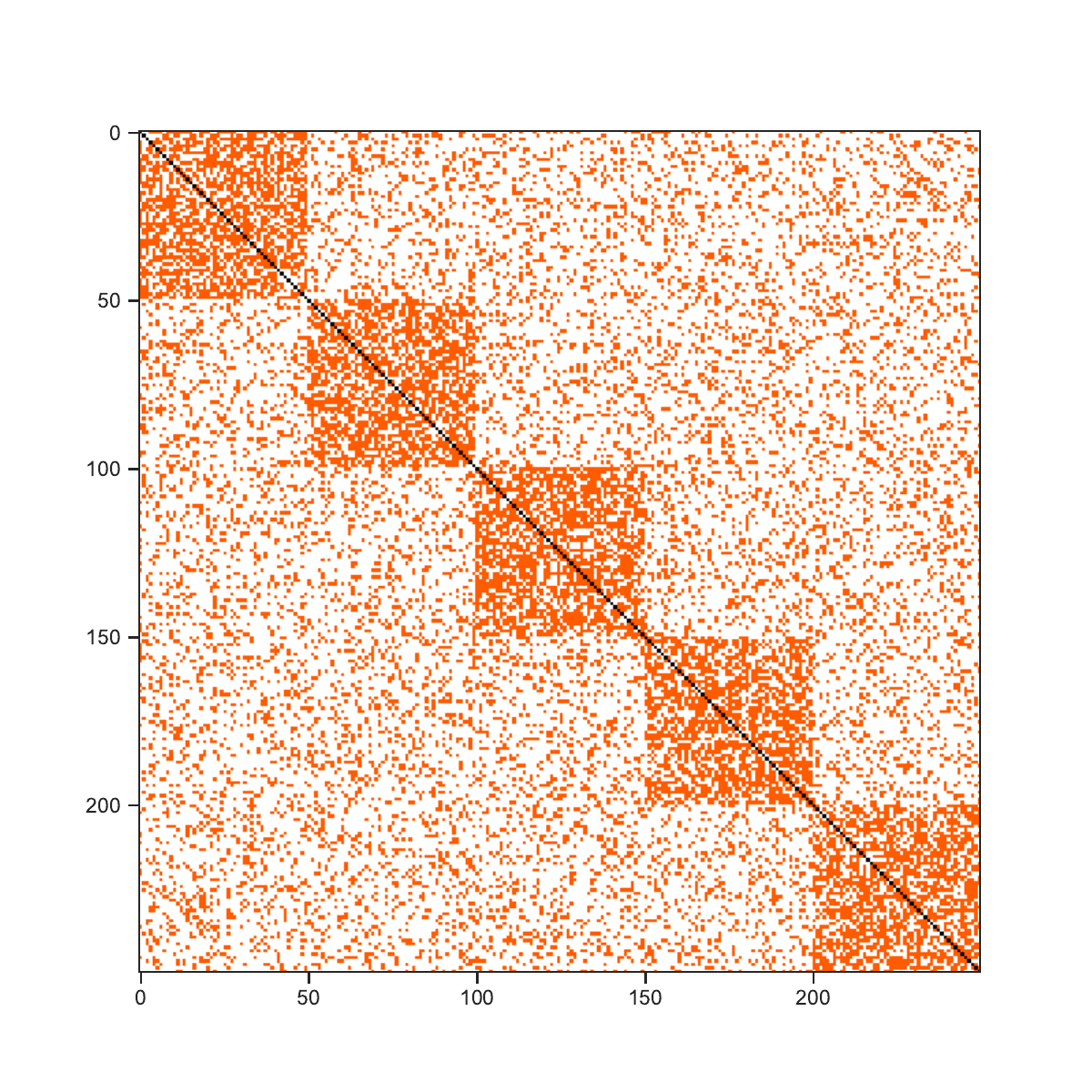}
    \caption{All pairs Shortest-Paths distance matrix}
    \label{fig:sbm_sub1}
    \end{subfigure}
    \begin{subfigure}[h]{0.37\textwidth}
    \includegraphics[width=\textwidth]{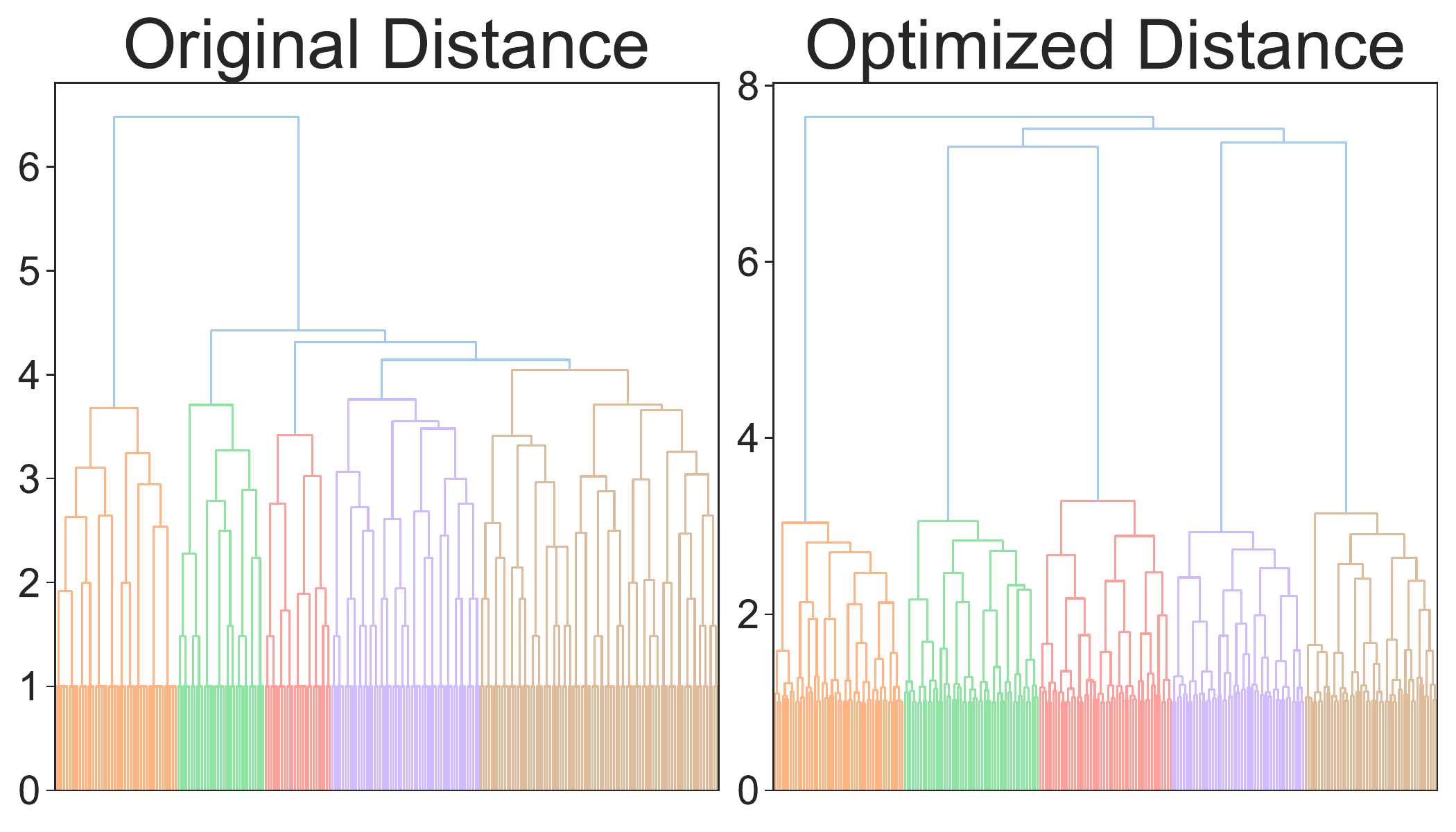}
    \caption{Dendrograms from original and optimized distance matrices}
    \label{fig:sbm_sub2}
    \end{subfigure}
    \begin{subfigure}[h]{0.37\textwidth}
    \includegraphics[width=\textwidth]{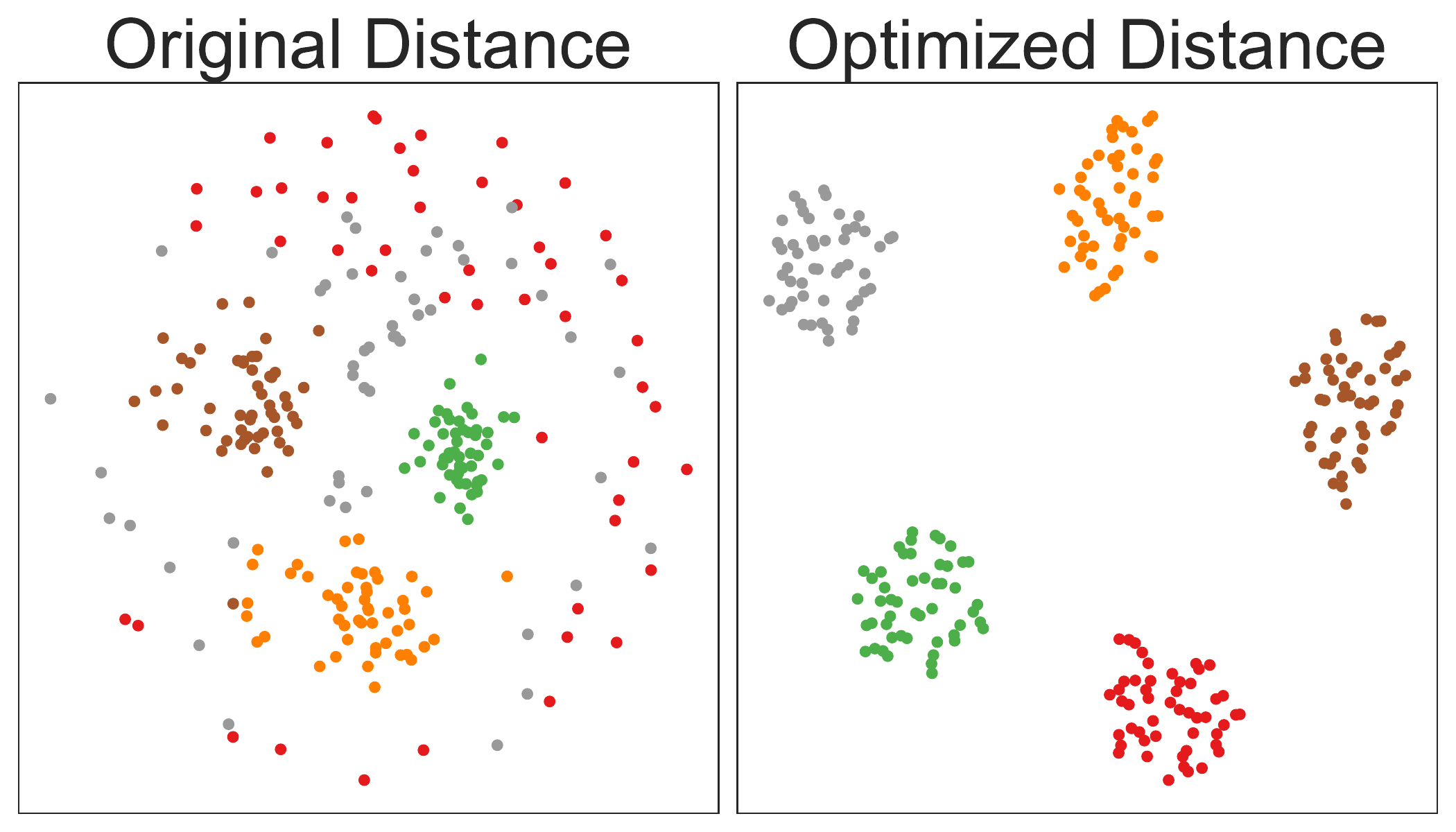}
    \caption{t-SNE plots from original and optimized distance matrices}
    \label{fig:sbm_sub3}
    \end{subfigure}
    \caption{Illustration of the impact of optimizing the distance matrix on a simple Stochastic Block Model with 5 communities (best viewed in colors).}
    \label{fig:SBM}
\end{figure}

\paragraph{Synthetic Stochastic Block Models dataset.} 
We consider the setting of Hierarchical Clustering (HC) which iteratively merges clusters based on pairwise dissimilarities. The resulting tree-like structure is known as a dendrogram and encodes a hierarchy of nested groupings. Our first objective is to qualitatively and quantitatively analyse the impact of the optimization problem of Eq.~\eqref{eq:main_obj} on HC  within the context of a simple stochastic block model (SBM) graph with a varying number of communities. 

\begin{minipage}{0.55\linewidth}
 In such a probabilistic model of graph structure, we set the intra- and inter-communities connection probabilities to $p_{in}=0.6$ and $p_{out}=0.2$. Sizes of communities are fixed to 50. This yields a shortest path distance matrix depicted in Figure~\ref{fig:sbm_sub1} for a SBM with 5 communities. We then follow the distance optimization procedure described in Section~\ref{sec:opti} to obtain a new distance matrix (parameters are $\mu=1$ and $\lambda=100$). This yields two different dendrograms, shown in Figure~\ref{fig:sbm_sub2}, both computed using Ward's method~\cite{Ward63} as implemented by the \texttt{linkage} function of Scipy~\cite{scipy}. The color threshold is manually adjusted to magnify 5 clusters. At this point, we can see that the dendrogram corresponding to the optimized distance exhibit a better, more balanced, hierarchical structure with a clear separation between the 5 clusters. This point is also illustrated in Figure~\ref{fig:sbm_sub3} where t-SNE~\cite{van2008visualizing} is computed from the distance matrices with a similar perplexity (set to 30). One can observe a stronger clustering effect from the optimized distance.  
\end{minipage}
\hspace{0.01\linewidth}
\begin{minipage}{0.44\linewidth}
\centering
    \includegraphics[width=0.95\linewidth]{}
    \captionof{figure}{Performances of Hierarchical clustering with varying number of communities \label{fig:perf_SBM}}
\end{minipage}

We then compute accuracy scores by comparing the true labels with the optimal permutation of predicted labels, using the \texttt{fcluster} function of Scipy. This evaluation is conducted under varying number of communities (3, 5 and 7), introducing increasing levels of difficulty. For each setting, 30 repetitions are performed. The results are summarized as boxplots in Figure~\ref{fig:perf_SBM}. As one can observe, our optimization strategy increases drastically the performances in terms of accuracy on this simple example. It suggests that our optimization produces a metric which effectively homogenizes intra- and inter- cluster distances.

\paragraph{Distortions on real datasets.} 
We evaluate the ability of our method, \ourmethod, to embed finite metric spaces into tree metrics with low distortion. Our benchmark includes both unweighted graph datasets and general metric datasets to showcase the versatility of our approach.
We consider five standard unweighted graphs: \textsc{C-ELEGAN}, \textsc{CS PhD} from \cite{csphd}, \textsc{Cora}, \textsc{Airport} from \cite{airport} and \textsc{Wiki}~\cite{torchgeometric}. For graphs with multiple connected components, we extract the largest connected component. We then compute the shortest-path distance matrix, which serves as input to all tree metric fitting algorithms.

To evaluate \ourmethod~beyond graph-induced metrics, we also include two datasets: \textsc{Zeisel} \cite{zeisel} and \textsc{IBD} \cite{ibd}. These datasets are not naturally represented as graphs but instead as high-dimensional feature matrices. We construct a pairwise dissimilarity matrix using cosine distance, a standard choice in bioinformatics. \ph{Despite its name, cosine distance is not a true metric, as it fails to satisfy the triangle inequality, but it still provides a meaningful basis from which to search for the closest tree metric.} Note that \textsc{LayeringTree}~\cite{chepoi}, which requires an unweighted graph as input, is not applicable in this setting.
We compare \ourmethod\ against the following tree fitting methods:
\textsc{TreeRep} (TR)~ \cite{treerep}, \textsc{NeighborJoin} (NJ)~ \cite{nj}, \textsc{HCCRootedTreeFit} (HCC)~ \cite{yimanna}, \textsc{LayeringTree} (LT)~\cite{chepoi}, and the classical \textsc{Gromov} (see Algorithm~\ref{gromov}). Among these, HCC, LT, and \textsc{Gromov} are pivot-based methods: for a given distance matrix, they require selecting a root node. We report the average and standard deviation of distortion over 100 runs using the same randomly sampled roots across methods.
%Our method, \ourmethod, is compared against several baselines: \textsc{Treerep} (TR) \cite{treerep}, \textsc{Neighborjoin} (NJ) \cite{nj}, \textsc{HCCRootedTreeFit} (HCC) \cite{yimanna}, \textsc{LayeringTree}(LT) \cite{chepoi}, and the classical Gromov algorithm (denoted \textsc{Gromov}, see Algorithm~\ref{gromov}). Since HCC, LT, and \textsc{Gromov} are pivot-based methods, we sampled 100 random root and report the mean and standard deviation of distortion over these runs. 
\textsc{TreeRep} does not require a root but incorporates stochastic elements; we therefore report mean and standard deviation over 100 independent runs. For \textsc{C-ELEGAN}, \textsc{CS PhD}, \textsc{Cora}, and \textsc{Airport}, we report the values of~\cite{yimanna}. In contrast, \textsc{NeighborJoin} is deterministic that neither requires a root nor involves stochasticity, and is thus evaluated once per dataset.

For \ourmethod, we perform grid search over the following hyperparameters: learning rate $\epsilon \in \{0.1, 0.01, 0.001\}$, distance regularization coefficient $\mu \in \{0.1, 0.01, 1.0\}$, and $\delta$-scaling parameter $\lambda \in \{0.01, 0.1, 1.0, 10.0\}$. We fix the number of training epochs to $T=1000$, batch size $m=32$, and vary the number of batches $K \in \{100, 500, 1000, 3000, 5000\}$. For each setting, we select the best configuration which leads to the minimal distortion. \ph{Note that Theorem \ref{betterdistortion} offers theoretical guidance on worst-case distortion guarantees, suggesting $\mu \geq 1/\log_2(n-2)$. In practice, however, we treat $\mu$ as a tunable hyperparameter, reflecting the fact that we optimize smooth surrogates rather than enforce exact hard constraints.} 

To ensure stability, we apply early stopping with a patience of 50 epochs and retain the model with the best training loss. 
\ph{In practice, our method converges in a reasonable number of iterations; detailed early stopping statistics are reported in Appendix~\ref{app:earlystopping}.}

Final results are reported in Table~\ref{table:scoredistortion}. % as mean and standard deviation of distortion over the 100 root samples corresponding to the best configuration. 
In addition to worst-case distortion defined in equation ~\eqref{eq:pb}, we evaluate the average embedding quality using the $\ell_1$ distance between the original and tree-fitted distance matrices. We also report the execution time of each method to assess their computational efficiency. Results are presented in Appendix \ref{appendix:runtimeandl1}. \ph{Note that, as highlighted by equation (\ref{fdvectorspace}), in finite-dimensional spaces, minimizing the $\ell_2$ norm offers no direct control over the $\ell_\infty$ norm, which can lead, during optimization, to lower average distortion but potentially higher worst-case distortion.} 
We also provide an analysis of its robustness and stability with respect to hyperparameter in Appendix~\ref{appendix:stability}. All implementations details are given in Appendix \ref{appendix:implem}.

Table~\ref{table:scoredistortion} demonstrates that \textsc{DeltaZero} consistently achieves the lowest $\ell_\infty$ distortion across all datasets, both for unweighted graphs and general metric spaces. Notably, the largest relative improvements are observed on \textsc{C-ELEGAN} and \textsc{ZEISEL}, with reductions in distortion of 43.8\% and 44.1\% respectively, compared to the second-best methods. On \textsc{CORA} the improvement is more modest (2.3\%), yet \textsc{DeltaZero} still outperforms all baselines. This suggests that while our approach is robust across datasets, gains vary depending on the geometry of the input metric space. Overall, the results validate the effectiveness of our optimization-based method in producing low-distortion tree metric embeddings.

\begin{table}[t]
  \centering
\caption{$\ell_\infty$ error (lower is better). Best result in bold. The last row reports the relative improvement (\%) of \textsc{DeltaZero} over the second-best method (underlined) for each dataset.}
\label{table:scoredistortion}
\vspace{1em}
  \label{tab:max_distortion}
  \resizebox{\textwidth}{!}{%
  \begin{tabular}{l|ccccc|cc}
    \toprule
     &  \multicolumn{5}{c|}{\textbf{Unweighted graphs}}  &  \multicolumn{2}{c}{\textbf{Non-graph metrics}}  \\
    \textbf{Datasets} & \textsc{C-ELEGAN} & \textsc{CS PhD} & \textsc{CORA} & \textsc{AIRPORT} & \textsc{WIKI} & \textsc{ZEISEL} & \textsc{IBD}\\
    $n$ & 452 & 1025 & 2485 & 3158 & 2357 & 3005 & 396 \\
    Diameter & 7 & 28 & 19 & 12 & 9 & 0.87 & 0.99 \\
    $\delta_X$ & 1.5 & 6.5 & 4 & 1 & 2.5 & 0.19 & 0.41 \\
    \midrule
    NJ & \underline{2.97} & 16.81 & 13.42 & 4.18 & 6.32 & 0.51 & \underline{0.90}\\
    TR & $5.90 \pm \text{\scriptsize 0.72}$ & $21.01 \pm \text{\scriptsize 3.34}$ & $16.86 \pm \text{\scriptsize 2.11}$ & $10.00 \pm \text{\scriptsize 1.02}$ & $9.97 \pm \text{\scriptsize 0.93}$  & $0.66\pm \text{\scriptsize 0.10}$ &  $1.60\pm \text{\scriptsize 0.22}$  \\
    HCC & $4.31 \pm \text{\scriptsize 0.46}$ & $23.35 \pm \text{\scriptsize 2.07}$ & $12.28 \pm \text{\scriptsize 0.96}$ & $7.71 \pm \text{\scriptsize 0.72}$ & $7.20 \pm \text{\scriptsize 0.60}$ & $0.53 \pm \text{\scriptsize 0.07}$ & $ 1.25 \pm \text{\scriptsize 0.11}$  \\
    LayeringTree & $5.07 \pm \text{\scriptsize 0.25}$ & $25.48 \pm \text{\scriptsize 0.60}$ & $\underline{7.76} \pm \text{\scriptsize 0.54}$ & $\underline{2.97} \pm \text{\scriptsize 0.26}$ & $\underline{4.08} \pm \text{\scriptsize 0.27}$  & -- & --\\
    Gromov & $3.33 \pm \text{\scriptsize 0.45}$ & $\underline{13.28} \pm \text{\scriptsize 0.61}$ & $9.34 \pm \text{\scriptsize 0.53}$ & $4.08 \pm \text{\scriptsize 0.27}$ & $5.54 \pm \text{\scriptsize 0.49}$  & $\underline{0.43} \pm \text{\scriptsize 0.02}$ & $ 1.01 \pm \text{\scriptsize 0.04}$   \\
    \ourmethod & $\mathbf{1.87 \pm \text{\scriptsize 0.08}}$ & $\mathbf{10.31 \pm \text{\scriptsize 0.62}}$ & $\mathbf{7.59 \pm \text{\scriptsize 0.38}}$ & $\mathbf{2.79} \pm \text{\scriptsize 0.15}$ & $\mathbf{3.56 \pm \text{\scriptsize 0.20}}$ & $\mathbf{0.24 \pm \text{\scriptsize 0.00}}$ & $\mathbf{0.70 \pm \text{\scriptsize 0.03}}$ \\
    \midrule
    Improvement (\%) & 43.8\% & 22.3\% & 2.3\% & 6.0\% & 12.7\% & 44.1 \% & 22.2\% \\
    \bottomrule
  \end{tabular}}
\end{table}

\section{Conclusion and Discussion}
%In this work, we have presented a new method that computes a tree metric from a given graph metric or distances from an embedding. It is based on a soft and differentiable approximation of $\delta$-Hyperbolicity, which measures the tree-likeness of the metric. First results show the competiteveness of our method to produce tree metrics with low-distortion on a variety of datasets. One potential limitations of our approach is that it does not scale to very large graphs, mostly because its complexity is driven by a projection of the cone of metrics, which amounts to solve an all-pairs shortest path problems, which best known complexity is cubical in number of vertices. A possible path to alleviate this burden would be to consider optimizing over an embedding ({\em e.g.} in an hyperbolic space) associated with a natural metric to ensure by construction that we always stay in the cone of metrics. 

%Potential follow-up works are numerous and involve the use of the soft $\delta$-Hyperbolicity in other machine learning contexts. For example, its connections to a notion of curvature in the graph like the Ollivier-Ricci curvature~\cite{} could help in designing novel re-wiring procedures. We envision also that it could also be used in a context of tree-discovery or embeddin~\cite{}. 

\ph{While our primary focus has been on distance approximation, we believe our method has strong potential for standard graph-representation tasks. By producing a surrogate distance metric that captures global and hierarchical structure in a differentiable, geometry-aware manner, our framework provides a rich structural signal that can be exploited across multiple settings. For example, one could embed our optimized metric into hyperbolic spaces to obtain high-quality node embeddings, or feed it directly to downstream neural models.}

\ph{In node classification, the tree-like structures induced by our method may uncover latent hierarchies or community structure, an intuition validated by our experiments on stochastic block models, where \textsc{DeltaZero} helps accurately recovering the underlying partitions. Beyond synthetic settings, this makes our approach especially well-suited for real-world discovery tasks such as biological taxonomies, document hierarchies, or social networks. Extending our method to such domains would be natural continuation of the present work.}

\ph{Our framework can also be integrated as a pre-processing step or regularizer in graph neural networks, where it provides additional structural bias. This is particularly relevant for addressing the over-squashing phenomenon: Gromov hyperbolicity is tied to negative sectional curvature, while over-squashing has been linked to negative Ollivier–Ricci curvature \cite{pmlr-v202-nguyen23c}. By bridging these notions of curvature, our method could offer a principled avenue to investigate, and potentially mitigate such effects, opening the door to curvature-aware re-wiring procedures and architectural design.}

\section*{Acknowledgements}
This project was supported  by the ANR project OTTOPIA ANR-20-CHIA-0030. The authors declare no competing financial interests.

\bibliographystyle{plainnat}
\bibliography{biblio.bib}

\newpage
\appendix

\section{Complementary qualitative and visual results}\label{sec:comp}
We provide in Figure~\ref{fig:complementary} complementary qualitative results obtained on simple unweighted graphs, and details about the generation procedure. 
The original graph is an unweighted graph, with constant unit edge weights, generated following different graph models:  Figures~\ref{fig:sub1} and~\ref{fig:sub2} correspond to random connected Erd\H{o}s--R\'enyi graph with 10 nodes, Figure~\ref{fig:sub3} is a stochastic block model graph with 2 communities and 40 nodes, generated with an intra-community probability of $0.5$ and an inter-community probability of $0.05$, and finally Figure~\ref{fig:sub4} is a $(4,4)$-grid graph, with a total of 16 nodes. The original nodes are depicted in orange in the figures, whereas additional nodes, when provided by the method, are depicted in blue. The root node used for generating the corresponding tree is depicted in pink, and is set to be the same for all the methods. In every visualizations, the original nodes positions are kept fixed, and the added nodes positions are computed using the \textsc{SpringLayout} from the NetworkX package \cite{networkx}. 

In terms of distortion, our method gives the best results in 3 out of 4 cases, and ranks second in the grid-like graph case, where \textsc{LayeringTree} performs best.

\begin{figure}[!p]
    \centering
    \begin{subfigure}[b]{\textwidth}
        \centering
        \includegraphics[width=\textwidth]{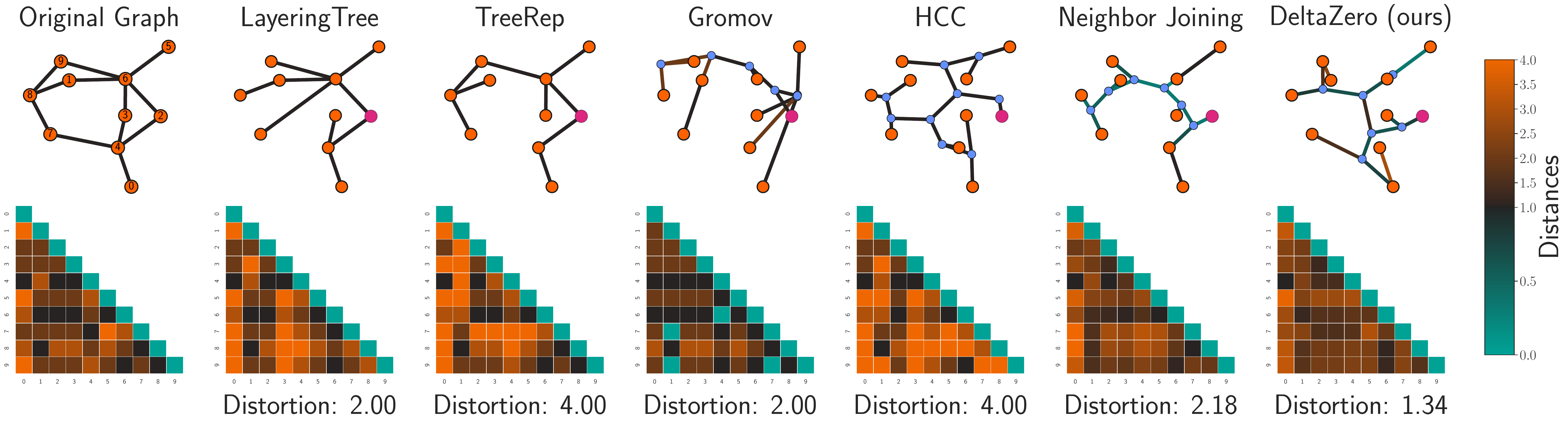}
        \caption{Connected Erd\H{o}s--R\'enyi graph with 10 nodes}
        \label{fig:sub1}
    \end{subfigure}
    \hfill
    \begin{subfigure}[b]{\textwidth}
        \centering
        \includegraphics[width=\textwidth]{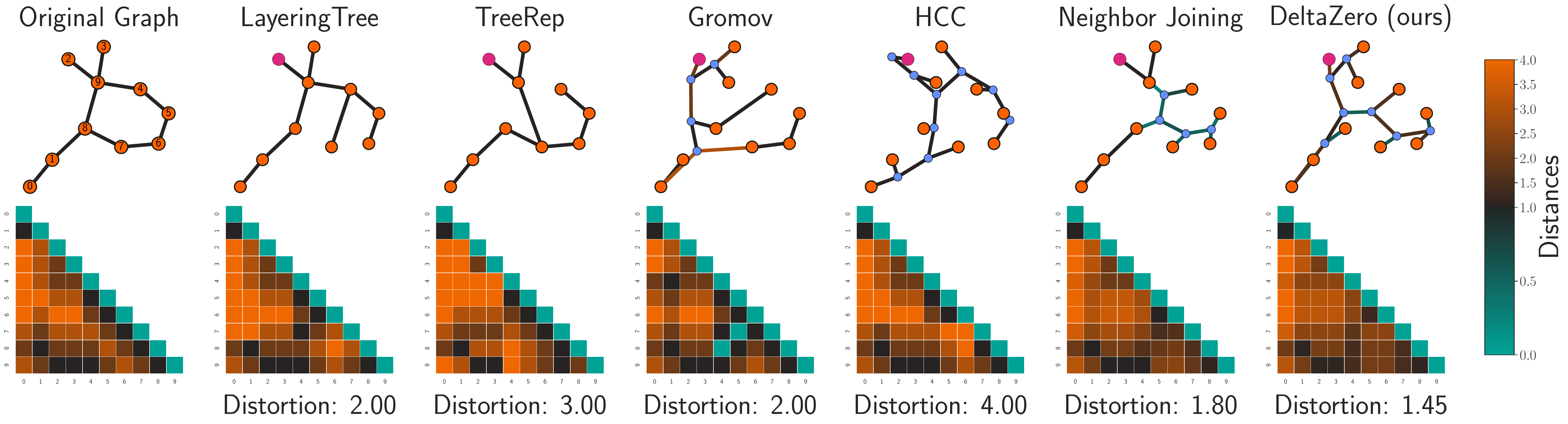}
        \caption{Another connected Erd\H{o}s--R\'enyi graph with 10 nodes}
        \label{fig:sub2}
    \end{subfigure}
    \hfill
    \begin{subfigure}[b]{\textwidth}
        \centering
        \includegraphics[width=\textwidth]{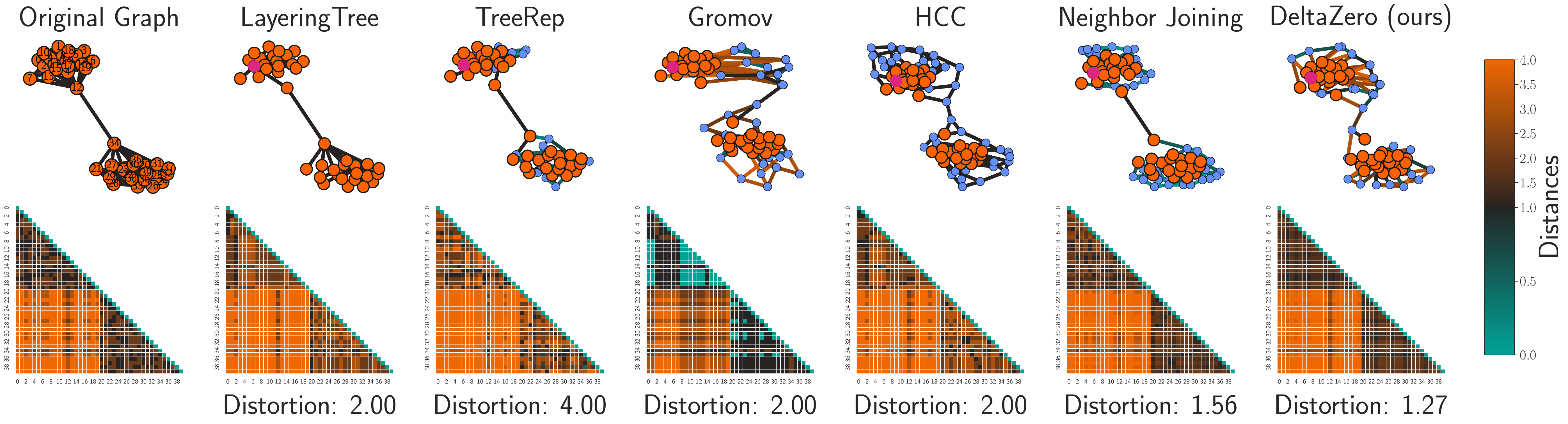}
        \caption{Stochastic block model  with 2 components and 40 nodes}
        \label{fig:sub3}
    \end{subfigure}
    \hfill
    \begin{subfigure}[b]{\textwidth}
        \centering
        \includegraphics[width=\textwidth]{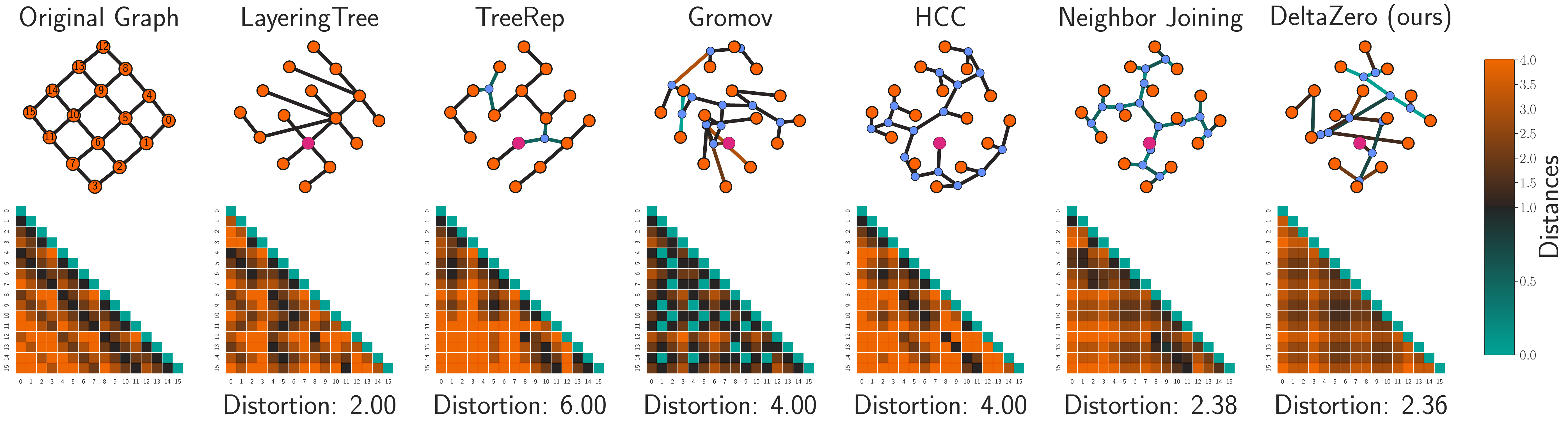}
        \caption{$(4,4)$-grid graph}
        \label{fig:sub4}
    \end{subfigure}
    \caption{Visual illustration of the different methods on different graphs.}
    \label{fig:complementary}
\end{figure}

\section{Gromov's embedding}\label{gromovembed}
The tree metric \(d_T\), obtained by the embedding $\Phi$ given by Theorem \ref{gromov}, is characterized via Gromov products with respect to a basepoint \(w\), which we denote by
\[
(x|y)_w' := \frac{1}{2} \left( d_T(\Phi(x),\Phi(w)) + d_T(\Phi(y),\Phi(w)) - d_T(\Phi(x),\Phi(y)) \right).
\]
This yields the identity
\[
d_T(\Phi(x),\Phi(y)) = (x|x)_w + (y|y)_w - 2(x|y)_w'.
\]
Gromov's argument~\cite[Page~156]{Gromov1987} further shows that the Gromov product \((x|y)_w'\) can be expressed as
\[
(x|y)'_w = \max_{\bar{y} \in P_{x,y}} \left\{ \min_k (y_k | y_{k+1})_w \right\},
\]
where \(P_{x,y}\) denotes the set of finite sequences \(\bar{y} = (y_1, \dots, y_\ell)\) in \(X\) with \(\ell \geq 2\), \(x = y_1\), and \(y = y_\ell\).

It is important to note that in Gromov’s construction, the resulting tree metric space $(T, d_T)$ may include intermediate points that are not present in the original set $X$, but are introduced in the tree to satisfy the metric constraints and to preserve approximate distances (see Figure \ref{fig:tripod}). These points serve as internal branching nodes that allow better approximation of the original distances under the tree metric.

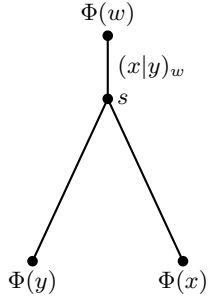
\begin{figure}[h!]
    \centering
    \begin{tikzpicture}[scale=1, every node/.style={font=\small}]
        % Coordinates
        \coordinate (fw) at (0,3);
        \coordinate (s) at (0,2.16);
        \coordinate (fy) at (-1,0);
        \coordinate (fx) at (1,0);

        % Dots
        \fill (fw) circle (2pt);
        \fill (s) circle (2pt);
        \fill (fy) circle (2pt);
        \fill (fx) circle (2pt);

        % Lines with edge label
        \draw[thick] (fw) -- node[right] {$(x|y)_w$} (s);
        \draw[thick] (s) -- (fy);
        \draw[thick] (s) -- (fx);

        % Node labels
        \node[above] at (fw) {$\Phi(w)$};
        \node[right] at (s) {$s$};
        \node[below] at (fy) {$\Phi(y)$};
        \node[below] at (fx) {$\Phi(x)$};
    \end{tikzpicture}
    \caption{
    Gromov tree embedding of the points in Euclidean space: $x = (1, 0)$, $y = (-1, 0)$, and $w = (0, 3)$. 
    A intermediate node $s$ is introduced at a position such that it lies at distance $(x|y)_w$ from $w$.
    }

    \label{fig:tripod}
\end{figure}

To approximate an optimal ultrametric tree embedding of a finite metric space \((X, d)\), one can leverage the \emph{single linkage hierarchical clustering} (SLHC) algorithm, which is known to yield a 2-approximation to the optimal ultrametric tree embedding \cite{SLHC}. The SLHC algorithm returns a new metric \(u_X\) on \(X\), defined for all \(x, y \in X\) by:
\[
u_X(x,y) = \min_{\bar{y} \in P_{x,y}} \left\{ \max_k d(y_k, y_{k+1}) \right\},
\]
where \(P_{x,y}\) denotes the set of finite sequences \(\bar{y} = (y_1, \dots, y_\ell)\) such that \(y_1 = x\), \(y_\ell = y\), and \(y_i \in X\) for all \(i\).
To relate this to a tree embedding via Gromov products, fix a base point \(w \in X\), and define
\[
m := \max_{x \in X} d(x, w).
\]
Then the function \(d_G : X \times X \to \mathbb{R}\) given by
\[
d_G(x, y) :=
\begin{cases}
m - (x|y)_w & \text{if } x \neq y, \\
0 & \text{if } \text{otherwise} \\
\end{cases}
\]
defines a valid metric on \(X\). Applying SLHC to the metric space \((X, d_G)\) yields an ultrametric space \((X, u_X)\). Finally, we define the tree metric as
\[
d_T(x, y) := m - d_G(x, y).
\]
One can easily check that this procedure gives the same tree metric than the embedding given by Gromov.
\begin{algorithm}[H]
\caption{\textsc{Gromov}} \label{gromov}
\begin{algorithmic}[1]
\Require Finite metric space \((X, d)\), base point \(w \in X\)
\Ensure Tree metric \(d_T\) on \(X\)
\State Compute Gromov products: \((x|y)_w \gets \frac{1}{2} \left(d(x,w) + d(y,w) - d(x,y)\right)\)
\State Set \(m \gets \max_{x \in X} d(x, w)\)
\State Define \(d_G(x,y) \gets m - (x|y)_w\) if $x \neq y$, $0$ otherwise
\State Apply SLHC to \((X, d_G)\) to obtain an ultrametric \(u_X\)
\State Define tree metric: \(d_T(x,y) \gets m - u_X(x,y)\)
\State \Return \(d_T\)
\end{algorithmic}
\end{algorithm}

Once the tree metric has been obtained through this procedure, one may then invoke the method described in \cite{buneman} to reconstruct a tree spanning the set $X$. This reconstruction yields both the intermediate nodes and the leaves, the latter corresponding to all elements of $X$ except the designated root.

\section{Proofs of Theoretical Results} \label{app:proofsTheo}
\subsection{Proof of Theorem \ref{betterdistortion}}\label{sec:proofbetterdistortion}
For a fixed $\mu$, let $(X^*,d_{X^*})$ be a minimizer of~\eqref{eq:lagrangian}. We have, for any $(X, d_X)$, 
\begin{equation*}\label{eq:gd}
\mu \|d_X - d_{X^*}\|_{\infty} + \delta_{X^*} \leq  \delta_{X}.
\end{equation*}
Applying the Gromov tree embedding $\Phi$ to $(X^*,d_{X^*})$  and denoting the resulting metric tree by $(T^*,d_{T^*})$ we get the following bound using Theorem \ref{Gromov}
\begin{equation*}
\label{eq:gromov}
d_{X^*}(x,y)-2 \left( \delta_X - \mu \|d_X - d_{X^*}\|_\infty \right) \log_2(n-2) \leq d_{T^*}(\Phi(x),\Phi(y)) \leq d_{X^*}(x,y) \text{ for all } x,y \in X. 
\end{equation*}

Combining this with the triangle inequality yields
\begin{align*}
\|d_X - d_{T^*}\|_\infty &\le \|d_X - d_{X^*}\|_\infty + \|d_{X^*} - d_{T^*}\|_\infty \\
&\le \|d_X - d_{X^*}\|_\infty + 2 \left( \delta_X - \mu \|d_X - d_{X^*}\|_\infty \right) \log_2(n-2) \\
& \le 2 \delta_X \log_2(n-2) + \left(1 - 2 \log_2(n-2) \mu \right) \|d_X - d_{X^*}\|_\infty.
\end{align*}
Thus, we have
\begin{equation*}
 d_X(x,y)- \left(2 \delta_X \log_2(n-2) + \left(1 - 2 \log_2(n-2) \mu \right) \|d_X - d_{X^*}\|_\infty \right) \leq d_{T^*}(\Phi(x), \Phi(y)).
\end{equation*}

For the second inequality, we have $\forall x,y, \ d_{T}(\Phi(x), \Phi(y)) \leq d_{X^{*}}(x,y) \leq d_{X}(x,y)$. The first inequality is true by Gromov theorem and the second due to the constraints in the optimization problem \eqref{eq:lagrangian}. 

As long as $\mu \ge \frac{1}{2 \log_2(n-2)}$, the worst-case distortion achieved by applying the Gromov embedding directly to $X$ is improved.

\subsection{Proof of Proposition \ref{approxbound}}
To control the approximation error introduced by the smooth maximum operator, we first state a standard inequality satisfied by the $\text{LSE}_\lambda$ function.
\begin{proposition}\label{minmaxineq}
For $\lambda >0$, the $\emph{LSE}_\lambda$ function satisfies the following inequalities
\begin{align*}
\max \mathbf{x} &\leq \mathrm{LSE}_\lambda(\mathbf{x}) \leq \max \mathbf{x} + \frac{\log(n)}{\lambda}, \\
\min \mathbf{x} - \frac{\log(n)}{\lambda} &\leq \mathrm{LSE}_{-\lambda}(\mathbf{x}) \leq \min \mathbf{x}.
\end{align*}
\end{proposition}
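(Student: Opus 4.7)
The proof is a routine exercise in bounding sums of exponentials, so the plan is simply to exhibit the two obvious estimates on the sum and apply the logarithm. Let $n$ denote the length of the vector $\mathbf{x}$ and set $M = \max \mathbf{x}$, $m = \min \mathbf{x}$.

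For the first line, I would start from the identity $\mathrm{LSE}_\lambda(\mathbf{x}) = \frac{1}{\lambda} \log \sum_i e^{\lambda x_i}$ and bracket the sum as
\[
e^{\lambda M} \;\leq\; \sum_{i=1}^n e^{\lambda x_i} \;\leq\; n\, e^{\lambda M},
\]
where the lower bound keeps only the term achieving the maximum, and the upper bound replaces every $x_i$ by $M$. Taking $\tfrac{1}{\lambda}\log(\cdot)$, which is monotone since $\lambda>0$, immediately yields $M \leq \mathrm{LSE}_\lambda(\mathbf{x}) \leq M + \tfrac{\log n}{\lambda}$.

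For the second line, the cleanest route is to observe the symmetry $\mathrm{LSE}_{-\lambda}(\mathbf{x}) = -\mathrm{LSE}_{\lambda}(-\mathbf{x})$, which follows directly from the definition. Since $\max(-\mathbf{x}) = -m$, applying the bound just proved to the vector $-\mathbf{x}$ gives $-m \leq \mathrm{LSE}_\lambda(-\mathbf{x}) \leq -m + \tfrac{\log n}{\lambda}$; negating (which reverses the inequalities) yields $m - \tfrac{\log n}{\lambda} \leq \mathrm{LSE}_{-\lambda}(\mathbf{x}) \leq m$. Alternatively one can carry out the same sandwich $e^{-\lambda m} \leq \sum_i e^{-\lambda x_i} \leq n e^{-\lambda m}$ directly, take logarithms, and divide by $-\lambda$ while being careful that division by a negative number flips the inequalities.

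There is no real obstacle here: the only point deserving a line of care is the sign handling when dividing by $-\lambda$ (or equivalently, the minus sign in the symmetry identity), which is why I would favour the symmetry argument as it makes the flip transparent. The whole proof fits comfortably in half a page.
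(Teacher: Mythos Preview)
Your argument is correct and matches the paper's proof essentially verbatim: the paper bounds $e^{\lambda M} \le \sum_i e^{\lambda x_i} \le n e^{\lambda M}$ and divides by $\lambda$, exactly as you do, and leaves the $\min$ case implicit. Your use of the symmetry $\mathrm{LSE}_{-\lambda}(\mathbf{x}) = -\mathrm{LSE}_{\lambda}(-\mathbf{x})$ to derive the second line is a clean way to make that implicit step explicit.
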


\begin{proof}
Let \(m = \max_i x_i\). Then we have
\[
e^{\lambda m} \leq \sum_{i=1}^n e^{\lambda x_i} \leq n e^{\lambda m}.
\]
Taking the logarithm and dividing by \(\lambda > 0\), we obtain
\[
m \leq \mathrm{LSE}_\lambda(\mathbf{x}) \leq m + \frac{\log(n)}{\lambda}. \qedhere
\] 
\end{proof}

We are now ready to use this bound to control the smooth approximation of Gromov’s four-point condition and prove proposition \ref{approxbound}.

In \(X\), there are \(n^4\) ordered 4-tuples of points with replacement.  
By proposition \ref{minmaxineq}, for any fixed \(x, y, z, w\), we have
\[
\min\{(x|y)_w, (y|z)_w\} - (x|z)_w - \frac{\log(2)}{\lambda} 
\leq \mathrm{LSE}_{-\lambda}((x|y)_w, (y|z)_w) - (x|z)_w 
\]
and
\[
\max_{x,y,z,w} \left\{ \mathrm{LSE}_{-\lambda}\left((x|y)_w, (y|z)_w\right) - (x|z)_w \right\}
\leq \delta_{X}^{(\lambda)} \leq \delta_X +\frac{4 \log(n)}{\lambda}.
\]
Combining those inequalities, we obtain the desired result.

\subsection{Proof of Theorem \ref{random_delta}}\label{sec:proofdeltarandom}
For every \( n \in \N \), let \( X = \{x_1, \dots, x_n\} \) be a fixed \( n \)-point set.

We consider a \emph{random metric space} on \( n \) points. This means we fix a probability measure 
\( \nu \) on \( \mathcal{M}_n \), the set of all metric spaces over \( n \) points.

Since a metric space on \( n \) points is completely determined by the distances between each pair 
of points, we can think of \( \nu \) as a probability law on the set of all possible distance matrices 
that satisfy the metric conditions (non-negativity, symmetry, triangle inequality, and zero diagonal).

So, when we say that a random variable \( X \) has law \( \nu \), we mean that we are choosing 
the distances between the \( n \) points at random, according to \( \nu \). In this way, 
the law \( \nu \) is a \emph{law on the distances} between the points.

\begin{definition}\label{def:uniformmetric}
We say that a random metric space $(X,d)$ (with law $ \nu $)
is \emph{uniform} if the random variables $d(x_i,x_j) $ for $  x_i,x_j \in X $
are independent and identically distributed, following a common law $ \ell$. We say that $ \ell $ is the
\emph{law of distances} of the uniform random metric space.
\end{definition}

In this class of random metric spaces, we have the following asymptotics
for $\delta_{X}^{(\lambda)}$.

\begin{proposition}
	\label{prop:lln}
	Let $ \ell $
	be a probability measure on
	$ \R_{+} $
	(with compact support).
	Let $ \nu_{n} $
	be a probability measure
	on $ \mathcal{M}_n $
	such that the associated random metric space
	is uniform,
	with law of distances
	given by $ \ell $.
	Let
	$(X,d)$
	be a random metric
	following the law $ \nu_{n} $.
	Then,
	when $ n $ goes to infinity
	and $ \lambda = O(\log n) $,
	we have
	\begin{equation*}
		\Proba
		\left(
			\left|
				\delta_{X}^{(\lambda)}
				-
				\frac{1}{\lambda}
				( \log \mu + 4 \log n)
		    \right|
			\ge
			\varepsilon
		\right)
		\le
		% \frac{e^{o(\lambda)} }{\varepsilon^{2} n}
		\frac{1}{\varepsilon^{2} n^{1 - o(1)} }
		,
	\end{equation*}
	where $ \mu $
	depends on the law $ \ell $
	and $ \lambda $,
	but not on $ n $ and $\varepsilon$.
\end{proposition}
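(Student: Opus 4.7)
The plan is to unfold the logarithm in the definition and reduce the statement to a second-moment concentration argument for a sum of weakly correlated random variables. Writing $\delta_X^{(\lambda)} = \tfrac{1}{\lambda} \log S$ with
\[
S = \sum_{(x,y,z,w) \in X^4} T_{xyzw}, \qquad T_{xyzw} = \ratio{x}{y}{z}{w},
\]
I note that for any $4$-tuple with four distinct entries, $T_{xyzw}$ is a deterministic function of the six i.i.d.\ distances among $\{x,y,z,w\}$, so all such non-degenerate terms share a common law. I would set $\mu := \E[T_{xyzw}]$. The $O(n^3)$ degenerate tuples, together with the compact support of $\ell$ (which forces $T \le e^{C\lambda}$ for some $C$ depending only on $\mathrm{supp}(\ell)$), contribute only a lower-order correction, so that $\E[S] = n^4 \mu\,(1 + O(1/n))$.

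The heart of the proof is a variance estimate. The key observation is that $T_{xyzw}$ and $T_{x'y'z'w'}$ are independent whenever the underlying point sets $\{x,y,z,w\}$ and $\{x',y',z',w'\}$ share at most one element, since then the six distances driving each term are disjoint random variables. A direct count gives that the number of ordered pairs of $4$-tuples sharing at least two underlying points is $O(n^{6})$, and each covariance is controlled by $\E[T^2] = O(e^{2C\lambda}) = n^{o(1)}$ in the regime $\lambda = O(\log n)$. Adding the diagonal terms, this yields $\mathrm{Var}(S) = O(n^{6 + o(1)})$.

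Once the variance bound is in place the rest is routine. Chebyshev's inequality gives
\[
\Proba\!\left( \abs{S - \E S} \ge t\, \E S \right) \le \frac{\mathrm{Var}(S)}{t^2 \E[S]^2} = O\!\left( \frac{n^{-2 + o(1)}}{t^2 \mu^2} \right),
\]
and on the complementary event, $\abs{\log S - \log \E S} \le 2t$ for $t \le 1/2$. I would then pick $t \asymp \lambda \varepsilon$ so that $2t/\lambda \le \varepsilon$, and expand $\log \E S = 4 \log n + \log \mu + O(1/n)$ to obtain the claimed approximation. Absorbing the $\lambda^2$ and $\mu^{-2}$ factors into an $n^{o(1)}$ slack produces the stated tail $1/(\varepsilon^2 n^{1-o(1)})$.

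The main obstacle I anticipate is the variance step. It requires both a careful enumeration of the correlated pairs of $4$-tuples to confirm that the exponent $6$ (rather than $7$) governs the count, and a verification that $\E[T^2]$ remains sub-polynomial, a point at which the compact-support hypothesis on $\ell$ and the restriction $\lambda = O(\log n)$ with a small enough implicit constant are essential. Once this is settled, translating concentration of $S$ through the $\tfrac{1}{\lambda}\log$ map is mechanical.
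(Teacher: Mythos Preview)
Your proposal follows the same architecture as the paper's proof: write $\delta_X^{(\lambda)} = \tfrac{1}{\lambda}\log S$, control the variance of $S$ by counting correlated pairs, apply Chebyshev, and push the deviation through the logarithm. You even sharpen one step: your observation that $T_{q_1}$ and $T_{q_2}$ are already independent when the underlying point sets share at most \emph{one} element (because the six pairwise distances feeding each term are then disjoint i.i.d.\ variables) is correct and stronger than what the paper uses. The paper only exploits disjointness of the quadruples and therefore counts $O(n^{7})$ correlated pairs, whereas your count $O(n^{6})$ is valid. Plugged into the correct moment bound, this would in fact yield the stronger tail $1/(\varepsilon^{2}n^{2-o(1)})$.

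There is, however, a genuine gap in the moment step. The claim $\E[T^{2}] = O(e^{2C\lambda}) = n^{o(1)}$ is false as written: for $\lambda = O(\log n)$ with a fixed implied constant, $e^{2C\lambda}$ is $n^{O(1)}$, not $n^{o(1)}$, and your proposed remedy ``small enough implied constant'' would force $\lambda = o(\log n)$, a strictly stronger hypothesis than the proposition allows. The same issue resurfaces when you try to absorb $\mu^{-2}$ into an $n^{o(1)}$ slack: without a lower bound on $\mu$ you cannot do this. What the paper supplies here is a short lemma, based on the elementary limit $(\E[Y^{\lambda}])^{1/\lambda}\to\operatorname{ess\,sup}Y$ for bounded positive $Y$, showing that for every $p$,
\[
\bigl(\E[T_q^{p}]\bigr)^{1/p} \;=\; e^{\lambda\max\delta_q}\,e^{o(\lambda)} \;\le\; \mu\,e^{o(\lambda)}.
\]
This gives $\E[T^{2}]\le \mu^{2}\,n^{o(1)}$ rather than just $n^{O(1)}$; the extra $\mu^{2}$ then cancels against $\E[S]^{2}\sim n^{8}\mu^{2}$ in Chebyshev, and no separate control of $\mu^{-2}$ is needed. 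This moment lemma is precisely the missing ingredient in your sketch; once it is inserted, your argument goes through and, thanks to the sharper independence count, actually improves the paper's exponent by one.
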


\begin{proof}[Proof of Proposition \ref{prop:lln}]
	We use the standard convention that the letter $ C $
	denotes a universal constant, whose precise value may change from line to line.

	Recall from (\ref{eq:delta_smooth}) that
	$\delta_{X}^{(\lambda)}$ can be written explicitly as
	\begin{equation*}
		\delta_{X}^{(\lambda)}
		=
		\frac{1}{\lambda}
		\log
		\sum_{(x,y,z,w) \in X^{4} } 
		\ratio{x}{y}{z}{w}
		,
	\end{equation*}
	where the Gromov product are taken
	with respect to the metric $d$.
	We denote by
	$ f(x,y,z,w) $
	the term
	\[\ratio{x}{y}{z}{w}.\]

We start by a preliminary estimate on
$ f $.
We introduce some notations.
We note
\[
\delta_{x,y,z,w} 
=
\min
((x|y)_w, (y|z)_w) - (x|z)_w.
\]
We say that a quadruple
\( q = (x, y, z, w) \in X^4 \)
is \emph{generic} if the elements
\( x, y, z, w \) are pairwise distinct.
Observe that if \( q_1 \) and \( q_2 \) are both generic,
then \( f(q_1) \) and \( f(q_2) \) are identically distributed.
In particular,
\[
\mathbb{E}[f(q_1)] = \mathbb{E}[f(q_2)],
\]
and we denote this common expectation by \( \mu \).
\begin{lemma}
	For every $ p \in \N $,
	we have the following
	asymptotics
	when $ \lambda $ goes to infinity
	\begin{equation*}
		\E
		\left[ 
			f(q)^{p}
		\right]^{\frac{1}{p}} 
		=
		e^{\lambda \max \delta_q} 
		e^{o(\lambda)} 
		,
	\end{equation*}
	% and $ \max \delta_{q} $ is maximal
	% when $ q $ is generic, hence
	and
	\begin{equation*}
		\E
		\left[ 
			f(q)^{p}
		\right]^{\frac{1}{p}} 
		\le
		\mu
		e^{o(\lambda)} 
		.
	\end{equation*}
\end{lemma}
\begin{proof}
	Recall that
	\begin{equation*}
		f(x,y,z,w)
		=
		\ratio{x}{y}{z}{w}
		=
		\exp
		\left( 
			\lambda
			(
			\mathrm{LSE}_{-\lambda}
			((x|y)_w, (y|z)_w) - (x|z)_w
			)
		\right)
		,
	\end{equation*}
	hence by Proposition \ref{minmaxineq}:
	\begin{equation*}
		\exp
		\left( 
			\lambda
			\left(
			\delta_{x,y,z,w} 
			-
			\frac{\log 2}{\lambda}
			\right)
		\right)
		\le
		f(x,y,z,w)
		\le
		\exp
		\left( 
			\lambda
			\delta_{x,y,z,w} 
		\right)
		% ,
		.
	\end{equation*}
	% where
	% $
	% \delta_{x,y,z,w} 
	% =
	% \min
	% ((x|y)_w, (y|z)_w) - (x|z)_w
	% $.
	We conclude that
	\begin{equation*}
		C_{1}
		e^{\lambda \delta_{q}}
		\le
		f(q)
		\le
		C_{2}
		e^{\lambda \delta_{q}}
		,
	\end{equation*}
	for universal constants $ C_1,C_2 $.
	% hence it suffices to work with
	% $ e^{\lambda \delta_{q}} $ in the estimates.

	We recall the classical limit
	\begin{equation*}
		\lim
		_{\lambda \to \infty} 
		\left( 
		\mathbb{E}\left[ X^{\lambda} \right] 	
		\right)^{\frac{1}{\lambda}} 
		=
		\max X
		,
	\end{equation*}
	which holds for a positive, bounded
	random variable,
	which we can rewrite
	\begin{equation*}
		\mathbb{E}\left[X^{\lambda}\right] 	
		=
		(\max X)^{\lambda}
		e^{o(\lambda)} 
		.
	\end{equation*}
	Applying that to
	$ X = e^{p \delta_{q}} $ we obtain
	\begin{align*}
		% \mathbb{E}\left[e^{\lambda \delta_q}\right] 	
		% &=
		% e^{\lambda \max \delta_q}
		% e^{o(\lambda)}, \\
		\left( \mathbb{E}\left[ e^{\lambda p \delta_q}\right] 	 \right)^{\frac{1}{p}}
		&=
		e^{\lambda \max \delta_q}
		e^{o(\lambda)},
	\end{align*}
	and thus
	\begin{align*}
		% \mathbb{E}\left[f(q)\right] 	
		% &=
		% e^{\lambda \max \delta_q}
		% e^{o(\lambda)}, \\
		\left( \mathbb{E}\left[ f(q)^p \right]\right)^{\frac{1}{p}} 	
		&=
		e^{\lambda \max \delta_q}
		e^{o(\lambda)}.
	\end{align*}
	% Finally, we get that:
	% \begin{equation*}
	% 	\left( 
	% 		\mathbb{E}\left[ \bar{f}(q)^{2}\right] 
	% 	\right)^{\frac{1}{2}} 
	% 	\le
	% 	e^{\lambda \max \delta_q}
	% 	e^{o(\lambda)}
	% 	=
	% 	\mathbb{E}\left[f(q)\right] 
	% 	e^{o(\lambda)} 
	% 	.
	% \end{equation*}
	Observe that the law of the random variable
	$ \delta_{q} $,
	for $ q = (x,y,z,w) $,
	depends only of the
	configuration
	of which $ x,y,z,w $
	are equal between them.
	In particular,
	there is a finite number of possibilities
	for $ \max \delta_{q} $
	(which depend only on the law $ \ell $).
	It is tedious but easy to check that
	$ \max \delta_{q} $ is maximum
	when $ q $ is generic.
	Recalling that $ \mu $
	is a notation for
	$ \mathbb{E}\left[f(q)\right]  $
	when $ q $ is generic,
	we conclude that:
	\begin{equation*}
		\left( 
			\mathbb{E}\left[ \bar{f}(q)^{p}\right] 
		\right)^{\frac{1}{p}} 
		\le
		\mu
		e^{o(\lambda)} 
		. \qedhere
	\end{equation*}
\end{proof}

We now study the sum
\[
S_n = \sum_{(x,y,z,w) \in X^4} f(x,y,z,w).
\]
If the terms of the sum were independent random variables, we could directly apply standard limit theorems. However, this is not the case: each term \( f(x,y,z,w) \) depends only on the pairwise distances between the four points \( x, y, z, w \). In particular, two terms \( f(x,y,z,w) \) and \( f(x',y',z',w') \) are not independent whenever the sets of their arguments intersect, i.e.,
\[
\{x, y, z, w\} \cap \{x', y', z', w'\} \neq \emptyset.
\]

To formalize this, for two 4-tuples \( q_1 = (x, y, z, w) \) and \( q_2 = (x', y', z', w') \) in \( X_n^4 \), we write \( q_1 \perp q_2 \) to denote that they are disjoint
\[
\{x, y, z, w\} \cap \{x', y', z', w'\} = \emptyset.
\]

Despite the dependencies, the situation remains favorable. The number of dependent pairs,
\[
\{(q_1, q_2) \in (X^4)^2 : q_1 \not\perp q_2\},
\]
is bounded by \( C n^7 \), while the total number of pairs in \( (X^4)^2 \) is \( n^8 \). Thus, most of the terms are approximately independent, which allows for concentration techniques and asymptotic analysis.

We adapt the classical proof of the weak law of large numbers to this weakly dependent setting using Chebyshev’s inequality. To that end, we compute the variance of the sum
\[
S_n = \sum_{(x,y,z,w) \in X_n^4} f(x,y,z,w).
\]
It is convenient to consider the centered version
\[
\bar{S}_n = \sum_{(x,y,z,w) \in X_n^4} \bar{f}(x,y,z,w),
\quad \text{where} \quad
\bar{f}(x,y,z,w) = f(x,y,z,w) - \mathbb{E}[f(x,y,z,w)],
\]
so that \( \mathbb{E}[\bar{S}_n] = 0 \). The variance is then
\begin{align*}
\mathbb{E}[\bar{S}_n^2]
&= \sum_{(q_1, q_2) \in (X^4)^2} \mathbb{E}[\bar{f}(q_1)\bar{f}(q_2)] \\
&= \sum_{q_1 \perp q_2} \mathbb{E}[\bar{f}(q_1)\bar{f}(q_2)]
+ \sum_{q_1 \not\perp q_2} \mathbb{E}[\bar{f}(q_1)\bar{f}(q_2)].
\end{align*}

Since \( q_1 \perp q_2 \) implies independence and \( \mathbb{E}[\bar{f}(q_1)] = 0 \), the first sum vanishes. Thus, we are left with
\[
\mathbb{E}[\bar{S}_n^2] = \sum_{q_1 \not\perp q_2} \mathbb{E}[\bar{f}(q_1)\bar{f}(q_2)].
\]
By Cauchy-Schwarz inequality we have
\begin{equation*}
	\mathbb{E}\left[ \bar{f}(q_1)\bar{f}(q_2) \right]
	\le
	\left( 
		\mathbb{E}\left[  \bar{f}(q_1)^2 \right]
		\mathbb{E}\left[  \bar{f}(q_2)^2 \right]
	\right)^{\frac{1}{2}} 
	,
\end{equation*}
% hence it is enough to show that
% $
% (
% \mathbb{E}\left[  \bar{f}(q)^2 \right]
% )^{\frac{1}{2}} 
% $
% is bounded by
% $ \mu
% e^{o(\lambda)} $.
and by the triangle inequality
\begin{equation*}
	(
	\mathbb{E}\left[\bar{f}(q)^2
	\right])^{\frac{1}{2}} 
	=
	\mathbb{E}\left[ (f(q)-\mathbb{E}[f(q)])^2
	\right]^{\frac{1}{2}} 
	\le
	\mathbb{E}\left[f(q)^2
	\right]^{\frac{1}{2}} 
	+ \mathbb{E}\left[f(q) \right]
	.
\end{equation*}
Applying the Lemma,
both terms on the right hand side
are bounded by $ \mu e^{o(\lambda)} $,
hence
\begin{equation*}
	(
	\mathbb{E}\left[\bar{f}(q)^2
	\right])^{\frac{1}{2}} 
	\le
	\mu e^{o(\lambda)} 
	.
\end{equation*}
Using that
the number of pairs with
$ q_1 \not \perp q_2 $
is bounded by $ C n^{7} $
we finally obtain the variance bound
\begin{equation*}
\mathbb{E}[\bar{S}_n^2] \le \mu^2 e^{o(\lambda)}  n^7.
\end{equation*}

	We recall Chebyshev's classical inequality:
	\begin{equation*}
		\Proba( \left|X - \mathbb{E}[X] \right| \ge \varepsilon)
		\le
		\frac{\mathbb{E}[(X - \mathbb{E}[X])^2]}{\varepsilon^{2}}
		.
	\end{equation*}
	Applying this inequality to
	$ X = \bar{S_n} $ and
	$ \varepsilon' = n^4 \varepsilon $
	we get
	\begin{equation*}
		\Proba \left( \left|\bar{S_n}\right| \ge n^4 \varepsilon \right)
		\le
		\frac{\mathbb{E}[S_n^2]}{\varepsilon^{2} n^8 }
		,
	\end{equation*}
	and using our previous estimate
	\begin{equation*}
		\Proba \left( \left|
			\frac{1}{n^4}
		\bar{S_n}\right| \ge \varepsilon \right)
		\le
		\frac{\mu^2 e^{o(\lambda)} }{ \varepsilon^{2} n}
		=
		\frac{\mu^2 }{ \varepsilon^{2} n^{1-o(1)} }
		,
	\end{equation*}
	which we can rewrite as
	\begin{equation*}
		\Proba\left( \left|
			\frac{1}{n^4}
			S_n -
			\frac{1}{n^4}
			\mathbb{E}[S_n]
		\right| \ge \varepsilon \right)
		\le
		\frac{\mu^2 }{ \varepsilon^{2} n^{1-o(1)} }
		.
	\end{equation*}

	We now compute the term
	$\frac{1}{n^4}
	\mathbb{E}[S_n]
	=
	\frac{1}{n^4}
	\sum_{q \in X^{4}} 
	\mathbb{E}[f(q)]
	$.
	We can write:
	\begin{align*}
		\frac{1}{n^4} \mathbb{E}[S_n]
&= \frac{1}{n^4} \sum_{q \in X^4} \mathbb{E}[f(q)] \\
&= \frac{1}{n^4} \sum_{\substack{q \in X^4 \\ \text{generic}}} \mathbb{E}[f(q)]
+ \frac{1}{n^4} \sum_{\substack{q \in X^4 \\ \text{non-generic}}} \mathbb{E}[f(q)] \\
% &= \left(1 - O\left(\frac{1}{n}\right)\right)\mu + O\left(\frac{1}{n}\right),
	\end{align*}
Moreover, the number of non-generic quadruples (i.e., \( q \in X^4 \) where at least two of the entries coincide) is at most \( C n^3 \) for some constant \( C > 0 \).
By the Lemma,
$ \E \left[ f(q) \right] $
is bounded by
$ \mu e^{o(\lambda)} $.
Combining these observations, we obtain
\begin{align*}
\frac{1}{n^4} \mathbb{E}[S_n]
% &= \frac{1}{n^4} \sum_{q \in X_n^4} \mathbb{E}[f(q)] \\
% &= \frac{1}{n^4} \sum_{\substack{q \in X_n^4 \\ \text{generic}}} \mathbb{E}[f(q)]
% + \frac{1}{n^4} \sum_{\substack{q \in X_n^4 \\ \text{non-generic}}} \mathbb{E}[f(q)] \\
&= \left(1 - O\left(\frac{1}{n}\right)\right)\mu
% + O\left(\frac{1}{n}\right),
+ \frac{\mu e^{o(\lambda)}}{n}
= \mu
\left( 
1 + \frac{1}{n^{1-o(1)} }
\right)
\end{align*}
which implies that $\frac{1}{n^4} \mathbb{E}[S_n] \sim \mu$
as $n \to \infty$.
% at speed
% % $ O(\frac{1}{n})$.
% $ \frac{e^{o(\lambda)}}{n} $.

	We can now look at
	$\delta_{X}^{(\lambda)} =
	\frac{1}{\lambda} \log S_{n}$.
	We recall the elementary inequality
	\begin{equation*}
		\left|
			\log a - \log b
		\right| \le
		\frac{\left|{a - b}\right|}{b}
		.
	\end{equation*}
If $
\left| S_n - \mathbb{E}[S_n] \right| \le n^4 \varepsilon$, then
\[
\left| \frac{S_n - \mathbb{E}[S_n]}{\mathbb{E}[S_n]} \right| \le \frac{n^4 \varepsilon}{\mathbb{E}[S_n]}.
\]
Since
\( \mathbb{E}[S_n] \sim n^4 \mu \),
% \( \mathbb{E}[S_n] = n^4 \mu + O(n^3) \),
we have for large \( n \),
\[
\frac{1}{\mathbb{E}[S_n]} \le \frac{2}{n^4 \mu}.
\]
Hence, for \( n \) large enough,
\[
\left| \log S_n - \log \mathbb{E}[S_n] \right| \le \frac{|S_n -
\mathbb{E}[S_n]|}{\mathbb{E}[S_n]} \le \frac{2 \varepsilon}{\mu}.
\]
Therefore, the log-difference is controlled by the relative deviation scaled by \( \mu^{-1} \).

We conclude that when $ n $ is large enough,
	\begin{equation*}
		\Proba
		\left(
			\left|
				\log S_n
				-
				\log \mathbb{E}[S_n]
			\right|
			\ge
			\frac{2 \varepsilon}{\mu}
		\right)
		\le
		\Proba
		\left(
			\left|
				\frac{1}{n^4}
				S_n -
				\frac{1}{n^4}
				\mathbb{E}[S_n]
			\right| \ge \varepsilon
		\right)
		\le
		\frac{\mu^2 }{ \varepsilon^{2} n^{1-o(1)} }
		,
	\end{equation*}
	which implies
	\begin{equation*}
		\Proba
		\left(
			\left|
				\frac{1}{\lambda}
				\log S_n
				-
				\frac{1}{\lambda}
				\log \mathbb{E}[S_n]
			\right|
			\ge
			\varepsilon
		\right)
		\le
		% \frac{\mu^{2} e^{o(\lambda)} }
		% {\varepsilon^{2} \lambda^{2} \mu^{2} n}
		% =
		% \frac{e^{o(\lambda)} }
		% {\varepsilon^{2} \lambda^{2} n}
		\frac{1}{ \lambda^{2} \varepsilon^{2} n^{1-o(1)} }
		\le
		\frac{1}{ \varepsilon^{2} n^{1-o(1)} }
		.
	\end{equation*}
	Using that
	$
	\frac{1}{n^4}
	\mathbb{E}[S_n]
	\sim
	\mu
	$,
	% tends to $ \mu $,
	when $ n $ is large enough we then have:
	\begin{equation*}
		\Proba
		\left(
			\left|
				\frac{1}{\lambda}
				\log S_n
				-
				\frac{1}{\lambda}
				( \log \mu + 4 \log n)
			\right|
			\ge
			\varepsilon
		\right)
		\le
		\frac{1}{ \varepsilon^{2} n^{1-o(1)} }
		. \qedhere
	\end{equation*}

	% Combining this inequality with the previous estimate,
	% we obtain:
	% \begin{equation*}
	% 	\Proba
	% 	\left(
	% 		\left|
	% 			\frac{1}{\lambda}
	% 			\log S_n
	% 			-
	% 			\frac{1}{\lambda}
	% 			( \log \mu + 4 \log n)
	% 		\right|
	% 		\ge
	% 		\varepsilon
	% 	\right)
	% 	\le
	% 	\frac{e^{o(\lambda)}}{\varepsilon^{2} n}
	% 	.
	% 	\qedhere
	% \end{equation*}
\end{proof}

We can finally state the result
that justify the batching method
used to compute the delta.

Given a random metric space on \( n \) points, distributed according to a law \( \nu \), we can define a random metric space on \( m \le n \) points by restriction.
Let \( (X,d) \) be a random metric space drawn from \( \nu \) and let \( X_m \subset X \) be a subset of size \( m \), chosen uniformly at random.
We equip \( X_m \) with the metric $d_{|X_m}$ induced by \( d \).
We can iterate this process $ k $-times,
choosing the subsets of size $ m $
disjoint from each other,
and we obtain a sequence of random
metric space
$ X^{1}_{m}, \dots, X^{K}_m $,
each one uniform with law of distances $ \ell $.

% This defines a random metric space \( (Y_m, d|_{Y_m}) \) on \( m \) points. We denote its law by \( \nu^m \), and refer to it as the \emph{induced random metric space on \( m \) points}.

% \begin{lemma}
% 	Let $ \ell $
% 	be a probability measure on
% 	$ \R_{+} $.
% 	Let $ \nu $
% 	be a probability measure
% 	on $ \mathcal{D}_n $
% 	such that the associated random metric space
% 	is uniform,
% 	with law of distances
% 	given by $ \ell $.
% 	Then the induced random metric space
% 	on $ m $ points is also uniform,
% 	with law of distances given by $ \ell $.
% \end{lemma}

\begin{theorem}
	Let $ \ell $
	be a probability measure on
	$ \R_{+} $.
	Let $ \nu $
	be a probability measure
	on $ \mathcal{M}_n $
	such that the associated random metric space
	is uniform,
	with law of distances
	given by $ \ell $.
	In the regime
	$ m^{K} \sim n $
	and
	$ \lambda \sim  \varepsilon^{-1} \log n $
	we have
	\begin{equation*}
		\Proba
		\left[ 
			\exists i,
			\abs{
				\delta
				_{X}^{(\lambda)}
				-
				\delta^{(\lambda)} 
				_{X^{i}_m} 
			} 
			\le \varepsilon
		\right]
		\ge
		1 - 
		\frac{1}{
		\varepsilon^{2K}
		n^{1-o(1)} }
		% \left( 
		% 	\frac{1}{n^{1-o(1)} }
		% 	+
		% 	\frac{1}{m^{k-o(1)}}
		% \right)
	\end{equation*}
\end{theorem}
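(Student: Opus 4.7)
The plan is to deduce the batched concentration result directly from the single-space concentration already established in Proposition~\ref{prop:lln}, exploiting the disjointness of the batches to obtain the $K$-fold power improvement. Set $ \mu_n := \tfrac{1}{\lambda}(\log \mu + 4 \log n) $ and $ \mu_m := \tfrac{1}{\lambda}(\log \mu + 4 \log m) $ for the deterministic centers given by Proposition~\ref{prop:lln}, and observe that each restricted space $ X_m^i $ is itself a uniform random metric space on $m$ points with the same law of distances $\ell$, so the proposition applies verbatim to it.

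First, I apply Proposition~\ref{prop:lln} to $X$ to get
$ \Proba(|\delta_X^{(\lambda)} - \mu_n| \geq \varepsilon) \leq \frac{1}{\varepsilon^{2} n^{1-o(1)}} $. Next, I apply the same proposition to each batch to obtain, for each $i$, $ \Proba(|\delta_{X_m^i}^{(\lambda)} - \mu_m| \geq \varepsilon) \leq \frac{1}{\varepsilon^{2} m^{1-o(1)}} $. The key structural input is that the $X_m^i$ are pairwise disjoint and the distances are i.i.d., so the random variables $\delta_{X_m^1}^{(\lambda)}, \ldots, \delta_{X_m^K}^{(\lambda)}$ are jointly independent (they are measurable functions of disjoint families of i.i.d.\ distances). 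Consequently, the probability that \emph{every} batch fails the $\varepsilon$-concentration is bounded by the product:
\begin{equation*}
\Proba\Bigl(\forall i,\ |\delta_{X_m^i}^{(\lambda)} - \mu_m| \geq \varepsilon \Bigr) \leq \left( \frac{1}{\varepsilon^{2} m^{1-o(1)}} \right)^{K} = \frac{1}{\varepsilon^{2K} m^{K(1-o(1))}}.
\end{equation*}
The regime $m^K \sim n$ turns the denominator into $n^{1-o(1)}$, which is exactly the rate claimed in the theorem.

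It remains to transfer concentration around the two centers into closeness of $\delta_X^{(\lambda)}$ and $\delta_{X_m^i}^{(\lambda)}$. On the intersection of the good events (which occurs with probability at least $1 - \frac{1}{\varepsilon^{2K} n^{1-o(1)}}$, after absorbing the much smaller bound from Step~1 into the dominant one via a union bound), the triangle inequality yields, for some index $i$,
\begin{equation*}
|\delta_X^{(\lambda)} - \delta_{X_m^i}^{(\lambda)}| \leq |\delta_X^{(\lambda)} - \mu_n| + |\mu_n - \mu_m| + |\mu_m - \delta_{X_m^i}^{(\lambda)}| \leq 2\varepsilon + \frac{4(\log n - \log m)}{\lambda}.
\end{equation*}
The regime $\lambda \sim \varepsilon^{-1} \log n$ combined with $\log m \sim \log n / K$ makes the second term $O(\varepsilon)$, and a rescaling of the asymptotic constants in the statement of the regime absorbs the multiplicative factor into the target $\varepsilon$.

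The main obstacle is the bookkeeping in the last step: the centers $\mu_n$ and $\mu_m$ do not coincide, and the gap is only of order $\varepsilon$ (not $o(\varepsilon)$) in the stated regime. One must be careful to phrase the result with an implicit multiplicative constant in $\varepsilon$ (allowed by the asymptotic nature of the regime $\lambda \sim \varepsilon^{-1}\log n$), or equivalently rescale $\lambda$ by a sufficiently large constant to push the center gap below $\varepsilon/3$. A secondary subtlety is verifying that the existential quantifier over $i$ is compatible with a union-bound-style argument: this is automatic since the complement event is the \emph{intersection} of individual failure events, and independence lets the intersection probability factor cleanly into a product, which is what produces the $\varepsilon^{-2K}$ rate rather than the much weaker $\varepsilon^{-2}$ rate one would obtain from a single batch.
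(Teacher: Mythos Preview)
Your proposal is correct and follows essentially the same route as the paper's proof: apply Proposition~\ref{prop:lln} to $X$ and to each disjoint batch $X_m^i$, use independence of the batches to multiply the failure probabilities (giving the $\varepsilon^{-2K}$ and $m^{K}\sim n$ factors), then combine the two concentration statements via the triangle inequality and control the center gap $\tfrac{4(\log n-\log m)}{\lambda}$ using the regime $\lambda\sim\varepsilon^{-1}\log n$. The paper likewise ends with a $3\varepsilon$ bound absorbed by rescaling, and handles the final union bound exactly as you describe.
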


\begin{proof}
	For a uniform random metric space
	$ (X, d) $ 
	and a sequence of random
	subpaces $ (X_{m}^{1}, \dots, X_{m}^{K})$
	as constructed above,
	each $(X_{m}^{i},d_{|X_{m}^{i}})$
	is still a uniform random metric space
	with the same law of distances.
	Because $ m^{K} \sim n $
	and
	$ \lambda = O( \log n) $
	we have
	$ \lambda = O ( \log m) $.
	We apply the Proposition \ref{prop:lln} to get
	\begin{align*}
		\Proba
		\left(
			\left|
				\delta_{X}^{(\lambda)}
				-
				\frac{1}{\lambda}
				( \log \mu + 4 \log n)
		    \right|
			\ge
			\varepsilon
		\right)
		& \le
		\frac{1}{\varepsilon^{2} n^{1 - o(1)} },
		\\
		\Proba
		\left(
			\left|
				\delta_{X_{m}^{i}}^{(\lambda)}
				-
				\frac{1}{\lambda}
				( \log \mu + 4 \log m)
		    \right|
			\ge
			\varepsilon
		\right)
		& \le
		\frac{1}{\varepsilon^{2} n^{1 - o(1)} }.
		\\
	\end{align*}
	The $ (Y_{m}^{i}) $ are independent
	random variables hence
	\begin{equation*}
		\Proba
		\left(
			\forall i \le K,
			\left|
				\delta_{X_m^{i}}^{(\lambda)}
				-
				\frac{1}{\lambda}
				( \log \mu + 4 \log m)
		    \right|
		    \le
			\varepsilon
		\right)
		 \le
		\frac{1}{\varepsilon^{2 K} m^{K - o(1)} },
	\end{equation*}
	which means that
	\begin{equation*}
		\Proba
		\left(
			\exists i \le K,
			\left|
				\delta_{X_{m}^{i}}^{(\lambda)}
				-
				\frac{1}{\lambda}
				( \log \mu + 4 \log m)
		    \right|
			\le
			\varepsilon
		\right)
		 \ge
		1- 
		\frac{1}{\varepsilon^{2 K} m^{K - o(1)} }
		=
		1- 
		\frac{1}{\varepsilon^{2 K} n^{1 - o(1)} },
	\end{equation*}
	% Combining with the inequality
	% \begin{equation*}
	% 	\Proba
	% 	\left(
	% 		\left|
	% 			\delta_{X_n}^{(\lambda)}
	% 			-
	% 			\frac{1}{\lambda}
	% 			( \log \mu + 4 \log n)
	% 	    \right|
	% 		\le
	% 		\varepsilon
	% 	\right)
	% 	& \ge
	% 	1 -
	% 	\frac{1}{\varepsilon^{2} n^{1 - o(1)} },
	% \end{equation*}
	When 
	\begin{equation*}
			\left|
				\delta_{X_{m}^{i}}^{(\lambda)}
				-
				\frac{1}{\lambda}
				( \log \mu + 4 \log m)
		    \right|
			\le
			\varepsilon,
	\end{equation*}
	and
	\begin{equation*}
			\left|
				\delta_{X}^{(\lambda)}
				-
				\frac{1}{\lambda}
				( \log \mu + 4 \log n)
		    \right|
			\le
			\varepsilon.
	\end{equation*}
	Thus,
	\begin{equation*}
		\left| 
		\delta_{X}^{(\lambda)}
		-
		\delta_{X_{m}^{i}}^{(\lambda)}
		\right|
		\le
		2 \varepsilon
		+
		\frac{1}{\lambda}
		\log \frac{n}{m}
		\le
		3 \varepsilon
		.
	\end{equation*}
	We can conclude that
	\begin{align*}
		\Proba
		\left[ 
			\exists i,
			\abs{
				\delta^{(\lambda)} 
				_{X} 
				-
				\delta^{(\lambda)} 
				_{ X_{m}^{i}} 
			}
			\le \varepsilon
		\right]
		 \ge
		1 & - 
		\Proba
		\left[ 
			\forall i,
			\abs{
				\delta^{(\lambda)} 
				_{X} 
				-
				\delta^{(\lambda)} 
				_{X_{m}^{i}} 
			}
			\ge \varepsilon
		\right] \\
		  & -
		\Proba
		\left(
			\left|
				\delta_{X}^{(\lambda)}
				-
				\frac{1}{\lambda}
				( \log \mu + 4 \log n)
		    \right|
			\ge
			\varepsilon
		\right)
		 \\
		\ge
		1 & - 
		\frac{1}{\varepsilon^{2K} n^{1 - o(1)} }.
	\end{align*}		\qedhere
\end{proof}

\begin{corollary}
 Let $ \ell $
	be a probability measure on
	$ \R_{+} $.
	Let $ \nu $
	be a probability measure
	on $ \mathcal{M}_n $, the set of metric spaces over $n$ points,
	such that the associated random metric space
	is uniform (see definition \ref{def:uniformmetric}),
	with law of distances
	given by $ \ell $.
	In the regime
	$ m^{K} \sim n $
	and
	$ \lambda \sim \varepsilon^{-1} \log n $
	we have
	\begin{equation*}
		\Proba
		\left[ 
			\abs{
				\delta
				_{X} 
				-
			\delta_{X, K, m}^{(\lambda)}
			} 
			\le \varepsilon
		\right]
		\ge
		1 - 
		\frac{1}{
		\varepsilon^{2K}
		n^{1-o(1)} }.
		% \left( 
		% 	\frac{1}{n^{1-o(1)} }
		% 	+
		% 	\frac{1}{m^{k-o(1)}}
		% \right)
	\end{equation*}
\end{corollary}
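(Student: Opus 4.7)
The plan is to combine three ingredients: the deterministic bound of Proposition~\ref{approxbound} between the true hyperbolicity $\delta_X$ and its smooth surrogate $\delta_X^{(\lambda)}$; the theorem proved just above the corollary, which controls with high probability the existence of a batch $X_m^i$ whose smooth hyperbolicity is close to $\delta_X^{(\lambda)}$; and the relationship between the aggregate $\delta_{X,K,m}^{(\lambda)}$ and the individual $\delta_{X_m^i}^{(\lambda)}$ through the outer log-sum-exp defining \eqref{eq:batched_smoothed_hyperbolicity}. No new probabilistic argument is needed: the corollary is essentially a wrapping of the previous theorem with two deterministic approximation inequalities.

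First, I would apply Proposition~\ref{approxbound} to get $|\delta_X - \delta_X^{(\lambda)}| \le 4\log(n)/\lambda$, which in the regime $\lambda \sim \varepsilon^{-1}\log n$ is bounded by a constant multiple of $\varepsilon$. Second, I would observe a subset monotonicity: since $X_m^i \subset X$, the positive sum in the definition of $\delta_{X_m^i}^{(\lambda)}$ runs over a subset of the quadruples appearing in $\delta_X^{(\lambda)}$, hence $\delta_{X_m^i}^{(\lambda)} \le \delta_X^{(\lambda)}$ for every $i$. Applying Proposition~\ref{minmaxineq} to the outer LSE that defines $\delta_{X,K,m}^{(\lambda)}$ gives
\[
\max_i \delta_{X_m^i}^{(\lambda)} \le \delta_{X,K,m}^{(\lambda)} \le \max_i \delta_{X_m^i}^{(\lambda)} + \frac{\log K}{\lambda}.
\]

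Next, I would invoke the preceding theorem: with probability at least $1 - \varepsilon^{-2K} n^{-(1-o(1))}$, there is some index $i^*$ with $|\delta_X^{(\lambda)} - \delta_{X_m^{i^*}}^{(\lambda)}| \le \varepsilon$. Combined with the subset monotonicity this yields $\delta_X^{(\lambda)} - \varepsilon \le \max_i \delta_{X_m^i}^{(\lambda)} \le \delta_X^{(\lambda)}$, and therefore
\[
|\delta_X^{(\lambda)} - \delta_{X,K,m}^{(\lambda)}| \le \varepsilon + \frac{\log K}{\lambda}.
\]
A final triangle inequality then gives
\[
|\delta_X - \delta_{X,K,m}^{(\lambda)}| \le \varepsilon + \frac{4\log n + \log K}{\lambda}.
\]

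In the regime $m^K \sim n$ with $\lambda \sim \varepsilon^{-1}\log n$, the constraint $K\log m \sim \log n$ forces $\log K = O(\log\log n)$, so $(\log K)/\lambda = o(\varepsilon)$ and $(\log n)/\lambda = O(\varepsilon)$; absorbing the constants into $\varepsilon$ (equivalently, replacing $\lambda$ by a suitable scalar multiple) gives the stated inequality, while the probability bound transfers verbatim from the preceding theorem. The only real obstacle is bookkeeping: one must verify that the deterministic slack $(\log K)/\lambda + 4(\log n)/\lambda$ really sits in the $o(\varepsilon)$ regime dictated by $m^K \sim n$ and $\lambda \sim \varepsilon^{-1}\log n$, so that the constants swallowed into $\varepsilon$ do not pollute the exponent $1-o(1)$ appearing in the probability estimate.
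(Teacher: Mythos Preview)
Your proposal is correct and follows essentially the same approach as the paper's own (very terse) proof, which simply says the result ``follows directly from triangular inequality, definition of $\delta_{X, K, m}^{(\lambda)}$, and Proposition~\ref{approxbound}.'' Your write-up usefully spells out the implicit steps---in particular the subset monotonicity $\delta_{X_m^i}^{(\lambda)} \le \delta_X^{(\lambda)}$ and the outer-LSE bound $\max_i \delta_{X_m^i}^{(\lambda)} \le \delta_{X,K,m}^{(\lambda)} \le \max_i \delta_{X_m^i}^{(\lambda)} + (\log K)/\lambda$---which are needed to make the triangle inequality work, and your bookkeeping on the $(\log K)/\lambda$ and $(\log n)/\lambda$ slack in the regime $m^K \sim n$, $\lambda \sim \varepsilon^{-1}\log n$ is sound.
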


\begin{proof}
This follows directly from triangular inequality, definition of $\delta_{X, K, m}^{(\lambda)}$ (see equation (\ref{eq:batched_smoothed_hyperbolicity})) and Proposition \ref{approxbound}.
\end{proof}

\paragraph{Models of uniform random metric spaces.}

We now describe several models of random metric spaces
that are (aproximatively) uniform.

In \cite{mubayi}
the authors study a model of discrete random metric spaces.
For an integer $ r \ge 2 $, consider the
finite subset
$ \mathcal{D}_n^r \subset \mathcal{D}_n $
of metric spaces for which the distance function takes values
in $ \left\{ 0, 1, \dots, 2r \right\} $.
Let $ \nu_{n}^{r} $ be the uniform probability on this finite set.
Let $ C_{n}^{r} \subset \mathcal{D}_n^r $ the subset of metric spaces
whose distance function takes values in the interval
$ \left\{ r, \dots, 2r \right\} $.
Observe that any collection of numbers in 
$ \left\{ r, \dots, 2r \right\} $
satisfy the triangle inequality,
hence 
$ C_{n}^{r} =
\left\{ r, \dots, 2r \right\}^{\binom{n}{2}} $,
and the restriction of the measure $ \nu_n^r$
to
$ C_{n}^{r} $
is simply a product measure.
In other words, the random metric space
conditioned to be in
$ C_{n}^{r} $
is uniform.
The authors prove that,
when $ n $ is large,
the measure $ \nu_{n}^{r} $
is exponentialy concentrated on
$ C_{n}^{r} $.
\begin{theorem}
	There exists $ \beta > 0 $
	such that for $ n $ large enough
	we have
	\begin{equation*}
		\nu_{n}^{r}(
		C_n^r)
		\ge
		1 - e^{-\beta n}
		.
	\end{equation*}
\end{theorem}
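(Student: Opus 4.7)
The plan is to establish the exponential concentration of the uniform measure on $\mathcal{D}_n^r$ by a direct combinatorial argument that exploits the rigidity forced by small distances through the triangle inequality.

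First, I would compute $|C_n^r|$ exactly. Since any three distances in $[r, 2r]$ automatically satisfy the triangle inequality (the sum of any two is at least $2r$, which dominates the third), every assignment of values in $\{r, r+1, \ldots, 2r\}$ to the $\binom{n}{2}$ pairs yields a valid metric, giving $|C_n^r| = (r+1)^{\binom{n}{2}}$. The strategy is then to show that $|\mathcal{D}_n^r|$ exceeds $|C_n^r|$ only by a factor $1 + e^{-\beta n}$.

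The core step is to set up a resolution map $\phi : \mathcal{D}_n^r \to C_n^r$ by $\phi(d)(i,j) = \max(d(i,j), r)$, which one readily checks is well defined: if $d(a,c) \geq r$ then $\phi(d)(a,c) = d(a,c) \leq d(a,b) + d(b,c) \leq \phi(d)(a,b) + \phi(d)(b,c)$, while if $d(a,c) < r$ then $\phi(d)(a,c) = r \leq 2r \leq \phi(d)(a,b) + \phi(d)(b,c)$. The total count factors as $|\mathcal{D}_n^r| = \sum_{d' \in C_n^r} |\phi^{-1}(d')|$, so it suffices to show that on average $|\phi^{-1}(d')|$ exceeds $1$ by only an exponentially small amount. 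The key mechanism is the rigidity induced by small pairs: if $d(i,j) = v \leq r-1$, then for every third point $k$ the triangle inequality forces $|d(i,k) - d(j,k)| \leq v$, so among the $(2r+1)^2$ possible pairs $(d(i,k), d(j,k)) \in \{0, \ldots, 2r\}^2$ at most $(2r+1)(2v+1) \leq (2r+1)(2r-1)$ are admissible, a compression by the factor $(2r-1)/(2r+1) < 1$. Iterating over the $n-2$ choices of third point gives an exponential-in-$n$ reduction in the number of admissible completions of a metric containing a fixed small pair; summing over the $\binom{n}{2} \cdot r$ possible positions and values of that small pair, and dividing by $|C_n^r|$, should yield $|\mathcal{D}_n^r \setminus C_n^r| \leq |C_n^r| \cdot e^{-\beta n}$ for some $\beta > 0$ depending only on $r$.

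The main obstacle lies in the bookkeeping when several small pairs coexist: their rigidifications act on overlapping triples, and the naive per-pair compression factors do not straightforwardly multiply without overcounting. I expect this is handled either by attributing each bad metric to its lexicographically first small pair, so that the union bound becomes sharp, or more robustly by an entropy argument on the random indicators $\mathbf{1}_{d(i,j)<r}$ under the uniform measure, showing that these indicators have sub-linear-in-$n$ expected total mass and combining with an isoperimetric-style inequality on the set of triangle-inequality-consistent arrays. The clean exponent $\beta$ should ultimately reflect $\log\bigl((2r+1)/(2r-1)\bigr)$ up to constants.
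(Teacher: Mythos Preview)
The paper does not prove this theorem; it is quoted from \cite{mubayi}, so there is no in-paper argument to compare against.

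Your intuition---a short edge $d(i,j)=v<r$ rigidifies the $n-2$ triangles through it via $|d(i,k)-d(j,k)|\le v$---is correct, but the counting does not close. Your compression factor $(2r-1)/(2r+1)$ is measured against the full grid $\{0,\dots,2r\}^2$, which is the wrong baseline: to get $|\mathcal{D}_n^r\setminus C_n^r|\le e^{-\beta n}|C_n^r|$ you must compare against $|C_n^r|=(r+1)^{\binom{n}{2}}$, where each pair of entries contributes only $(r+1)^2$ configurations. Since $(2r+1)(2r-1)=4r^2-1>(r+1)^2$ for every $r\ge2$, the constrained per-triangle count actually \emph{exceeds} the $C_n^r$ baseline. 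Concretely, dropping all triangle inequalities except those through the short edge bounds $|\{d\in\mathcal{D}_n^r:d(i,j)=v\}|$ by roughly $(2r(2v{+}1))^{n-2}(2r)^{\binom{n-2}{2}}$; the ratio to $|C_n^r|$ carries a factor $(2r/(r+1))^{\binom{n-2}{2}}$ that blows up like $c^{n^2}$ with $c>1$, swamping any $e^{-cn}$ savings from the short edge. You cannot afford to discard the remaining triangle inequalities, and retaining them is circular: it amounts to already knowing $|\mathcal{D}_n^r|\approx|C_n^r|$.

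Neither of your proposed repairs resolves this. Lex-first attribution still needs a per-short-pair bound comparable to $|C_n^r|$, which is exactly what fails above. The entropy route is closer in spirit to what is actually done, but making it work requires a structural input you do not supply: one must separate metrics with few short edges from those with many, and for the latter exploit that short edges \emph{propagate} constraints beyond the immediate triangles. The argument in \cite{mubayi} is of this stability/supersaturation flavor and is substantially more involved than a per-triangle compression.
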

In other words,
the uniform random metric space with integer valued
distances and diameter bounded by $ 2r $
is (approximatively) uniform.

Let \( G(n, p) \) denote the Erdős--Rényi random graph: a graph on \( n \) vertices where each edge between a pair of vertices is included independently with probability \( p \in [0,1] \).

In the regime
\begin{align*}
	p^2 n - 2 \log n & \to \infty, \\
	n^2 (1-p) & \to \infty,
\end{align*}
the graph \( G(n, p) \) asymptotically almost surely has \emph{diameter 2},
that is, any pair of vertices is either directly connected by an edge or shares a common neighbor, by \cite{bollobas}.
According to \cite[\S 4]{dudarov},
conditionnaly to the diameter being 2,
the distance matrix of this graph
is a function of the incidence matrix;
in particular,
the associated random metric space is uniform.

We also evaluate the distribution of distances on real datasets as shown in Figure \ref{fig:distribution}. We can see that approximately these distributions
are log-normal, suggesting that they could satisfy the hypothesis of our theorem.

\begin{figure}[t]
    \centering
    \includegraphics[width = \linewidth]{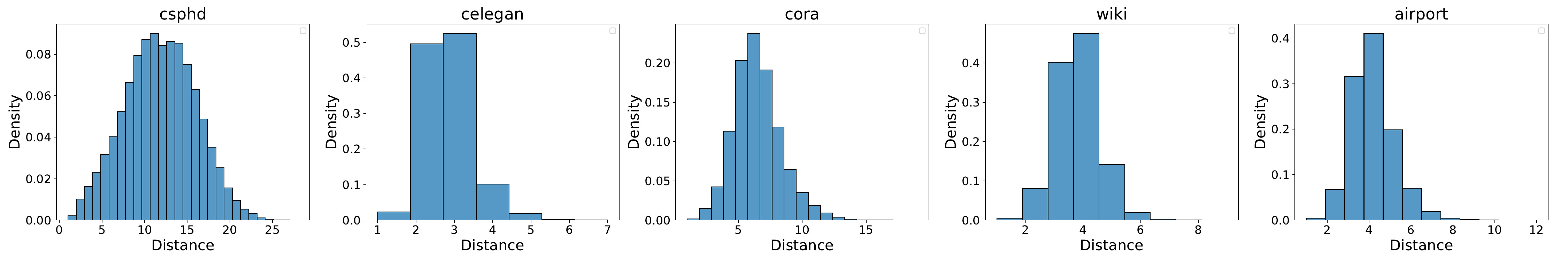}
    \caption{Distribution of the distances of the unweighted datasets of our benchmark. \label{fig:distribution}}
    
\end{figure}

\subsection{Proof of Proposition \ref{piecewise-nonconvex}}
Following Bridson and Haefliger \cite{bridson}, the four points condition given in (\ref{4points}), can be seen in the following way: Let $\mathbf{D} \in \mathcal{D}_n$, for $i,j,k,l \in \lbrace 1, \ldots, n \rbrace$, let $l_1, l_2$ and $l_3$ be defined by
\[
\left\{
\begin{aligned}
l_1 &= D_{ij} + D_{kl} \\
l_2 &= D_{ik} + D_{jl} \\
l_3 &= D_{il} + D_{jk}
\end{aligned}
\right.
\]
and let $M_1$ and $M_2$ be the two largest values among $l_1, l_2$, and $l_3$.  
We define
\[
\delta (i,j,k,l) = M_1 - M_2
\]
and the Gromov hyperbolicity $\delta_{\mathbf{D}}$ of the metric space is the maximum of $\delta (i,j,k,l)$ over all possible 4-tuples $(i,j,k,l)$ divided by $2$.  
That is, 
\[
\delta_{\mathbf{D}} = \frac{1}{2} \max_{i,j,k,l \in \lbrace 1, \ldots, n \rbrace} \delta (i,j,k,l)
\]
The function $\delta_{\mathbf{D}}$, viewed as a function over \(\mathcal{D}_n\) is a maximum of piecewise affine functions, and is therefore itself piecewise affine.

We can interpret $\delta_{\mathbf{D}}$ geometrically as follows. Each 4-tuple \((i,j,k,l)\) determines three quantities \(l_1, l_2\), and \(l_3\). The differences between these quantities partition the plane into regions corresponding to which of \(l_1, l_2, l_3\) is largest, second largest, and smallest (see Figure~\ref{fig::noconvex}).

To demonstrate the non-convexity of the set in question, it suffices to exhibit a single counterexample. We do so by constructing two metric spaces, each defined over four points, and represented via their respective distance matrices. These examples are deliberately simple to allow for full transparency of the argument and ease of verification.

While we restrict ourselves here to the case \( n = 4 \), the same construction strategy can be generalized without difficulty to any number of points \( n \geq 4 \). The underlying mechanism responsible for the failure of convexity is not peculiar to this specific dimension.

Let us consider
\[
\mathbf{D}_1 =
\begin{bmatrix}
0 & 1.1 & 1 & 1.2 \\
 & 0 & 1 & 1 \\
 &  & 0 & 1 \\
 &  &  & 0
\end{bmatrix}
\text{ and }
\mathbf{D}_2 =
\begin{bmatrix}
0 & 1 & 1 & 1.2 \\
 & 0 & 1 & 1.1 \\
 &  & 0 & 1 \\
 &  &  & 0
\end{bmatrix}.
\]

Computing the associated triples \((l_1, l_2, l_3)\) for each space, we find
\[
L_1 = (2.1, 2.2, 2) \quad \text{for} \quad \mathbf{D}_1,
\qquad
L_2 = (2, 2.2, 2.1) \quad \text{for} \quad \mathbf{D}_2.
\]
Now consider the interpolation between \(\mathbf{D}_1\) and \(\mathbf{D}_2\) defined by \(t \in [0,1] \mapsto t\mathbf{D}_1 + (1-t)\mathbf{D}_2\) with associated triple $(l_1(t), l_2(t), l_3(t))$. 

Since the values of \(l_2\) are identical for \(\mathbf{D}_1\) and \(\mathbf{D}_2\), the interpolation affects only \(l_1\) and \(l_3\). Thus, along this interpolation $\delta$ behave like
\[
t \mapsto -\max(l_1(t), l_3(t)),
\]
which is concave and negates that $\delta_{\mathbf{D}}$ is a convex function over $\mathcal{D}_4$.

\begin{figure}[H]
\centering
\begin{tikzpicture}[scale=1.5]
    \foreach \angle in {0, 60, 120, 180, 240, 300} {
        \draw[gray, thick] (0,0) -- (\angle:2);
    }

    % Horizontal line
    \draw[gray, thick] (-2,0) -- (2,0);

    % Labels and inequalities in sectors
    \node[blue] at (30:1.6) {$l_1 - l_2$};
    \node at (30:1.2) {$1>2>3$};

    \node[blue] at (90:1.4) {$l_1 - l_3$};
    \node at (90:1.2) {$1>3>2$};

    \node[blue] at (150:1.6) {$l_3 - l_1$};
    \node at (150:1.2) {$3>1>2$};

    \node[blue] at (210:1.6) {$l_3 - l_2$};
    \node at (210:1.2) {$3>2>1$};

    \node[blue] at (270:1.45) {$l_2 - l_3$};
    \node at (270:1.2) {$2>3>1$};

    \node[blue] at (330:1.6) {$l_2 - l_1$};
    \node at (330:1.2) {$2>1>3$};

    % Small green texts at top

    % Orange points and connecting line
    \coordinate (X1) at (275:0.8); % In l2 - l1 sector
    \coordinate (X2) at (335:0.8); % In l2 - l3 sector

    % Draw the orange points
    \fill[orange] (X1) circle (2pt);
    \fill[orange] (X2) circle (2pt);

    % Connect them with an orange line
    \draw[orange, thick] (X1) -- (X2);

    % Labels for points
    \node[orange, left] at (X1) {$\mathbf{D}_1$};
    \node[orange, above right] at (X2) {$\mathbf{D}_2$};

\end{tikzpicture}
\caption{In the diagram, the plane is divided into six sectors, each labeled according to the ordering of \(l_1, l_2, l_3\) (e.g., \(1>2>3\) means \(l_1\) is largest, then \(l_2\), then \(l_3\)). The labels inside each sector indicate the expression that represents \(\delta(i,j,k,l)\) in that region.}
\label{fig::noconvex}
\end{figure}
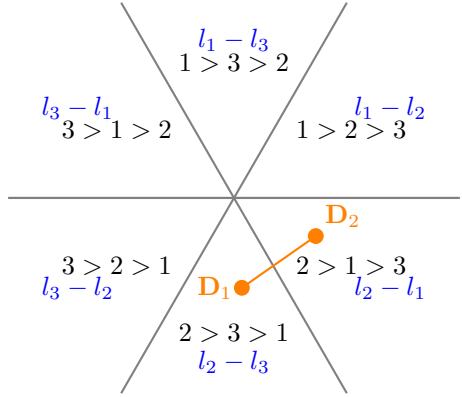

\subsection{Floyd-Warshall as a projection} \label{sec:sp_as_projection}
This section mainly rewrites results described in \cite{brickell}. Let $\mathbf{W} \in \mathbb{R}_{+}^{n \times n}$ be the matrix of non-negative edge weights of a (possibly directed) graph with $n$ nodes. Let $\SP(\mathbf{W})$ denote the matrix of shortest path distances between all node pairs. We use the notation $\mathbf{A} \leq \mathbf{B}$ to mean that $A_{ij} \leq B_{ij}$ for all $(i,j)$, and recall that $\mathcal{D}_n$ denotes the set of $n \times n$ distance matrices (i.e., matrices that satisfy the properties of a metric).

We aim to show that
\[
\SP(\mathbf{W}) \in \underset{\mathbf{D} \in \mathcal{D}_n,\, \mathbf{D} \leq \mathbf{W}}{\operatorname{argmin}} \ \|\mathbf{D} - \mathbf{W}\|_p,
\]
for any $\ell_p$ norm with $p > 0$.

Note that $\SP(\mathbf{W}) \leq \mathbf{W}$ since the shortest path between two nodes is always less than or equal to the direct edge weight (if it exists). Hence, $\SP(\mathbf{W})$ is a feasible candidate for the minimization problem.

Suppose the following property holds:
\[
\forall \mathbf{D} \in \mathcal{D}_n,\ \mathbf{D} \leq \mathbf{W} \Rightarrow \mathbf{D} \leq \SP(\mathbf{W}).
\]
Under this assumption, the optimality of $\SP(\mathbf{W})$ follows immediately. Indeed, for any such $\mathbf{D}$, we have:
\[
0 \leq W_{ij} - \SP(\mathbf{W})_{ij} \leq W_{ij} - D_{ij} = |D_{ij} - W_{ij}|,
\]
for all $(i,j)$, using the facts that $\SP(\mathbf{W}) \leq \mathbf{W}$, $\mathbf{D} \leq \SP(\mathbf{W})$, and $\mathbf{D} \leq \mathbf{W}$. Therefore,
\[
|\SP(\mathbf{W})_{ij} - W_{ij}| \leq |D_{ij} - W_{ij}|,
\]
which implies
\[
\|\SP(\mathbf{W}) - \mathbf{W}\|_p \leq \|\mathbf{D} - \mathbf{W}\|_p.
\]

We now prove the property. Assume for contradiction that there exists $(i,j)$ such that $D_{ij} > \SP(\mathbf{W})_{ij}$. Consider the lex smallest such pair under the total ordering induced by increasing values of $\SP(\mathbf{W})_{ij}$. That is, sort all node pairs so that:
\[
\SP(\mathbf{W})_{i_1, j_1} \leq \SP(\mathbf{W})_{i_2, j_2} \leq \cdots \leq \SP(\mathbf{W})_{i_M, j_M},
\]
and let $(i,j)$ be the first pair for which $D_{ij} > \SP(\mathbf{W})_{ij}$.

There are two possibilities: (1) $\SP(\mathbf{W})_{ij} = W_{ij}$. Then $D_{ij} > W_{ij}$, which contradicts the assumption $\mathbf{D} \leq \mathbf{W}$. (2) There exists some node $k$ such that
\[
\SP(\mathbf{W})_{ij} = \SP(\mathbf{W})_{ik} + \SP(\mathbf{W})_{kj}.
\]
Since $\mathbf{D}$ is a metric, it satisfies the triangle inequality:
\[
D_{ij} \leq D_{ik} + D_{kj}.
\]
But by minimality of $(i,j)$ in the sorted list, we know:
\[
D_{ik} \leq \SP(\mathbf{W})_{ik}, \quad D_{kj} \leq \SP(\mathbf{W})_{kj}.
\]
Therefore,
\[
D_{ij} \leq D_{ik} + D_{kj} \leq \SP(\mathbf{W})_{ik} + \SP(\mathbf{W})_{kj} = \SP(\mathbf{W})_{ij},
\]
which contradicts the assumption that $D_{ij} > \SP(\mathbf{W})_{ij}$.

This contradiction implies that no such $(i,j)$ exists, and thus $\mathbf{D} \leq \SP(\mathbf{W})$, completing the proof.

\section{Complementary experimental results}
\subsection{Convergence of the method}\label{app:earlystopping}
\ph{In this section, we examine the numerical convergence of our method  Table \ref{tab:earlystopping} gives the number of epochs for which it has reached the early stopping criterion. These results show that convergence typically occurs well before early stopping (defined as 50 iterations without improvement of the objective) well before the maximum of 1000 epochs, except in the \textsc{WIKI} dataset, suggesting that the optimization process is generally stable and effective. We also show in Figure \ref{fig:metrics} the evolution of the objective function and its components over training, using the best hyperparameters for each dataset.}

\begin{table}[t]
  \centering
  \caption{Epochs until early stopping for each dataset.}
  \label{tab:earlystopping}
  \vspace{1em}
  \resizebox{\textwidth}{!}{%
  \begin{tabular}{l|ccccc|cc}
    \toprule
     &  \multicolumn{5}{c|}{\textbf{Unweighted graphs}}  &  \multicolumn{2}{c}{\textbf{Non-graph metrics}}  \\
    \textbf{Datasets} & \textsc{C-ELEGAN} & \textsc{CS PhD} & \textsc{CORA} & \textsc{AIRPORT} & \textsc{WIKI} & \textsc{ZEISEL} & \textsc{IBD}\\
    \midrule
    Epochs until Early Stopping & 166 & 402 & 474 & 877 & 1000 & 456 & 122 \\
    \bottomrule
  \end{tabular}}
\end{table}

\begin{figure}[h]
    \centering
    \includegraphics[width=0.9\textwidth]{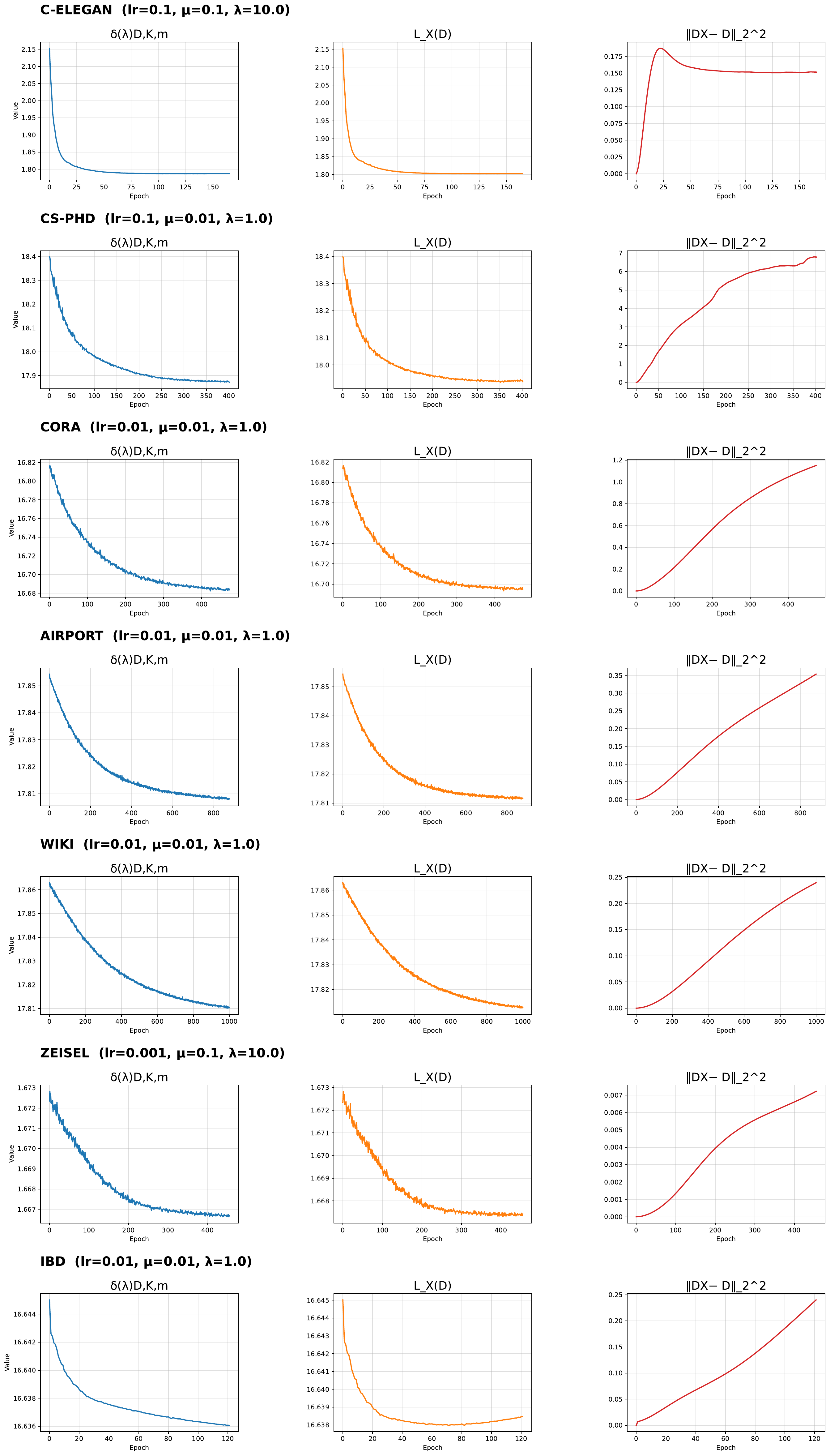}
    \caption{Training metrics over epochs for each dataset.}
    \label{fig:metrics}
\end{figure}

\subsection{Average $\ell_1$ error and Runtime Efficiency}\label{appendix:runtimeandl1}
In this section, we complement the $\ell_\infty$ distortion results (Table~\ref{tab:max_distortion}) with an analysis of the average embedding quality using $\ell_1$ error. Table~\ref{tab:avg_distortion} reports the mean $\ell_1$ distortion ($\left\| d - d_T \right\|_{1}/\binom{n}{2}$) between the input distance matrix and the distance matrix induced by the fitted tree metric. This metric captures the average deviation rather than the worst-case distortion, providing a different perspective on the fidelity of the embedding. The experimental setup is identical to that used for Table~\ref{tab:max_distortion}, including the selection of hyperparameters (based on lowest $\ell_\infty$ error) and root sampling strategy.

As shown in Table~\ref{tab:avg_distortion}, \textsc{NeighborJoin} (NJ) achieves the lowest average distortion across all datasets, including both graph-based and non-graph metrics. This is expected, as NJ is a greedy algorithm designed to minimize additive tree reconstruction error from a distance matrix. However, it does not guarantee low worst-case distortion, and its performance in terms of $\ell_\infty$ error is less consistent (see Table~\ref{table:scoredistortion}).

By contrast, our method, \ourmethod, is not designed to minimize average error directly but rather optimizes a distortion-aware loss function. Despite this, it performs competitively in $\ell_1$ distortion, especially on datasets such as \textsc{C-ELEGAN} and \textsc{ZEISEL}. However, on larger datasets like \textsc{CORA} and \textsc{AIRPORT}, it yields higher average distortion than NJ, suggesting that there may be a trade-off between minimizing $\ell_\infty$ and $\ell_1$ error in practice.

Table~\ref{tab:avg_runtime} reports the average runtime (in seconds) over 10 runs for each method and dataset. For \ourmethod, we report the runtime of the optimization step only, using the best-performing hyperparameter configuration (as selected in Table~\ref{table:scoredistortion}). For NJ and \textsc{TreeRep}, the reported times exclude the $O(n^3)$ time required to construct a full pairwise distance matrix, as these methods operate directly produce a tree structure.

As expected, \ourmethod~ incurs higher computational cost than the baselines, particularly on large datasets such as \textsc{ZEISEL} and \textsc{CORA}, where optimization can take several thousand seconds. This is the price of our optimization method althought as highlighted in Section \ref{sec:opti}, the major bottleneck comes from the usage of $\textsc{Floyd-Warshall}$ algorithm at each step of our proposed gradient descent. Lightweight methods such as \textsc{Gromov} and HCC run much faster, though they offer less accuracy in both $\ell_\infty$ and $\ell_1$ metrics.

These results highlight a key trade-off: \ourmethod~ offers superior worst-case distortion, but at the cost of increased runtime.

\begin{table}[t]
  \centering
  \caption{$\ell_1$ average error ($\left\| d - d_T \right\|_{1}/\binom{n}{2}$). Best result in bold (lower is better).}
  \vspace{1em}
  \label{tab:avg_distortion}
  \resizebox{\textwidth}{!}{%
  \begin{tabular}{l|ccccc|cc}
    \toprule
    & \multicolumn{5}{c|}{\textbf{Unweighted graphs}} & \multicolumn{2}{c}{\textbf{Non-graph metrics}} \\
    \textbf{Data set} & \textsc{C-ELEGAN} & \textsc{CS PhD} & \textsc{CORA} & \textsc{AIRPORT} & \textsc{WIKI} & \textsc{ZEISEL} & \textsc{IBD} \\
    \midrule
    NJ & $\mathbf{0.30}$ & $\mathbf{1.35}$ & $\mathbf{1.06}$ & $\mathbf{0.49}$ & $\mathbf{0.51}$ & $\mathbf{0.04}$ & $0.09$ \\
    TR & $0.83 \pm \text{\scriptsize 0.16}$ & $2.55 \pm \text{\scriptsize 1.34}$ & $2.91 \pm \text{\scriptsize 0.63}$ & $1.28 \pm \text{\scriptsize 0.21}$ & $1.85 \pm \text{\scriptsize 0.26}$ & $0.11 \pm \text{\scriptsize 0.02}$ & $0.13 \pm \text{\scriptsize 0.01}$ \\
    HCC & $0.88 \pm \text{\scriptsize 0.22}$ & $2.59 \pm \text{\scriptsize 1.11}$ & $2.44 \pm \text{\scriptsize 0.43}$ & $1.10 \pm \text{\scriptsize 0.13}$ & $1.53 \pm \text{\scriptsize 0.17}$ & $0.08 \pm \text{\scriptsize 0.02}$ & $0.27 \pm \text{\scriptsize 0.03}$ \\
    LayeringTree & $0.73 \pm \text{\scriptsize 0.03}$ & $4.90 \pm \text{\scriptsize 0.07}$ & $3.68 \pm \text{\scriptsize 0.23}$ & $0.62 \pm \text{\scriptsize 0.06}$ & $0.83 \pm \text{\scriptsize 0.04}$ & -- & -- \\
    Gromov & $1.13 \pm \text{\scriptsize 0.01}$ & $2.80 \pm \text{\scriptsize 0.36}$ & $3.34 \pm \text{\scriptsize 0.12}$ & $1.55 \pm \text{\scriptsize 0.08}$ & $2.12 \pm \text{\scriptsize 0.04}$ & $0.10 \pm \text{\scriptsize 0.01}$ & $0.39 \pm \text{\scriptsize 0.02}$ \\
    \ourmethod & $0.42 \pm \text{\scriptsize 0.01}$ & $2.71 \pm \text{\scriptsize 0.08}$ & $3.20 \pm \text{\scriptsize 0.09}$ & $1.31 \pm \text{\scriptsize 0.03}$ & $1.26 \pm \text{\scriptsize 0.00}$ & $0.08 \pm \text{\scriptsize 0.00}$ &  $0.36 \pm \text{\scriptsize 0.02}$ \\
    \bottomrule
  \end{tabular}}
\end{table}

\begin{table}[t]
  \centering
  \caption{Average running time in seconds over 10 runs.}
  \vspace{1em}
  \label{tab:avg_runtime}
  \resizebox{\textwidth}{!}{%
  \begin{tabular}{l|ccccc|cc}
    \toprule
    & \multicolumn{5}{c|}{\textbf{Unweighted graphs}} & \multicolumn{2}{c}{\textbf{Non-graph metrics}} \\
    \textbf{Data set} & \textsc{C-ELEGAN} & \textsc{CS PhD} & \textsc{CORA} & \textsc{AIRPORT} & \textsc{WIKI} & \textsc{ZEISEL} & \textsc{IBD} \\
    \midrule
    NJ & 0.34 & 3.42 & 103.81 & 244.93 & 81.89 & 207.91 & 0.25 \\
    TR & 0.78 & 2.64 & 25.64 & 36.37 & 24.43 & 94.02 & 1.23 \\
    HCC & 0.67 & 3.54 & 19.54 & 31.15 & 18.28 & 36.21 & 0.52 \\
    LayeringTree & 0.86 & 8.81 & 156.54 & 344.43 & 140.11 & -- & -- \\
    Gromov & 0.03 & 0.23 & 1.47 & 2.20 & 24.43 & 2.20 & 0.02 \\
    \ourmethod & 51.00 & 134.28 & 3612.35 & 2268.72 & 819.90 & 5664.40 & 289.07 \\
    \bottomrule
  \end{tabular}}
\end{table}

\begin{remark}
\ph{Note that we do not report other metrics such as Mean Average Precision (MAP) score \cite{Nickel17} (see \cite[Equation 33]{vanSpenglerMettes2025LowDistortionGPUCompatibleTreeEmbeddings} for the formula), as it does not measure metric fidelity (e.g., uniform rescaling preserves MAP), but only local neighborhood preservation. In our setting, an optimal tree can induce a markedly different neighborhood structure from the original graph (e.g., embedding a grid into a tree). Moreover, MAP is only meaningful when all methods embed into the same target space, which is not the case here. Finally, MAP can only be computed when the metric arises from an underlying graph structure, which is not true for all datasets we consider.}
\end{remark}
\subsection{Sensitivity analysis}\label{appendix:stability}
In this section, we provide a sensitivity analysis on the parameters of \ourmethod, namely the number of batches $K$, the batch size $m$, the temperature parameter $\lambda$
(cf. eq~\eqref{eq:delta_smooth}) and the distance regularization coefficient $\mu$ on the norm (cf. eq~\eqref{eq:main_obj}).

\paragraph{Evaluation of the approximation \(\delta_{\mathbf{D}, K, m}^{(\lambda)}\) of the Gromov hyperbolicity.}
We first provide a numerical study of the approximation \(\delta_{\mathbf{D}, K, m}^{(\lambda)}\) (see eq.\eqref{eq:batched_smoothed_hyperbolicity}) of the actual $\delta$-hyperbolicity.
%To evaluate the accuracy and stability of our proposed estimator \(\delta_{\mathbf{D}, K, m}^{(\lambda)}\) for computing the Gromov hyperbolicity of a graph, we conducted a controlled experiment on the \textsc{CS-PhD} dataset, which has a known ground truth value of \(\delta = 6.5\). 
We consider the \textsc{CS-PhD} dataset, which has an exact value of \(\delta = 6.5\) that can be computed exactly as $n$ is not too large. We evaluate the quality of our approximation with respect to this ground truth. 

Figure~\ref{fig:deltaVsK} illustrates the influence of two key hyperparameters: the number of batches \(K\) and the batch size \(m\), fixing \(\lambda = 1000\).  %, as well as the regularization parameter \(\lambda\). In the left panel, 
Figure~\ref{fig:deltaVslambda} illustrates the influence of the regularization parameter \(\lambda\) and the batch size \(m\), fixing $K=50$. Both figures also report the true $\delta$.
%In both cases, we fix \(\lambda = 1000\) and show how the estimate evolves with increasing \(K\), for different fixed batch sizes. In the right panel, we fix \(K = 50\) and vary \(\lambda\) across several orders of magnitude. 
Each point represents the average over 5 independent runs. 

We observe that batch size \(m\) plays a critical role in the quality of the estimation. When \(m\) is very small (e.g., \(m = 4\)), the estimated values are significantly biased and far from the true \(\delta\). As \(m\) increases, the estimates become more stable and converge toward the ground truth, even with a moderate number of batches. This pattern is consistent across both panels of the figure. In contrast, the effect of \(\lambda\) is less pronounced once it reaches a sufficient scale, beyond which the estimates flatten out near the true \(\delta\) value. These results highlight that batch size is the dominant factor influencing the accuracy of our smoothed estimator \(\delta_{\mathbf{D}, K, m}^{(\lambda)}\). %\(\delta_{\text{LSE}, \lambda, K, m}\). 
Indeed, computing the exact \(\delta\) requires evaluating all possible quadruples of nodes, which amounts to \(\binom{n}{4} = O(n^4)\) operations. In our batched approximation, we instead sample \(K\) batches of size \(m\), resulting in a total of \(K \cdot \binom{m}{4}\) sampled quadruples. Since \(\binom{m}{4} = O(m^4)\), the number of evaluated quadruples scales as \(O(K \cdot m^4)\), highlighting that \(m\) has a higher-order impact than \(K\) on the estimator’s fidelity. This explains why increasing the batch size leads to more accurate and stable \(\delta\) estimates, even with a relatively small number of batches.

\begin{figure}[h]
    \centering
    \begin{subfigure}[t]{0.45\textwidth}
        \centering
        % This file was created with tikzplotlib v0.10.1.
\begin{tikzpicture}[scale=0.7]

\definecolor{cornflowerblue100143255}{RGB}{100,143,255}
\definecolor{cornflowerblue86180233}{RGB}{86,180,233}
\definecolor{darkgray176}{RGB}{176,176,176}
\definecolor{gray}{RGB}{128,128,128}
\definecolor{lightgray204}{RGB}{204,204,204}
\definecolor{mediumvioletred22038127}{RGB}{220,38,127}
\definecolor{orange2551760}{RGB}{255,176,0}
\definecolor{orangered254970}{RGB}{254,97,0}

\begin{axis}[
legend cell align={left},
legend style={
  fill opacity=0.8,
  draw opacity=1,
  text opacity=1,
  at={(0.97,0.03)},
  anchor=south east,
  draw=lightgray204
},
log basis x={10},
tick align=outside,
tick pos=left,
x grid style={darkgray176},
xlabel={$K$ (log scale)},
xmajorgrids,
xmin=0.794328234724281, xmax=125.892541179417,
xminorgrids,
xmode=log,
xtick style={color=black},
xtick={0.01,0.1,1,10,100,1000,10000},
xticklabels={
  \(\displaystyle {10^{-2}}\),
  \(\displaystyle {10^{-1}}\),
  \(\displaystyle {10^{0}}\),
  \(\displaystyle {10^{1}}\),
  \(\displaystyle {10^{2}}\),
  \(\displaystyle {10^{3}}\),
  \(\displaystyle {10^{4}}\)
},
y grid style={darkgray176},
ymajorgrids,
ylabel={$\delta_{\mathbf{D}, K, m}^{(\lambda)}$},
ymin=0.357902453131974, ymax=6.63009716566652,
yminorgrids,
ytick style={color=black}
]
\addplot [semithick, cornflowerblue100143255, mark=*, mark size=1.5, mark options={solid}]
table {%
1 0.831440661847591
2 0.643002212792635
3 1.02160520553589
4 1.62218066453934
5 1.02218066453934
10 2.0216052532196
20 2.62437787055969
30 2.42633948326111
40 3.61940793991089
50 3.1210298538208
60 3.52079429626465
70 3.61940808296204
80 3.22079439163208
90 3.72218065261841
100 4.61940803527832
};
\addplot [semithick, cornflowerblue86180233, mark=*, mark size=1.5, mark options={solid}]
table {
1 2.92828137874603
2 4.2240131855011
3 3.72965588569641
4 3.42796125411987
5 3.63395280838013
10 3.9309693813324
20 4.73242835998535
30 4.62576398849487
40 4.7246132850647
50 5.12380237579346
60 4.72160511016846
70 5.02437744140625
80 5.02549695968628
90 4.92868690490723
100 5.12759666442871
};
\addplot [semithick, mediumvioletred22038127, mark=*, mark size=1.5, mark options={solid}]
table {%
1 3.6448634147644
2 4.53104195594788
3 4.63321533203125
4 4.63096952438354
5 4.8390567779541
10 5.53739814758301
20 5.54154920578003
30 5.44028720855713
40 5.72915258407593
50 5.83266382217407
60 5.82534847259521
70 5.83323936462402
80 5.72932653427124
90 5.93161754608154
100 5.93005132675171
};
\addplot [semithick, orangered254970, mark=*, mark size=1.5, mark options={solid}]
table {%
1 5.03573436737061
2 5.43725328445435
3 5.53476371765137
4 5.44085245132446
5 5.73642225265503
10 5.936279296875
20 5.93752355575562
30 6.04194707870483
40 6.04309034347534
50 6.14312009811401
60 6.04550848007202
70 6.14246139526367
80 6.05392570495605
90 6.23757572174072
100 6.14842462539673
};
\addplot [semithick, orange2551760, mark=*, mark size=1.5, mark options={solid}]
table {%
1 5.05817756652832
2 5.65388498306274
3 5.84628486633301
4 5.74906349182129
5 5.94698257446289
10 5.95722427368164
20 6.14770431518555
30 6.16046628952026
40 6.06425104141235
50 6.24602098464966
60 6.1608681678772
70 6.2550687789917
80 6.34499740600586
90 6.34211778640747
100 6.34025707244873
};
=\addplot [thick, line width=1.5pt, gray]
table {%
0.794328234724281 6.5
125.892541179417 6.5
};
\end{axis}

\end{tikzpicture} 
        \caption{Evolution of the mean $\delta_{\mathbf{D}, K, m}^{(\lambda)}$ as a function of the number of batches $K$, with $\lambda=1000$.}
        \label{fig:deltaVsK}
    \end{subfigure}
    \hfill
    \begin{subfigure}[t]{0.45\textwidth}
        \centering
        % This file was created with tikzplotlib v0.10.1.
\begin{tikzpicture}[scale=0.7]

\definecolor{cornflowerblue100143255}{RGB}{100,143,255}
\definecolor{cornflowerblue86180233}{RGB}{86,180,233}
\definecolor{darkgray176}{RGB}{176,176,176}
\definecolor{gray}{RGB}{128,128,128}
\definecolor{lightgray204}{RGB}{204,204,204}
\definecolor{mediumvioletred22038127}{RGB}{220,38,127}
\definecolor{orange2551760}{RGB}{255,176,0}
\definecolor{orangered254970}{RGB}{254,97,0}

\begin{axis}[
legend cell align={left},
legend style={fill opacity=0.8, draw opacity=1, text opacity=1, draw=lightgray204},
log basis x={10},
log basis y={10},
tick align=outside,
tick pos=left,
x grid style={darkgray176},
xlabel={$\lambda$ (log scale)},
xmajorgrids,
xmin=0.00501187233627272, xmax=19952.6231496888,
xminorgrids,
xmode=log,
xtick style={color=black},
xtick={0.0001,0.001,0.01,0.1,1,10,100,1000,10000,100000,1000000},
xticklabels={
  \(\displaystyle {10^{-4}}\),
  \(\displaystyle {10^{-3}}\),
  \(\displaystyle {10^{-2}}\),
  \(\displaystyle {10^{-1}}\),
  \(\displaystyle {10^{0}}\),
  \(\displaystyle {10^{1}}\),
  \(\displaystyle {10^{2}}\),
  \(\displaystyle {10^{3}}\),
  \(\displaystyle {10^{4}}\),
  \(\displaystyle {10^{5}}\),
  \(\displaystyle {10^{6}}\)
},
y grid style={darkgray176},
ymajorgrids,
ylabel={$\delta_{\mathbf{D}, K, m}^{(\lambda)}$ (log scale)},
ymin=1.96742344153857, ymax=2592.65822511266,
yminorgrids,
ymode=log,
ytick style={color=black},
ytick={0.1,1,10,100,1000,10000,100000},
yticklabels={
  \(\displaystyle {10^{-1}}\),
  \(\displaystyle {10^{0}}\),
  \(\displaystyle {10^{1}}\),
  \(\displaystyle {10^{2}}\),
  \(\displaystyle {10^{3}}\),
  \(\displaystyle {10^{4}}\),
  \(\displaystyle {10^{5}}\)
}
]
\addplot [semithick, cornflowerblue100143255, mark=*, mark size=1.5, mark options={solid}]
table {%
0.01 876.480505371094
0.1 88.0832855224609
1 9.03509769439697
10 3.43600978851318
100 2.72715039253235
1000 3.30194149017334
10000 3.30022974014282
};
\addlegendentry{m = 4}
\addplot [semithick, cornflowerblue86180233, mark=*, mark size=1.5, mark options={solid}]
table {%
0.01 1153.72106933594
0.1 115.728170776367
1 11.967474937439
10 5.10089921951294
100 4.82640104293823
1000 5.00301713943481
10000 5.00023431777954
};
\addlegendentry{m = 8}
\addplot [semithick, mediumvioletred22038127, mark=*, mark size=1.5, mark options={solid}]
table {%
0.01 1430.96496582031
0.1 143.376513671875
1 14.8675779342651
10 6.03935861587524
100 5.732972240448
1000 5.80364027023315
10000 5.80030422210693
};
\addlegendentry{m = 16}
\addplot [semithick, orangered254970, mark=*, mark size=1.5, mark options={solid}]
table {%
0.01 1708.21491699219
0.1 171.053466796875
1 17.697261428833
10 6.38881177902222
100 6.04425716400146
1000 6.00422925949097
10000 6.00038766860962
};
\addlegendentry{m = 32}
\addplot [semithick, orange2551760, mark=*, mark size=1.5, mark options={solid}]
table {%
0.01 1870.39797363281
0.1 187.252835083008
1 19.3622158050537
10 6.70264911651611
100 6.24800186157227
1000 6.00647039413452
10000 6.00062284469604
};
\addlegendentry{m = 48}
\addplot [thick, line width=1.5pt, gray]
table {%
0.00501187233627272 6.5
19952.6231496888 6.5
};
\addlegendentry{True $\delta$ = 6.5}
\addplot [thick, gray]
table {%
0.00501187233627272 6.5
19952.6231496888 6.5
};
\end{axis}

\end{tikzpicture}
        \caption{Estimation of $\delta_{\mathbf{D}, K, m}^{(\lambda)}$ as a function of $\lambda$, with $K=50$.}% True $\delta = 6.5$ shown in gray.}
        \label{fig:deltaVslambda}
    \end{subfigure}
    \caption{Approximation \(\delta_{\mathbf{D}, K, m}^{(\lambda)}\) computed on the \textsc{CS-PhD} dataset %using the smoothed and batched version $\delta_{\emph{LSE}_\lambda, K, m}$ 
    for different batch sizes $m$.}
    \label{fig:delta_estimation}
\end{figure}
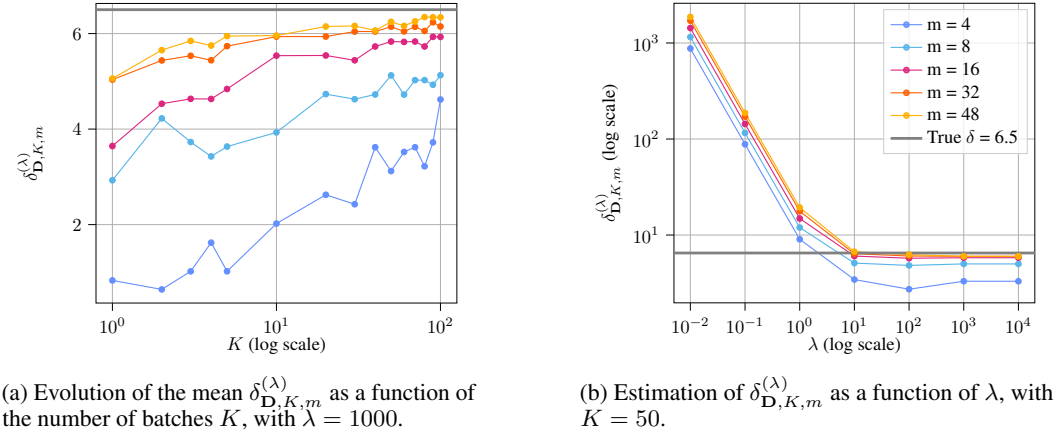

\paragraph{Sensitivity to parameters on the distortion.}
We now consider the \textsc{C-ELEGAN} dataset and study the impact of the parameter tuning for $\mu$, $\lambda$ and $m$. We vary one hyperparameter at a time, while fixing the others to their optimal values obtained from the grid search. We evaluate the mean distortion over 100 randomly sampled root nodes. This procedure was repeated across 5 independent runs and we plot the mean and standard deviation of the $\ell_{\infty}$ distortion across these repetitions. 

We first study the influence of the regularization coefficient $\mu$ on the $\ell_{\infty}$ distortion in Figure~\ref{fig:sensibility_mu}. One can notice that, for the lowest $\mu$ values, the distortion is high, indicating that the solution is insufficiently driven by the data; for the highest values, the distortion is also high, which suggest that the tree-likeness of the solution is lost. For a good tradeoff between proximity to the original metric and tree-likeness of the solution, an optimum is reached. This further showcases the interest of introducing a term related to the Gromov hyperbolicity into the tree metric approximation problem.

We then study the impact of the $\lambda$ parameter in Figure~\ref{fig:sensibility_lambda}, for different batch sizes. Results suggest that a good trade-off between large values (that provide an estimate close to the actual value but at the price of sharp optimization landscape) and small ones (that provides a smooth estimate easy to optimize) should be found to reach the best performances.

Finally, in Figure~\ref{fig:sensibility_batchsize}, we study the impact of the batch size $m$, for various $\lambda$ parameters. One can note that, for moderate values of $\lambda$, an optimal performance is reached for a moderate size of batch size, which suggests that $m$ could be set with a relatively low value, enhancing the ability of \ourmethod~to tackle moderated-size datasets.

%To assess the sensitivity of our method to hyperparameter choices in the optimization process, we conducted a univariate analysis on the \textsc{C-ELEGAN} dataset. As shown in Figure~\ref{fig:hyperparam_sensitivity}, we observe that the performance is relatively stable around the optimal values of $\mu$ and $\lambda$, forming well-behaved U-shaped curves. For the batch size, larger values consistently yield lower distortion, indicating a clear benefit in using larger batches during optimization.

\begin{figure}[t]
%    \centering
    \begin{subfigure}[t]{0.32\textwidth}
%        \centering
        % This file was created with tikzplotlib v0.10.1.
\begin{tikzpicture}[scale=0.55]

\definecolor{darkgray176}{RGB}{176,176,176}
\definecolor{orangered254970}{RGB}{254,97,0}

\begin{axis}[
log basis x={10},
tick align=outside,
tick pos=left,
x grid style={darkgray176},
xlabel={$\mu$ (log scale)},
xmajorgrids,
xmin=5.62341325190349e-05, xmax=17.7827941003892,
xmode=log,
xtick style={color=black},
xtick={1e-06,1e-05,0.0001,0.001,0.01,0.1,1,10,100,1000},
xticklabels={
  \(\displaystyle {10^{-6}}\),
  \(\displaystyle {10^{-5}}\),
  \(\displaystyle {10^{-4}}\),
  \(\displaystyle {10^{-3}}\),
  \(\displaystyle {10^{-2}}\),
  \(\displaystyle {10^{-1}}\),
  \(\displaystyle {10^{0}}\),
  \(\displaystyle {10^{1}}\),
  \(\displaystyle {10^{2}}\),
  \(\displaystyle {10^{3}}\)
},
y grid style={darkgray176},
ymajorgrids,
ylabel={$\ell_\infty$ error},
ymin=1.65012961533622, ymax=5.41903930748143,
ytick style={color=black}
]
\path [draw=orangered254970, fill=orangered254970, opacity=0.3]
(axis cs:0.0001,5.24772523056574)
--(axis cs:0.0001,5.00192412293844)
--(axis cs:0.001,4.68233977969431)
--(axis cs:0.01,2.35972202421755)
--(axis cs:0.1,1.82144369225191)
--(axis cs:1,2.63597624161156)
--(axis cs:10,3.16935555909456)
--(axis cs:10,3.18727375818907)
--(axis cs:10,3.18727375818907)
--(axis cs:1,2.71973871848671)
--(axis cs:0.1,2.00387406726194)
--(axis cs:0.01,2.68749254773527)
--(axis cs:0.001,4.81680361191488)
--(axis cs:0.0001,5.24772523056574)
--cycle;

\addplot [semithick, orangered254970, mark=*, mark size=3, mark options={solid}]
table {%
0.0001 5.12482467675209
0.001 4.7495716958046
0.01 2.52360728597641
0.1 1.91265887975693
1 2.67785748004913
10 3.17831465864182
};
\end{axis}

\end{tikzpicture}
        \caption{Effect of the distance regularization coefficient $\mu$ on $\ell_{\infty}$ distortion.}
        \label{fig:sensibility_mu}
    \end{subfigure}
    %\hfill
    \begin{subfigure}[t]{0.32\textwidth}
%        \centering
        % This file was created with tikzplotlib v0.10.1.
\begin{tikzpicture}[scale=0.55]

\definecolor{chocolate213940}{RGB}{213,94,0}
\definecolor{darkcyan1115178}{RGB}{1,115,178}
\definecolor{darkcyan2158115}{RGB}{2,158,115}
\definecolor{darkgray176}{RGB}{176,176,176}
\definecolor{darkorange2221435}{RGB}{222,143,5}
\definecolor{lightgray204}{RGB}{204,204,204}
\definecolor{orchid204120188}{RGB}{204,120,188}

\begin{axis}[
legend cell align={left},
legend style={
  fill opacity=0.8,
  draw opacity=1,
  text opacity=1,
  at={(0.97,0.03)},
  anchor=south east,
  draw=lightgray204
},
log basis x={10},
tick align=outside,
tick pos=left,
x grid style={darkgray176},
xlabel={$\lambda$ (log scale)},
xmajorgrids,
xmin=0.794328234724281, xmax=125.892541179417,
xmode=log,
xtick style={color=black},
xtick={0.01,0.1,1,10,100,1000,10000},
xticklabels={
  \(\displaystyle {10^{-2}}\),
  \(\displaystyle {10^{-1}}\),
  \(\displaystyle {10^{0}}\),
  \(\displaystyle {10^{1}}\),
  \(\displaystyle {10^{2}}\),
  \(\displaystyle {10^{3}}\),
  \(\displaystyle {10^{4}}\)
},
y grid style={darkgray176},
ymajorgrids,
ylabel={$\ell_\infty$ error},
ymin=1.83388971784115, ymax=3.57629016945362,
ytick style={color=black}
]
\addplot [semithick, chocolate213940, mark=*, mark size=3, mark options={solid}]
table {%
1 3.28336455106735
5 3.40474348831177
10 3.4726112947464
50 3.49709014892578
100 3.40839121389389
};
\addlegendentry{$m$=8}
\addplot [semithick, darkcyan2158115, mark=*, mark size=3, mark options={solid}]
table {%
1 2.64700133609772
5 2.60821132421494
10 3.06023617458343
50 3.45095079612732
100 3.44765301179886
};
\addlegendentry{$m$=16}
\addplot [semithick, orchid204120188, mark=*, mark size=3, mark options={solid}]
table {%
1 2.31794237375259
5 2.23685846042633
10 2.11929467105865
50 3.34214108085632
100 3.41701106595993
};
\addlegendentry{$m$=24}
\addplot [semithick, darkcyan1115178, mark=*, mark size=3, mark options={solid}]
table {%
1 2.21289509105682
5 1.92822902107239
10 1.91672737312317
50 3.22359767913818
100 3.38215467119217
};
\addlegendentry{$m$=32}
\addplot [semithick, darkorange2221435, mark=*, mark size=3, mark options={solid}]
table {%
1 2.14039291191101
5 2.02012956857681
10 1.91308973836899
50 3.0655592250824
100 3.42414337825775
};
\addlegendentry{$m$=40}

\end{axis}

\end{tikzpicture}
        \caption{Impact of the log-sum-exp scale $\lambda$ on $\ell_\infty$ error across various batch sizes $m$.}
        \label{fig:sensibility_lambda}
    \end{subfigure}
    %\hfill
    \begin{subfigure}[t]{0.32\textwidth}
%        \centering
        % This file was created with tikzplotlib v0.10.1.
\begin{tikzpicture}[scale=0.55]

\definecolor{chocolate213940}{RGB}{213,94,0}
\definecolor{darkcyan1115178}{RGB}{1,115,178}
\definecolor{darkcyan2158115}{RGB}{2,158,115}
\definecolor{darkgray176}{RGB}{176,176,176}
\definecolor{darkorange2221435}{RGB}{222,143,5}
\definecolor{lightgray204}{RGB}{204,204,204}
\definecolor{orchid204120188}{RGB}{204,120,188}

\begin{axis}[
legend cell align={left},
legend style={
  fill opacity=0.8,
  draw opacity=1,
  text opacity=1,
  at={(0.03,0.03)},
  anchor=south west,
  draw=lightgray204
},
tick align=outside,
tick pos=left,
x grid style={darkgray176},
xlabel={$m$},
xmajorgrids,
xmin=6.4, xmax=41.6,
xtick style={color=black},
y grid style={darkgray176},
ymajorgrids,
ylabel={$\ell_\infty$ error},
ymin=1.83388971784115, ymax=3.57629016945362,
ytick style={color=black}
]
\addplot [semithick, darkcyan1115178, mark=*, mark size=3, mark options={solid}]
table {%
8 3.28336455106735
16 2.64700133609772
24 2.31794237375259
32 2.21289509105682
40 2.14039291191101
};
\addlegendentry{$\lambda$=1.0}
\addplot [semithick, darkorange2221435, mark=*, mark size=3, mark options={solid}]
table {%
8 3.40474348831177
16 2.60821132421494
24 2.23685846042633
32 1.92822902107239
40 2.02012956857681
};
\addlegendentry{$\lambda$=5.0}
\addplot [semithick, darkcyan2158115, mark=*, mark size=3, mark options={solid}]
table {%
8 3.4726112947464
16 3.06023617458343
24 2.11929467105865
32 1.91672737312317
40 1.91308973836899
};
\addlegendentry{$\lambda$=10.0}
\addplot [semithick, chocolate213940, mark=*, mark size=3, mark options={solid}]
table {%
8 3.49709014892578
16 3.45095079612732
24 3.34214108085632
32 3.22359767913818
40 3.0655592250824
};
\addlegendentry{$\lambda$=50.0}
\addplot [semithick, orchid204120188, mark=*, mark size=3, mark options={solid}]
table {%
8 3.40839121389389
16 3.44765301179886
24 3.41701106595993
32 3.38215467119217
40 3.42414337825775
};
\addlegendentry{$\lambda$=100.0}
\end{axis}

\end{tikzpicture}
        \caption{Impact of the batch size $m$ on $\ell_\infty$ error for multiple $\lambda$ values}
        \label{fig:sensibility_batchsize}
    \end{subfigure}
    
    \caption{Sensitivity analysis of optimization hyperparameters on the \textsc{C-ELEGAN} dataset. In each plot, non-varied hyperparameters are set to their optimal values from a prior grid search. Results are averaged over 5 runs, and distortion values averaged over 100 root samples per run.}
    \label{fig:hyperparam_sensitivity}
\end{figure}
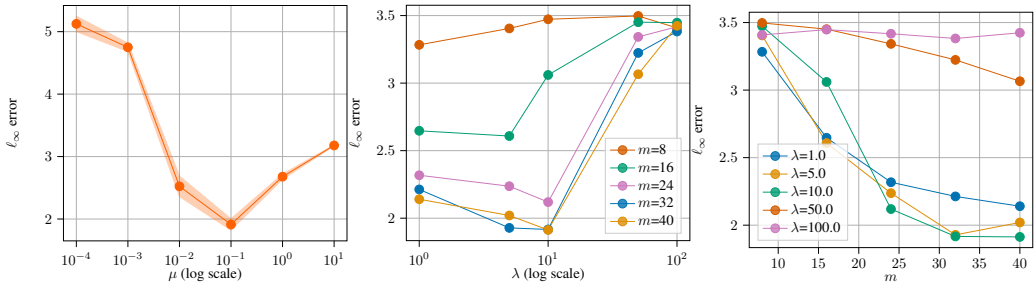

\section{Implementation details}\label{appendix:implem}
Our optimization procedure was implemented in Python using \texttt{PyTorch}~\cite{pytorch} for automatic differentiation and gradient-based optimization. All experiments were run using PyTorch's GPU backend with CUDA acceleration when available. We leveraged the \texttt{Adam} optimizer and parallelized computations over batches.

The training objective minimizes a regularized combination of the $\delta$-hyperbolicity loss and the reconstruction error of the distance matrix. The optimization is performed over the upper-triangular weights of a complete undirected graph, which are projected onto shortest-path distances using the Floyd–Warshall algorithm at each iteration. This projection is implemented using a GPU-compatible variant of the algorithm. 

To efficiently approximate the Gromov hyperbolicity objective, we randomly sample $K$ batches of quadruples and accumulate gradients across $32$ sub-batches in each outer iteration. This simulates a larger batch size while remaining within GPU memory constraints. Gradient accumulation is performed manually by computing the backward pass over each sub-batch before the optimizer step.

Training includes early stopping based on the total loss, with a patience of 50 epochs. We store and return the best-performing weights (in terms of the objective loss (\ref{eq:main_obj})) encountered during the optimization.

To ensure reproducibility, we provide all code and experiments at \url{https://github.com/pierrehouedry/DifferentiableHyperbolicity}.

\subsection{Hardware setup}
All experiments were conducted using a single NVIDIA TITAN RTX GPU with 24 GB of VRAM. The implementation was written in Python 3.11 and executed on a machine running Ubuntu 22.04. No distributed or multi-GPU training was used.
\subsection{Datasets}
For \textsc{C-ELEGAN} and \textsc{CS PhD}, we relied on the pre-computed distance matrices provided at \url{https://github.com/rsonthel/TreeRep}. The \textsc{C-ELEGAN} dataset captures the neural connectivity of the \textit{Caenorhabditis elegans} roundworm, a widely studied model organism in neuroscience. The \textsc{CS PhD} dataset represents a co-authorship network among computer science PhD holders, reflecting patterns of academic collaboration. For \textsc{Cora} and \textsc{Airport}, we used the graph data from \url{https://github.com/HazyResearch/hgcn} and computed the shortest-path distance matrices on the largest connected components. \textsc{Cora} is a citation network of machine learning publications, while \textsc{Airport} models air traffic routes between airports. The \textsc{Wiki} dataset, representing a hyperlink graph between Wikipedia pages, was obtained from the \texttt{torch\_geometric} library \cite{torchgeometric}. \textsc{Zeisel} is a single-cell RNA sequencing dataset describing the transcriptomic profiles of mouse cortex and hippocampus cells, commonly used to study cell type organization in neurobiology. It was obtained from \url{https://github.com/solevillar/scGeneFit-python}. The \textsc{IBD} dataset (Inflammatory Bowel Disease) contains 396 metagenomic samples across 606 expressed microbial species. The dataset is publicly available via BioProject accession number PRJEB1220.

%\newpage
%\section*{NeurIPS Paper Checklist}
%\input{sections/checklist.tex}

\end{document}